\def\eqref#1{equation~\ref{#1}}
\def\1{\bm{1}}
\def\va{{\bm{a}}}
\def\vb{{\bm{b}}}
\def\ve{{\bm{e}}}
\def\vu{{\bm{u}}}
\def\vw{{\bm{w}}}
\def\vx{{\bm{x}}}
\def\vy{{\bm{y}}}
\def\vz{{\bm{z}}}
\def\eva{{a}}
\def\evx{{x}}
\def\mA{{\bm{A}}}
\def\mH{{\bm{H}}}
\def\mI{{\bm{I}}}
\def\mW{{\bm{W}}}
\DeclareMathAlphabet{\mathsfit}{\encodingdefault}{\sfdefault}{m}{sl}
\SetMathAlphabet{\mathsfit}{bold}{\encodingdefault}{\sfdefault}{bx}{n}
\def\gG{{\mathcal{G}}}
\def\emA{{A}}
\def\emH{{H}}
\def\emW{{W}}
\newcommand{\R}{\mathbb{R}}
\newcommand{\normltwo}{L^2}
\DeclareMathOperator{\intra}{intra}
\DeclareMathOperator{\inter}{inter}
\DeclareMathOperator{\Ncut}{Ncut}
\DeclareMathOperator{\cut}{cut}
\DeclareMathOperator{\vol}{vol}
\DeclareMathOperator{\SPAN}{span}
\newtheorem{theorem}{Theorem}[section]
\newtheorem{corollary}[theorem]{Corollary}
\newtheorem{lemma}[theorem]{Lemma}
\newtheorem{definition}{Definition}
\newtheorem*{remark}{Remark}
\newtheorem*{theoremRandomInit}{Theorem \ref{th:kNN_randomly_init}}
\newtheorem*{theoremMainRes}{Theorem \ref{th:main_res_kNN}}
\newtheorem*{theoremMainResReLU}{Theorem \ref{th:Main_res_ReLU}}
\title{Neural Feature Geometry Evolves as Discrete Ricci Flow}
\author{
{\sf Moritz Hehl}\thanks{Institute of Mathematics, Leipzig University; e-mail: {\href{mailto:moritz.hehl@uni-leipzig.de}{\texttt{moritz.hehl@uni-leipzig.de}}}}
\and
{\sf Max-K. von Renesse}\thanks{Institute of Mathematics, Leipzig University; e-mail: {\href{mailto:renesse@uni-leipzig.de}{\texttt{renesse@uni-leipzig.de}}}}
\and
{\sf Melanie Weber}\thanks{John A. Paulson School of Engineering and Applied Sciences, Harvard; e-mail: {\href{mailto:mweber@g.harvard.edu}{\texttt{mweber@g.harvard.edu}}}
}}
\date{\today}
\begin{document}

\maketitle
\begin{abstract}
Deep neural networks learn feature representations via complex geometric transformations of the input data manifold. Despite the models' empirical success across domains, our understanding of neural feature representations is still incomplete. In this work we investigate neural feature geometry through the lens of discrete geometry.   
Since the input data manifold is typically unobserved, we approximate it using geometric graphs that encode local similarity structure. We provide theoretical results on the evolution of these graphs during training, showing that nonlinear activations play a crucial role in shaping feature geometry in feedforward neural networks. Moreover, we discover that the geometric transformations resemble a discrete Ricci flow on these graphs, suggesting that neural feature geometry evolves analogous to Ricci flow. This connection is supported by experiments on over 20,000 feedforward neural networks trained on binary classification tasks across both synthetic and real-world datasets. We observe that the emergence of class separability corresponds to the emergence of community structure in the associated graph representations, which is known to relate to discrete Ricci flow dynamics. Building on these insights, we introduce a novel framework for locally evaluating geometric transformations through comparison with discrete Ricci flow dynamics. Our results suggest practical design principles, including a geometry-informed early-stopping heuristic and a criterion for selecting network depth.\footnote{Code available at \url{https://github.com/Weber-GeoML/Ricci-Flow_Feature-Geometry}}
\end{abstract}

\section{Introduction}

Deep neural networks have achieved remarkable success across diverse domains. Yet, a comprehensive theoretical understanding of why these models generalize and perform so well in practice remains elusive. To address this challenge, recent works have investigated how the geometry \citep{baptista2024deep,ansuini2019intrinsic, cohen2020separability} and topology \citep{magai2022topology, naitzat2020topology} of neural feature representations evolve as data propagates through network layers. Beyond advancing interpretability and explainability, such analyses also provide practical benefits, offering principled guidance for model and hyperparameter selection. 

In this work we adopt a geometric perspective to analyze how deep neural networks evolve feature representations. Since the underlying manifold is not directly observable, we approximate its geometry by constructing geometric graphs from local similarity structure in the data. To the best of our knowledge, no prior work has provided theoretical results on how the geometry of such graphs evolves as data manifolds propagate through network layers. We provide initial theoretical insights by proving that, in the wide regime, deep linear networks preserve feature geometry, whereas non-linear activations, such as ReLU, enable genuine geometric transformations.

Among the geometric concepts available for studying these transformations, Ricci curvature and its associated Ricci flow stand out as fundamental tools from Riemannian geometry. Originally introduced by \citet{hamilton1982three}, the Ricci flow intuitively describes the smoothing of a manifold’s geometry through the evolution of its metric tensor. Famously, \citet{perelman2002entropy, perelman2003ricci, perelman2003finite} employed it to prove the Poincaré conjecture and Thurston's geometrization conjecture. By carefully handling singularities, Perelman's work revealed topological insights through the progressive smoothing of the manifold's geometry. This mathematical framework bears a compelling analogy to deep neural networks, which progressively simplify and smooth the geometry of data manifolds, thereby uncovering richer information about the underlying classes in classification tasks.

Building on this intuition, we propose a novel framework for locally evaluating geometric transformations through comparison with discrete Ricci flow dynamics. We conduct experiments on more than 20,000 feedforward neural networks trained on binary classification tasks across both synthetic and real-world datasets. We find that across datasets and architectures, neural networks consistently impose curvature-driven transformations closely aligned with the Ricci flow dynamics. Moreover, the emergence of class separability is reflected in the development of community structure in the associated graph representations, an evolution known to be closely tied to discrete Ricci flow dynamics \citep{tian2025curvature, ni2019community, lai2022normalized}.

 Leveraging our geometric framework, we introduce a new early-stopping heuristic based on the emergence of geometrically informed behavior during training. Additionally, by analyzing curvature-driven transformations layer-wise, we propose a geometric criterion for depth selection, identifying a critical point beyond which additional layers cease to yield meaningful curvature-driven changes. This framework opens new avenues for understanding the geometric principles underlying deep learning and offers practical tools that can improve training efficiency and parameter selection across diverse applications.

\paragraph{Summary of contributions}
The main contributions of this work are as follows:
\begin{enumerate}
    \item We prove that, in the wide regime, deep linear networks preserve feature geometry, whereas non-linear activations such as ReLU enable meaningful geometric transformations (Sec.~\ref{Sec:Theoretical_results}).
    \item 
    Our experiments show that the progressive emergence of class separability is reflected in the emergence of community structure within the corresponding graph representations (Sec.~\ref{Sec:Community_Structure}).
    \item We propose a novel early-stopping heuristic that leverages the emergence of curvature-driven geometric behavior during training (Sec.~\ref{Sec:Early_Stopping}).
    \item By analyzing layer-wise curvature-driven transformations, we introduce a geometric criterion for network depth selection (Sec.~\ref{Sec:Network_Depth_Selection}).
\end{enumerate}

\paragraph{Related work} A variety of approaches have been proposed to better understand the feature transformations of deep neural networks. The connection between deep learning and Ricci flow was first explored by \citet{baptista2024deep}, who analyzed geometric transformations via Ricci flow at a global scale. Our approach differs by capturing the inherently local behavior of Hamilton’s Ricci flow and by leveraging more refined discretizations of Ricci curvature. Other efforts include topology-based analyses \citep{naitzat2020topology}, and geometric measures of simplification \citep{brahma2015deep, ansuini2019intrinsic, cohen2020separability}. We defer a more detailed discussion of related literature to Appendix~\ref{appendix:Related_Work}.

\section{Background and Notation}

Following standard notation, we use $a, \va,$ and $\mA$ to denote scalars, vectors, and matrices.  For $\mathbf{x} \in \mathbb{R}^n$, $\lVert \mathbf{x} \rVert$ denotes the $\normltwo$ norm. $\mathcal{N}(\mu, \sigma^2)$ represents a normal distribution with mean $\mu$ and variance $\sigma^2$. 
We denote a graph as $\gG = (V, E)$, where $V$ is the vertex set and $E \subseteq V \times V$ the edge set.  We write $u \sim v$ if $(u,v) \in E$ and $d(u,v)$ denotes the shortest path distance between $u$ and $v$. The 1-hop neighborhood of $v$ is denoted by $N(v) = \{u \in V : u \sim v\}$ and the degree by $\deg(v) = |N(v)|$. The maximum degree is given by $\deg_{\max} = \max_{v\in V} \deg(v)$. 

\subsection{Setting}

\begin{figure}[t]
    \centering
    \includegraphics[width=0.9\textwidth]{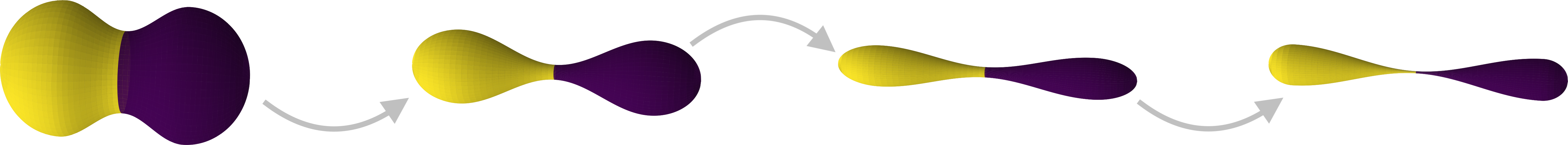}\\[0.5em]
    \includegraphics[width=0.9\textwidth]{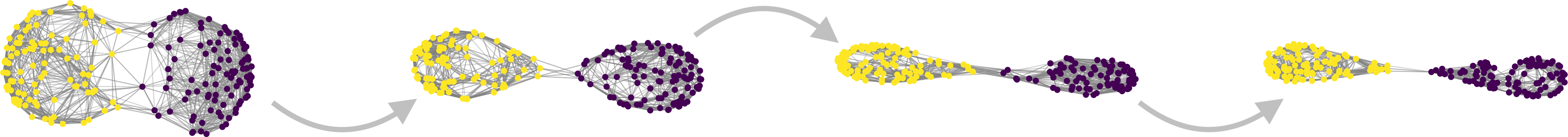}
    \caption{Schematic illustration of evolving feature manifolds (top row) along with the corresponding geometric graphs (bottom row) approximating their evolving geometry.}
    \label{schematic-figure}
\end{figure}

To study the feature geometry of deep neural networks, we focus on binary classification, a fundamental task in supervised learning. Following the notation of \citet{naitzat2020topology}, we consider a compact manifold $M = M_a \cup M_b \subseteq \R^n$, given by the disjoint union of two submanifolds. The task is to determine, given a sample $\vx\in M$, whether it belongs to $M_a$ or $M_b$. 

To this end, we train a feed-forward neural network $\Phi:\R^n \to [0,1]$ with $L$ hidden layers, given by
\[
    \Phi = \phi_{L+1} \circ \phi_L \circ \ldots \circ \phi_1.
\]
Each layer of the network is defined as the composition of an affine transformation and a non-linear activation function $\sigma$, i.e., $\phi_\ell: \R^{n_{\ell-1}} \to \R^{n_{\ell}}$ is given by 
\[
\phi_\ell(\vx) =  \sigma(\mW_\ell \vx + \vb_\ell),
\]
where $\mW_\ell \in \R^{n_{\ell} \times n_{\ell-1}}$ is the weight matrix and $\vb_\ell \in \R^{n_{\ell}}$ is the bias vector. Here, $n_\ell$ denotes the width of layer $\ell$, with $n_0 = n$ corresponding to the input dimension. In this work, we use the ReLU activation function, defined as $\sigma(z) = \max(0, z)$, applied elementwise in all hidden layers. To produce probabilistic outputs, we apply a sigmoid activation in the final layer, i.e.,
$\phi_{L+1}(\vx) = \rho(\mW_{L+1} \vx + \vb_{L+1})$,
where $\mW_{L+1} \in \R^{1\times n_\ell}$, $\vb_{L+1}\in \R$ and $\rho(z) = \frac{1}{1+e^{-z}}$.

We study how the geometry of data evolves as it propagates through neural networks. Given an input manifold $M$, we denote by $\Phi_\ell = \phi_\ell \circ \ldots \circ \phi_1$ the composition of the first $\ell$ layers, and refer to $\Phi_\ell(M)$ as the feature manifold at layer $\ell$. In practice, $M$ is unobserved, and we only have access to a finite set of samples $X=\{\vx^{(i)}\}_{i=1}^N\subset M$. To approximate the geometry of the feature manifolds, we construct geometric graphs on the transformed samples $\Phi_\ell(X)=\{\Phi_\ell(\vx^{(i)})\}_{i=1}^N$, as schematically illustrated in Figure~\ref{schematic-figure}. Graphs based on local connectivity patterns, such as $k$-nearest neighbor graphs or $r$-neighborhood graphs, are known to preserve geometric and topological properties of the manifold when samples are sufficiently dense, including Ricci curvature \citep{van2021ollivier, trillos2023continuum}. This approach is well-established in manifold learning and geometric data analysis, where such graph-based representations are commonly used to study the geometry of data.

Specifically, we consider the $k$-nearest neighbor graph, denoted by $\gG_k(X)$, where the vertices of $\gG_k(X)$ correspond exactly to the samples in $X$, and two vertices are connected if either is among the $k$-nearest neighbors of the other. Additionally, we construct $r$-neighborhood graphs $G_r(X)$, where an edge is drawn between two vertices if their distance is less than a fixed radius $r>0$. These graphs provide discrete approximations of the evolving feature manifolds.

\subsection{Ricci Curvature of Graphs}

Ricci curvature plays a fundamental role in Riemannian geometry and provides the foundation for our analysis of feature geometry. To extend curvature concepts to graphs, we adopt two of the most widely used discretizations, proposed by \citet{ollivier2009ricci} and \citet{forman2003bochner}. We briefly introduce them below.

Intuitively, Ricci curvature measures how the local geometry of a manifold deviates from being flat. This can be captured by comparing the distance between two nearby points with the distance between small geodesic balls centered at them: in regions of positive (negative) curvature, the geodesic balls are closer together (farther apart) than the points themselves.

Building on this intuition,~\citet{ollivier2009ricci} extends the classical notion of Ricci curvature to graphs by replacing geodesic balls with the transition probability of a random walk. For a vertex $u$, let $\mu_u$ denote the uniform distribution over its neighbors, i.e., $\mu_u(v) = \frac{1}{\deg(u)}$ if $u \sim v$ and $\mu_u(v) = 0$ otherwise. Ollivier-Ricci curvature then compares the distance between these distributions to the distance between their centers, mirroring the comparison between geodesic balls and their centers in the Riemannian case:
\[
    \mathcal{O}(u,v) = 1 - \frac{W_1(\mu_u, \mu_v)}{d(u,v)},
\]
where $W_1(\mu_u, \mu_v)$ is the 1-Wasserstein distance, defined by
\[
    W_1(\mu_u, \mu_v) = \inf_{\pi \in \Pi(\mu_{u}, \mu_{v})} \sum_{a \in V} \sum_{b \in V} d(a,b)  \pi(a,b),
\]
and $\Pi(\mu_u, \mu_v)$ denotes the set of all couplings of $\mu_u$ and $\mu_v$, i.e., joint probability distributions on $V\times V$ with marginals $\mu_u$ and $\mu_v$.

Computing Ollivier-Ricci curvature is computationally demanding, as it requires solving an optimal transport problem for each edge with complexity $O(\deg_{\max}^3)$ via the Hungarian algorithm.
This can be mitigated by approximating the Wasserstein distance using Sinkhorn distances \citep{cuturi2013sinkhorn} or through direct combinatorial approximations of the Ollivier-Ricci curvature  \citep{tian2025curvature}.  We adopt the latter, detailed in Appendix~\ref{appendix:Approx_Ollivier}, in our experiments.

On the other hand, \citet{forman2003bochner} introduced a discretization of Ricci curvature on CW complexes via a discrete analogue of the Bochner–Weitzenböck formula. For a simple, unweighted graph, the Forman-Ricci curvature of an edge $u\sim v$ is defined as
\[
    \mathcal{F}(u,v) = 4 - \deg(u) - \deg(v).
\]
While this definition is well-founded in Forman's framework and computationally efficient, it is often too simplistic to capture the geometric complexity required in many applications. To address this limitation, augmented versions of Forman's curvature have been considered~\citep{bloch2014combinatorial,samal2018comparative,weber2018coarse}. A widely used refinement incorporates contributions from three-cycles, yielding the following combinatorial expression:
\[
    \mathcal{AF}(u,v) = 4 - \deg(u) - \deg(v) + 3|N(u) \cap N(v)|. 
\]
This augmentation can be computed in $O(E\deg_{\max})$ time, providing a scalable alternative to the computationally demanding Ollivier–Ricci curvature. A more detailed introduction to the Forman-Ricci curvature is provided in Appendix~\ref{appendix:forman_curvature}.

\subsubsection{Curvature Gap}

When two adjacent vertices belong to the same community, their neighborhoods tend to be more tightly connected. This lowers the transport cost between neighborhood distributions, yielding higher Ollivier-Ricci curvature, and likewise increases augmented Forman-Ricci curvature due to a higher incidence of triangles. Both measures are therefore effective for community detection~\citep{sia2019ollivier, gosztolai2021unfolding, fesser2024augmentations}. By contrast, the original Forman–Ricci curvature depends only on endpoint degrees and cannot reliably distinguish intra- from inter-community edges. As a result, Ollivier- and augmented Forman–Ricci curvature show a bimodal distribution in graphs with strong community structure. To quantify this bimodality, we use the curvature gap~\citep{gosztolai2021unfolding}:
\[
    \Delta \mathcal{O} = \frac{1}{\sigma}\left(\mathcal{O}_{\intra} - \mathcal{O}_{\inter} \right)
\]
where $\mathcal{O}_{\intra}$ and $\mathcal{O}_{\inter}$ denote the mean Ollivier-Ricci curvature of intra- and inter-community edges, and $\sigma$ is the pooled standard deviation. This measure captures how strongly the local graph geometry, as encoded by Ricci curvature, reflects community structure. The curvature gap can be analogously defined for augmented Forman–Ricci curvature. Visualizations and further community structure metrics (modularity, normalized cut, spectral gap) are presented in Appendix~\ref{appendix:Community_Strength}.

\subsection{Ricci Flow}

To analyze the evolving geometry of the feature manifolds, it is natural to draw inspiration from the Ricci flow, a central concept in Riemannian geometry introduced by \citet{hamilton1982three}. The Ricci flow evolves a Riemannian metric $g$ according to $\tfrac{\partial}{\partial t} g(t) = -2\mathrm{Ric}(g(t))$ with initial condition $g(0)=g$, where $\mathrm{Ric}(g(t))$ denotes the Ricci curvature tensor; further details are provided in Appendix~\ref{Appendix:Ricci_Flow}. This evolution is often compared to heat diffusion, as the underlying equation shares a similar averaging effect, smoothing out curvature irregularities by shrinking positively curved regions and expanding negatively curved ones. While there is no unique notion of discrete Ricci flow on graphs, this fundamental geometric evolution characterizes the current versions, first proposed by \citet{ollivier2010survey}, and we show below that well-trained networks follow the same mechanism. 

\section{Approximating Feature Geometry}

This section establishes theoretical results on feature manifold evolution in wide neural networks, emphasizing the key role of non-linear activations in geometric transformations. We then introduce a novel measure that compares local network-induced geometric changes with those predicted by discrete Ricci flow.

\subsection{Theoretical Results}\label{Sec:Theoretical_results}

As a first result, we show that for randomly initialized, sufficiently wide neural networks without nonlinearity, the graph structures encoding the feature geometry are preserved with high probability. Two graphs are said to be \textit{isomorphic} if there exists a bijection between their vertex sets that preserves adjacency relations, i.e., the graphs are identical up to vertex relabeling. The following theorem establishes explicit lower bounds on the network width that guarantee the existence of an isomorphism between the $k$-nearest neighbor graphs before and after the network transformation.

\begin{theorem}\label{th:kNN_randomly_init}
    Let $X\subset \R^n$ be a finite set, and assume there exists $0< \epsilon <1$ such that
    \[
        \min_{\substack{Y\subset X\\ |Y|=k}} \max_{y\in Y} \lVert \vx-\vy \rVert^2 \leq \frac{1-\epsilon}{1+\epsilon}\min_{\substack{Y\subset X\\ |Y|=k+1}} \max_{y\in Y} \lVert \vx-\vy \rVert^2 \quad \forall \vx\in X.
    \]
    Furthermore, let $\mA\in \R^{m\times n}$ be a random matrix with i.i.d. entries $\emA_{ij} \sim \mathcal{N}(0,1/m)$. Then, the map $\psi: X \mapsto \mA X:=\{\mA\vx: \vx\in X\}$, defined by $\psi(\vx) =\mA\vx$, is a graph isomorphism between $\gG_k(X)$ and  $\gG_k(\mA X)$ with probability bounded from below
    \[
    \mathbb{P}\left(\gG_k(X) \cong\gG_k(\mA X) \; \text{under} \; \psi\right) \geq 1 - |X|(|X|-1)e^{\frac{m}{4}(\epsilon^3 - \epsilon^2)}.
    \]
\end{theorem}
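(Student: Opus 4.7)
The plan is to prove this via a standard Johnson--Lindenstrauss (JL) argument combined with the separation hypothesis on nearest-neighbor distances. The key observation is that the stated failure probability $|X|(|X|-1)\,e^{m(\epsilon^3-\epsilon^2)/4}$ matches, up to the factor of $|X|(|X|-1)$, the one-sided Gaussian JL tail $e^{-m(\epsilon^2-\epsilon^3)/4}$, and the factor of $|X|(|X|-1)$ is exactly what one obtains by union-bounding a two-sided distortion statement over the $\binom{|X|}{2}$ unordered pairs from $X$.

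First, I would record the standard concentration fact for the chosen scaling $\mathrm{Var}(\emA_{ij})=1/m$: for any fixed nonzero $\vz\in\R^n$, the quantity $\lVert \mA\vz\rVert^2/\lVert\vz\rVert^2$ is distributed as $\chi_m^2/m$, so that
\[
\mathbb{P}\!\left(\bigl|\lVert\mA\vz\rVert^2 - \lVert\vz\rVert^2\bigr| > \epsilon\lVert\vz\rVert^2\right) \leq 2\,e^{-m(\epsilon^2-\epsilon^3)/4}.
\]
Applying this to $\vz=\vx-\vy$ for every unordered pair $\{\vx,\vy\}\subset X$ with $\vx\neq\vy$ and union-bounding over the $\binom{|X|}{2}$ such pairs yields that the event
\[
\mathcal{E} := \Bigl\{\,(1-\epsilon)\lVert\vx-\vy\rVert^2 \leq \lVert\mA(\vx-\vy)\rVert^2 \leq (1+\epsilon)\lVert\vx-\vy\rVert^2\ \text{for all } \vx\neq\vy\in X\Bigr\}
\]
holds with probability at least $1-|X|(|X|-1)\,e^{-m(\epsilon^2-\epsilon^3)/4}$, matching the bound in the statement.

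Second, I would show that on $\mathcal{E}$ the map $\psi\colon\vx\mapsto\mA\vx$ is a graph isomorphism. Injectivity of $\psi$ on $X$ is immediate on $\mathcal{E}$, since distinct $\vx,\vy\in X$ satisfy $\lVert\mA(\vx-\vy)\rVert^2\geq (1-\epsilon)\lVert\vx-\vy\rVert^2>0$; hence $\psi$ restricts to a bijection $X\to\mA X$. It remains to verify preservation of $k$-nearest-neighbor relations at each point $\vx\in X$. Let $\vy_k$ be any point realizing the $k$-th nearest squared distance $d_k(\vx):=\min_{|Y|=k}\max_{\vy\in Y}\lVert\vx-\vy\rVert^2$, and let $\vy'$ be any point with $\lVert\vx-\vy'\rVert^2\geq d_{k+1}(\vx):=\min_{|Y|=k+1}\max_{\vy\in Y}\lVert\vx-\vy\rVert^2$. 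Combining the two-sided JL distortion with the separation hypothesis gives
\[
\lVert\mA(\vx-\vy_k)\rVert^2 \leq (1+\epsilon)\,d_k(\vx) \leq (1+\epsilon)\cdot\frac{1-\epsilon}{1+\epsilon}\,d_{k+1}(\vx) = (1-\epsilon)\,d_{k+1}(\vx) \leq \lVert\mA(\vx-\vy')\rVert^2,
\]
and the same chain applies \emph{a fortiori} to any of the first $k-1$ nearest neighbors of $\vx$. Consequently the set of $k$ nearest neighbors of $\mA\vx$ in $\mA X$ is exactly the image under $\psi$ of the set of $k$ nearest neighbors of $\vx$ in $X$, so the undirected $k$-NN adjacency (``either is among the $k$-nearest neighbors of the other'') is preserved edge-by-edge, establishing the isomorphism.

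The only technical subtlety is the possibility of ties: the separation hypothesis is stated with ``$\leq$'', which a priori allows $d_k(\vx)=\frac{1-\epsilon}{1+\epsilon}d_{k+1}(\vx)$, and in that borderline case the displayed chain of inequalities above becomes an equality on $\mathcal{E}$, which does not immediately rule out swaps in the $k$-th position. The cleanest way around this is to observe that the event $\mathcal{E}$ can be strengthened to strict inequalities on a subset of equal probability (because the JL random variable has a continuous density, so ties in $\lVert\mA(\vx-\vy)\rVert^2$ across pairs occur with probability zero), or equivalently to note that the ``$\min$'' over $k$-subsets is unchanged under any such ambiguity since all realizers give the same set-level nearest-neighbor structure up to relabeling. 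This is the only genuinely delicate point in the argument; everything else reduces to the standard JL concentration and a direct pigeonhole-type comparison of squared distances.
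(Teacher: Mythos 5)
Your proof is correct and follows essentially the same route as the paper's: the Johnson--Lindenstrauss tail bound for a fixed difference vector, a union bound over the $\binom{|X|}{2}$ unordered pairs to obtain the two-sided distortion event with the stated failure probability, and then a direct comparison of squared distances using the separation hypothesis to show the $k$-nearest-neighbor sets are preserved under $\psi$. Your explicit discussion of ties at the $k$-th nearest-neighbor boundary is a careful refinement of a point the paper leaves implicit, but it does not change the substance of the argument.
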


\begin{remark}
    Since the addition of a bias term does not affect pairwise distances, the same result holds for one-layer linear networks with bias. 
\end{remark}

The proof builds on the Johnson–Lindenstrauss Lemma, which implies that randomly initialized weight matrices act as approximate isometries with high probability. The complete proof of Theorem~\ref{th:kNN_randomly_init} is deferred to Appendix~\ref{appendix:Proofs_Random_Init}.
Analogous results for $r$-neighborhood graphs (Theorem~\ref{th:Random_Geometric_Graphs}), generalizations to deep networks (Theorem~\ref{th:Deep_NN}), and empirical validation (Appendix~\ref{appendix:Experimental_Validation}) are also provided.

Random initialization combined with over-parameterization keeps network weights near their initial values during gradient descent. We show that, without nonlinearities, network dynamics cannot alter the feature geometry encoded by graph structures, regardless of the number of gradient descent steps.  Consider a two-layer network $\Phi = \phi_2 \circ \phi_1$ with $\phi_1(\vx) = \sigma\left(\frac{1}{\sqrt{m}}\mW\vx\right)$, where $\sigma$ denotes the ReLU activation and $m$ the width of the hidden layer. We minimize the empirical loss by keeping the second-layer weights fixed, while gradient descent updates the first-layer weight matrix $\mW$, denoted by $\mW(l)$ after $l$ gradient descent steps. Then, the $k$-nearest neighbor graphs remain invariant prior to the nonlinearity, as stated in the following theorem.

\begin{theorem}[Informal]\label{th:main_res_kNN}
    Let $X\subset \R^n$ be a finite set. Under suitable technical assumptions, for networks of sufficient width $m$ and any number of gradient descent steps $l\geq0$, the map
    \[
        \psi: X \to X(l) := \left\{\frac{1}{\sqrt{m}}\mW(l)\vx: \vx\in X  \right\}; \quad \psi(\vx) = \frac{1}{\sqrt{m}}\mW(l)\vx
    \]
    is a graph isomorphism between $\gG_k(X)$ and $\gG_k(X(l))$ with high probability.
\end{theorem}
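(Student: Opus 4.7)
The plan is to combine the initialization-time distance preservation of Theorem~\ref{th:kNN_randomly_init} with a lazy-training argument showing that, under gradient descent in the wide regime, the trained weights $\mW(l)$ do not drift far enough from $\mW(0)$ to disturb the $k$-nearest-neighbor structure. Concretely, applying Theorem~\ref{th:kNN_randomly_init} to $\mW(0)$ gives, with probability at least $1 - |X|(|X|-1)\exp(m(\epsilon^3-\epsilon^2)/4)$, that $\tfrac{1}{\sqrt{m}}\mW(0)$ preserves all pairwise squared distances in $X$ up to a multiplicative factor $(1\pm \epsilon)$; the gap assumption on $X$ then upgrades this to a graph isomorphism at $l=0$. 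The task therefore reduces to showing that the additional perturbation introduced by $l$ steps of gradient descent remains uniformly small in $l$.

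The concrete steps are as follows. First I would record the approximate-isometry conclusion of Theorem~\ref{th:kNN_randomly_init} as the $l=0$ base case, together with the quantitative margin that the gap condition provides, which is essentially the slack between the $k$-th and $(k+1)$-th nearest-neighbor squared distances. Second, I would invoke an overparameterized NTK-style training bound for two-layer ReLU networks with fixed second layer: under the suitable technical assumptions alluded to in the statement (width polynomial in the sample size and the smallest NTK eigenvalue, a sufficiently small learning rate, bounded inputs, and a well-behaved loss), one obtains a bound of the form $\|\mW(l)-\mW(0)\|_2 \leq R$ uniformly in $l$, with $R/\sqrt{m}\to 0$ as $m\to\infty$. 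Third, I would propagate this operator-norm control into pairwise distances: for every $\vx,\vy\in X$,
\[
\left| \tfrac{1}{\sqrt{m}}\|\mW(l)(\vx-\vy)\| - \tfrac{1}{\sqrt{m}}\|\mW(0)(\vx-\vy)\| \right| \;\leq\; \tfrac{R}{\sqrt{m}}\|\vx-\vy\|,
\]
so that the $(1\pm\epsilon)$ approximate isometry at initialization extends to a $(1\pm(\epsilon+o(1)))$ approximate isometry for all $l$. Finally, choosing $m$ large enough that the perturbed factor still satisfies the gap condition, the same combinatorial argument as in the proof of Theorem~\ref{th:kNN_randomly_init} identifies the $k$ nearest neighbors of $\tfrac{1}{\sqrt{m}}\mW(l)\vx$ with those of $\vx$ for every $\vx\in X$, yielding the claimed graph isomorphism.

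The main obstacle I expect is obtaining a drift bound that is simultaneously (i) uniform in the gradient-descent iteration $l$, (ii) tight enough in operator, rather than Frobenius, norm to be $o(\epsilon)$ relative to the gap condition on $X$, and (iii) compatible with the non-smoothness of ReLU. Standard NTK analyses bound the Frobenius norm of $\mW(l)-\mW(0)$ in terms of the loss trajectory and the smallest eigenvalue of the NTK Gram matrix on $X$; converting this into an operator-norm bound typically loses factors of $\sqrt{n}$ or of the sample count, so the technical assumptions in the formal theorem must be calibrated carefully to ensure the resulting slack is absorbed by $\epsilon$. The remaining ingredients, namely a union bound over the $|X|(|X|-1)$ pairs in $X$ and control of the event that the NTK remains well-conditioned throughout training, follow the standard overparameterized-learning playbook.
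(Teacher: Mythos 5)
Your proposal follows essentially the same route as the paper. The paper's formal version (Theorem~\ref{th:main_res_kNN} and its proof in Appendix~\ref{appendix:trained_networks}) combines (i) the Johnson--Lindenstrauss-type concentration (Corollary~\ref{Cor:Point_Cloud}) applied to $\mW(0)$ at level $\epsilon/2$, with (ii) the lazy-training bound of \citet{du2018gradient} (Theorem~\ref{th:gradient_descent}, Corollary~\ref{Cor:WeightBounds}), which gives $\lVert \mW_{r,:}(l)-\mW_{r,:}(0)\rVert \leq \tfrac{4\sqrt N}{\sqrt m\,\lambda_0}\lVert\vu(0)-\vy\rVert$ uniformly in $l$, and then propagates this into the pairwise distances by the triangle inequality and finishes with exactly the combinatorial argument of Theorem~\ref{th:kNN_randomly_init}. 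One small point worth noting: the concern you raise about converting a Frobenius-type drift bound into an operator-norm bound, and losing dimension factors, does not actually bite here. The paper passes from the row-wise $\ell^2$ bound to a bound of the form $\lVert(\mW(l)-\mW(0))\vx\rVert \leq \sqrt m\,\epsilon'\lVert\vx\rVert$ via a simple Cauchy--Schwarz argument (Lemma~\ref{lem:Lin_Alg}), and since the row-wise bound $\epsilon'$ already scales as $1/\sqrt m$, the relevant rescaled quantity $\tfrac1{\sqrt m}\lVert(\mW(l)-\mW(0))\vx\rVert$ is $O(1/\sqrt m)\lVert\vx\rVert$; the explicit width requirement $m \geq 64N\lVert\vu(0)-\vy\rVert^2/(\epsilon^2\lambda_0^2)$ absorbs all constants. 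So the calibration issue you flag is resolved simply by the $1/\sqrt m$ scaling of the Du--Zhai--Poczos--Singh bound, not by any sharper norm conversion.
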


A formal version of this result, including exact lower bounds on the required network width and the full proof, is provided in Appendix~\ref{appendix:trained_networks}. There, we also present an analogous theorem for $r$-neighborhood graphs.

The results above establish that wide linear neural networks cannot alter the underlying feature geometry, as their weight matrices act as approximate isometries. In contrast, once a nonlinearity is introduced, our experiments show clear changes in the geometry, as captured by the graph structures (see Section~\ref{Sec:Experiments}). This highlights the essential role of the ReLU activation in enabling such transformations. Building on this observation, we further demonstrate that even when the weight matrices are exact isometries, adding the ReLU nonlinearity is sufficient to change the geometry of the feature manifolds. 
\begin{theorem}[Informal]\label{th:Main_res_ReLU}
    For any three vertices, there exists a linear isometry such that composing it with a ReLU activation changes the ordering of their pairwise distances. In particular, this operation can rewire the $k$-nearest neighbor graph.
\end{theorem}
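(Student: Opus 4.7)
My plan is to exploit the coordinate-wise, $1$-Lipschitz character of the ReLU activation: for any $\va, \vb \in \R^n$,
\[
    \|\sigma(\va) - \sigma(\vb)\|^2 \;=\; \sum_{i=1}^n (\sigma(a_i) - \sigma(b_i))^2 \;\leq\; \|\va - \vb\|^2,
\]
and this inequality is strict in any coordinate where both entries are $\leq 0$ (with at least one nonzero) or where the entries have opposite signs with unequal magnitudes. An orthogonal $\mU$ preserves all pairwise distances but completely reshuffles the sign patterns of the images. By engineering these sign patterns so that different pairs experience different amounts of cancellation under ReLU, $\sigma \circ \mU$ contracts them by different factors and thus permutes the ordering of the pairwise distances.

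\textbf{Construction.} Given three distinct vertices $\vx_1, \vx_2, \vx_3 \in \R^n$, I would assume (by relabeling) that $d_{12} \leq d_{13} \leq d_{23}$ and construct $\mU \in O(n)$ sending $\mU\vx_1$ into the open positive orthant and $\mU\vx_2, \mU\vx_3$ into the closed negative orthant. Then ReLU annihilates $\mU\vx_2$ and $\mU\vx_3$, giving $d'_{23} = 0$, while $\sigma(\mU\vx_1) = \mU\vx_1$ yields $d'_{12} = d'_{13} = \|\vx_1\| > 0$. The previously largest distance $d_{23}$ thus becomes strictly smaller than both $d'_{12}$ and $d'_{13}$, so the ordering is reversed. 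The rewiring claim for the $k$-nearest neighbor graph is then an immediate consequence: once the ordering of distances among three vertices changes, the membership of at least one of them in the $k$-nearest-neighbor set of another can flip, adding or removing edges from $\gG_k$.

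\textbf{Main obstacle.} The delicate step is establishing existence of the required $\mU$ for a given triple. Since the $O(n)$-orbit of $\vx_i$ is only the sphere $\|\vx_i\| S^{n-1}$, the three sign-pattern conditions must be jointly feasible; geometrically, one must embed a rigid copy of the triangle $\vx_1\vx_2\vx_3$ into $\R^n$ with one vertex in the positive orthant and two in the negative orthant. For generic triples in $n \geq 3$ this follows by a dimension count inside $O(n)$ (whose dimension $n(n-1)/2$ grows quadratically): first rotate $\vx_1$ to a chosen direction in the positive orthant, then use the $(n-1)(n-2)/2$-dimensional stabilizer of that direction to steer $\vx_2, \vx_3$ into the negative orthant. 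I expect the main technical difficulty to lie in the degenerate cases, such as three collinear points on a single ray through the origin, where this direct orthant-placement construction fails; there one must either invoke genericity or replace the full-annihilation argument by a quantitative estimate showing that $\sigma \circ \mU$ still contracts the three pairs by visibly different amounts — enough to alter their relative ordering, even if no pair collapses entirely.
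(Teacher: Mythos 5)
Your orthant-placement step conceals a genuine obstruction, not a degenerate-case nuisance. Because orthogonal maps preserve all inner products, placing $\mU\vx_1$ in the open positive orthant and $\mU\vx_2, \mU\vx_3$ in the closed negative orthant forces
\[
    \langle\vx_1, \vx_2\rangle \leq 0, \qquad \langle\vx_1, \vx_3\rangle \leq 0, \qquad \langle\vx_2, \vx_3\rangle \geq 0.
\]
These sign constraints are $\mathrm{O}(n)$-invariant and fail on a positive-measure set of triples: any pair $\vx_2 = -\vx_3$ with $\vx_2 \neq 0$ violates the third, and three points lying in a narrow cone violate the first two. The dimension count in $\mathrm{O}(n)$ cannot repair this --- the extra rotational freedom only lets you aim vectors at prescribed \emph{directions}, never change their mutual angles, and it is exactly those angles that encode the infeasibility. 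Your relabeling $d_{12}\leq d_{13}\leq d_{23}$ in fact works against you: making $d_{23}$ the largest distance pushes $\langle\vx_2,\vx_3\rangle$ toward the \emph{wrong} sign relative to $\lVert\vx_2\rVert,\lVert\vx_3\rVert$.

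The paper's proof sidesteps this entirely by taking an \emph{affine} isometry $\vx \mapsto \mA\vx + \vb$ rather than a purely linear one; the bias $\vb$ appears in the formal version of Theorem~\ref{th:Main_res_ReLU} even though the informal phrasing speaks loosely of a ``linear isometry.'' Rather than steering the three points into separate orthants, it builds $\mA \in \mathrm{O}(n)$ so that $\mA\vx$ and $\mA\vy$ are supported on the first two coordinates only: first rotate $\vx$ to $(-\lVert\vx\rVert, 0,\ldots,0)^\top$, then, if $\vy\notin\SPAN\{\vx\}$, apply a second rotation fixing $\ve^{(1)}$ that places $\mA\vy$ in the first two coordinates with nonpositive second entry. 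A bias supported on the first coordinate alone then neutralizes whatever positive first entry remains, so ReLU maps $\mA\vx + \vb$ and $\mA\vy + \vb$ to the \emph{same} point. The hypothesis $\vz\notin\SPAN\{\vx,\vy\}$ --- present in the formal theorem but absent from your account --- guarantees $\mA\vz$ has a nonzero entry in some coordinate $i\geq 3$, where the bias vanishes and ReLU preserves it, giving $\lVert\sigma(\mA\vx+\vb)-\sigma(\mA\vz+\vb)\rVert > 0 = \lVert\sigma(\mA\vx+\vb)-\sigma(\mA\vy+\vb)\rVert$. Confining two of the three images to a two-dimensional coordinate subspace is always achievable, unlike your three-orthant placement; the bias and the span hypothesis do the rest.
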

This provides not only empirical but also theoretical evidence for the fundamental role of the activation function in changing the feature geometry. A formal treatment of this result is provided in Appendix~\ref{appendix:ReLU_networks}.

\subsection{Local Ricci Evolution Coefficients} \label{Sec:Local_Ricci_Coefficients}

In this section, we introduce a novel framework to evaluate the geometric changes induced by deep neural networks by drawing an analogy with the Ricci flow. Recall that the Ricci flow regularizes the geometry of a manifold by shrinking regions of positive curvature and expanding regions of negative curvature. We aim to assess whether neural networks induce feature transformations that exhibit a similar curvature-driven regularization. Since the feature manifolds cannot be directly observed, we instead approximate their geometry using the $k$-nearest neighbor graph $\gG_k(\Phi_\ell(X))$, constructed from the transformed samples $\Phi_\ell(X)=\{\Phi_\ell(\vx^{(i)})\}_{i=1}^N$ after layer $\ell$. A discussion on the choice of the parameter $k$ is provided in Appendix~\ref{appendix:Neighborhood_Scale}.

To reflect the local nature of the Ricci flow in our graph-based framework, we focus on the smallest neighborhoods, i.e., the one-hop neighborhoods. The curvature of a one-hop neighborhood centered at a vertex $\vx$ at layer $\ell$ is approximated by the discrete scalar curvature of \citet{ollivier2010survey},
\[
    \mathcal{O}_\ell(\vx) = \frac{1}{\deg_\ell(\vx)}\sum_{\vy\in N_\ell(\vx)} \mathcal{O}(\vx,\vy),
\]

where $\deg_\ell(\vx)$ and $N_\ell(\vx)$ denote the degree and one-hop neighborhood of $\vx$ in $\gG_k(\Phi_\ell(X))$. To capture how a local region evolves across layers, we define the average change in distances
\[
    \eta_\ell(\vx) = \frac{1}{\deg_\ell(\vx)} \sum_{\vy\in N_\ell(\vx)} \left(d_{\ell+1}(\vx, \vy) - d_{\ell}(\vx, \vy)\right),
\]
where $d_{\ell}(\vx,\vy)$ is the distance between $\vx$ and $\vy$ at layer $\ell$. Intuitively, $\eta_\ell(\vx)$ measures whether the neighborhood of $\vx$ expands during the transition from layer $\ell$ to $\ell+1$. Under the Ricci flow, positively curved regions contract while negatively curved regions expand, implying a negative correlation between $\mathcal{O}_\ell(\vx)$ and $\eta_\ell(\vx)$. To quantify this, we compute the Pearson correlation coefficient across layers,
\[
    \rho(\vx) = \frac{\sum_{\ell=1}^{L-1}(\eta_\ell(\vx) - \bar{\eta}(\vx))(\mathcal{O}_\ell(\vx) - \bar{\mathcal{O}}(\vx))}{\sqrt{\sum_{\ell=1}^{L-1}(\eta_\ell(\vx) - \bar{\eta}(\vx))^2}\sqrt{\sum_{\ell=1}^{L-1}(\mathcal{O}_\ell(\vx) - \bar{\mathcal{O}}(\vx))^2}},
\]
where $\bar{\eta}(\vx)=\tfrac{1}{L-1}\sum_{\ell=1}^{L-1}\eta_\ell(\vx)$ and $\bar{\mathcal{O}}(\vx)=\tfrac{1}{L-1}\sum_{\ell=1}^{L-1}\mathcal{O}_\ell(\vx)$ denote the averages across layers.
We refer to $\rho(\vx)$ as the \textit{local Ricci evolution coefficient} of the network at point $\vx$. Although introduced here in the context of Ollivier curvature, the framework is general and can likewise be instantiated with alternative notions of discrete curvature, such as the augmented Forman curvature or efficient approximations of Ollivier curvature. In all cases, the local Ricci evolution coefficient evaluates whether the evolution of neighborhoods under the network aligns with the curvature-driven dynamics of Hamilton’s Ricci flow.

\begin{remark}  
    Appendix~\ref{appendix:Ricci_coefs_comparison} provides a detailed comparison between our local framework and the global approach of \citet{baptista2024deep}.
\end{remark}

In addition to evaluating Ricci flow–like behavior at the level of individual neighborhoods, we can also assess it layer by layer. Specifically, we ask whether the geometric transformations induced by a given layer $\ell$ align with those expected under the Ricci flow. To this end, we define the \textit{layer Ricci coefficient}
\[
    \rho(\ell) = \frac{\sum_{\vx\in \Phi_\ell(X)}(\eta_\ell(\vx) - \bar{\eta}_\ell)(\mathcal{O}_\ell(\vx) - \bar{\mathcal{O}}_\ell)}{\sqrt{\sum_{\vx\in \Phi_\ell(X)}(\eta_\ell(\vx) - \bar{\eta}_\ell)^2}\sqrt{\sum_{\vx\in \Phi_\ell(X)}(\mathcal{O}_\ell(\vx) - \bar{\mathcal{O}}_\ell)^2}},
\]
where $\bar{\eta}_\ell = \frac{1}{|X|} \sum_{\vx\in \Phi_\ell(X)}\eta_l(\vx)$ and $\quad \bar{\mathcal{O}}_\ell= \frac{1}{|X|} \sum_{\vx\in \Phi_\ell(X)}\mathcal{O}_\ell(\vx)$. 
This coefficient provides a global summary of whether the geometric transformation induced by layer $\ell$ follows the curvature-driven dynamics of the Ricci flow.

\section{Experimental Analysis}\label{Sec:Experiments}

\subsection{Local Ricci Evolution Coefficients}

Using our framework of local Ricci evolution coefficients, we empirically examine whether deep neural networks exhibit curvature-driven dynamics in the evolution of their feature geometry. To this end, we study both synthetic and real-world datasets. The synthetic datasets are constructed to span varying degrees of geometric and topological entanglement. For real-world benchmarks, we consider visually similar digit pairs from MNIST (1 vs. 7, 6 vs. 9), fine-grained visual distinctions from Fashion-MNIST---sneakers vs. sandals (FMNIST-SvS) and shirts vs. dresses (FMNIST-SvD)---and from CIFAR-10 (cars vs. planes). Further details on datasets and task setup are provided in Appendix~\ref{appendix:experimental_setup}. We train feed-forward networks with varying widths and depths, all of which achieve over 99\% training accuracy, ensuring that our analysis reflects meaningful learned feature representations. To account for randomness in training, results are averaged over 50 independently initialized and trained networks per dataset–architecture pair. In total, we analyze the feature geometry of more than 20,000 networks. 

Table~\ref{local-Ricci-coefs-real-world-data-ollivier} reports results on real-world datasets, consistently showing negative local Ricci evolution coefficients, providing strong evidence of Ricci flow–like dynamics in feature geometry. The large majority of vertices exhibit negative coefficients, indicating that curvature-driven dynamics are a global phenomenon on the data manifold. To reduce computational overhead, we further compute local Ricci evolution coefficients using augmented Forman curvature and the approximate Ollivier curvature of \citet{tian2025curvature}. Both yield results consistent with the exact Ollivier curvature while being substantially more efficient (see Tables~\ref{local-Ricci-coefs-real-world-data-aug-forman} and~\ref{local-Ricci-coefs-real-world-data-approx-0llivier}). For completeness, we present the entire distribution of local Ricci evolution coefficients in Appendix~\ref{appendix:Local_Ricci_Coefficients}, along with results on synthetic datasets. Strikingly, we observe qualitatively identical behavior across all architectures and datasets, both synthetic and real, underscoring the robustness and universality of this phenomenon. Together, these findings provide compelling evidence that the evolution of feature geometry in deep neural networks is fundamentally curvature-driven, closely aligned with Ricci flow.

\begin{table}[ht]
\caption{Average local Ricci evolution coefficients on real-world data. Values are means $\pm$ standard deviations over 50 independently trained networks per architecture; proportion of vertices with negative coefficients is reported in parentheses. Networks were randomly initialized.}
\label{local-Ricci-coefs-real-world-data-ollivier}
\scriptsize
\begin{center}
\setlength{\tabcolsep}{1.5pt}
\begin{tabular}{lccccc}
\multicolumn{1}{c}{\bf (Width,Depth)}& 
\multicolumn{1}{c}{\bf MNIST-1v7} & 
\multicolumn{1}{c}{\bf MNIST-6v9} & 
\multicolumn{1}{c}{\bf FMNIST-SvS} & 
\multicolumn{1}{c}{\bf FMNIST-SvD} &
\multicolumn{1}{c}{\bf CIFAR} 
\\ \hline \\
$(15,7)$    & $-0.58 \pm 0.08 \, (88.7\%)$ & $-0.51 \pm 0.09 \, (85.3\%)$ & $-0.43 \pm 0.05\, (84.0 \%)$ & $-0.27\pm 0.08\, (73.4\%)$ & $-0.44\pm0.12\,(87.8\%)$ \\
$(15,10)$   & $-0.60 \pm 0.06 \, (91.8\%)$ & $-0.59\pm 0.06 \, (92.6\%)$ & $-0.40\pm 0.05 \, (84.4\%)$ & $-0.29 \pm 0.12 \, (77.6\%)$ & $-0.43\pm0.15\,(87.8\%)$ \\
$(15,15)$   & $-0.61\pm 0.07 \, (93.3\%)$ & $-0.58 \pm0.11\, (92.9\%)$ & $-0.52 \pm 0.11 \, (93.8\%)$ & $-0.40\pm 0.12 \, (88.2\%)$ & $-0.55\pm0.18\,(93.3\%)$ \\
\hline
$(25,7)$    & $-0.58 \pm 0.05 \, (89.3\%)$ & $-0.48 \pm 0.10 \, (83.3\%)$ & $-0.41\pm 0.03 \, (81.9\%)$ & $-0.28\pm 0.08 \, (74.3\%)$ & $-0.48\pm0.13\,(89.9\%)$ \\
$(25,10)$   & $-0.62 \pm 0.05 \, (92.8\%)$ & $-0.59 \pm 0.05 \, (92.8\%)$ & $-0.40 \pm 0.05\, (84.8\%)$ & $-0.32\pm0.09 \, (80.4\%)$ & $-0.54\pm0.13\,(94.8\%)$ \\
$(25,15)$   & $-0.60 \pm 0.06 \, (94.2\%)$ & $-0.61 \pm 0.07 \, (94.9\%)$ & $-0.47\pm 0.08 \, (93.5\%)$ & $-0.46\pm 0.08 \, (92.2\%)$ & $-0.71\pm0.06\,(98.1\%)$ \\
\hline
$(50,7)$    & $-0.59 \pm 0.05 \, (90.6\%)$ & $-0.46 \pm 0.14 \, (82.0\%)$ & $-0.42 \pm 0.03\, (83.0\%)$ & $-0.35\pm 0.09 \, (80.8\%)$ & $-0.57\pm0.12\,(95.4\%)$ \\
$(50,10)$  & $-0.65\pm 0.04 \, (94.6\%)$ & $-0.61\pm 0.07 \, (93.3\%)$ & $-0.43\pm0.07 \, (86.5\%)$ & $-0.44\pm0.10 \, (88.8\%)$ & $-0.70\pm0.05\,(98.5\%)$ \\
$(50,15)$   & $-0.63 \pm 0.06 \,(95.2\%)$ & $-0.61\pm 0.08 \, (95.0\%)$ & $-0.54 \pm 0.05(96.0\%)$ & $-0.53\pm0.07\, (95.0\%)$ & $-0.76\pm0.04\,(98.3\%)$ \\
\end{tabular}
\end{center}
\end{table}

\subsection{Community Structure}\label{Sec:Community_Structure}

\begin{figure}[t]
    \includegraphics[scale=0.3]{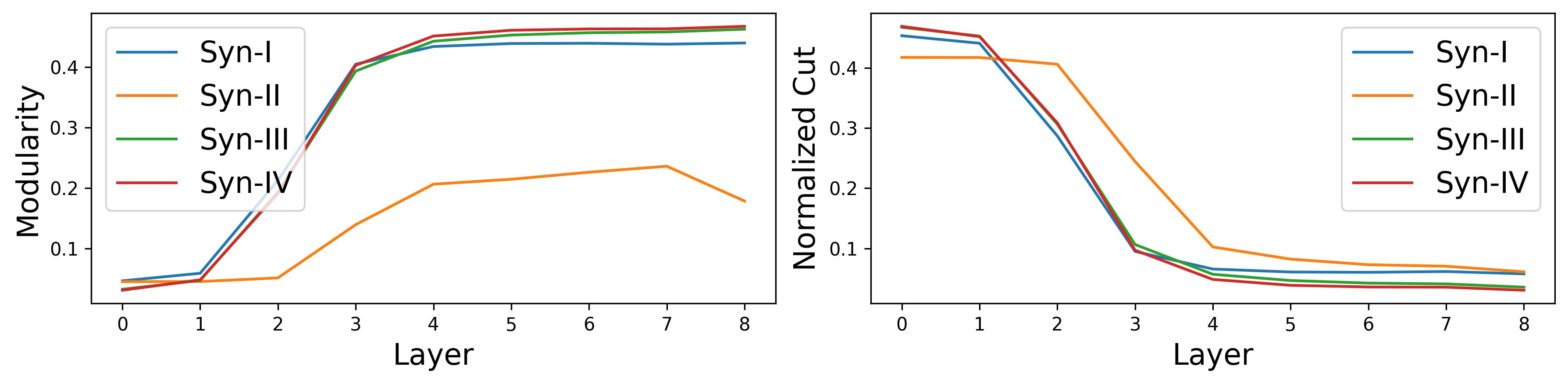}
\centering
\caption{Modularity and normalized cut across network layers on synthetic datasets. Reported values are averaged over 50 independently trained networks with random initialization.}
\label{fig:modularity_synthetic}
\end{figure}
We study graphs whose nodes can be naturally partitioned into two communities according to the true labels of the underlying binary classification task. This setup is well suited for a community-detection perspective. In this section, we examine whether the class separability learned by deep neural networks induces a rewiring that strengthens the community structure of the $k$-nearest neighbor graphs.

To this end, we evaluate how well the geometry of the graphs aligns with the prescribed community structure by measuring the curvature gap, modularity, and normalized cut. Our experiments on both synthetic and real-world datasets show that the community structure becomes increasingly pronounced as the networks evolve the feature geometry. Figure~\ref{fig:modularity_synthetic} reports the evolution of modularity and normalized cut across network layers, averaged over 50 independently trained models to mitigate stochastic variability. In all datasets, we observe a consistent increase in modularity and a corresponding decrease in normalized cut, indicating that the learned feature geometry progressively aligns with the prescribed community structure. For real-world datasets, this effect is still present but less pronounced, as the $k$-nearest neighbor graphs constructed from raw inputs already exhibit relatively high modularity, particularly in the case of MNIST (see Figure~\ref{fig:modularity_real}).

\begin{wrapfigure}{r}{0.3\textwidth}
  \centering
  \includegraphics[width=0.3\textwidth]{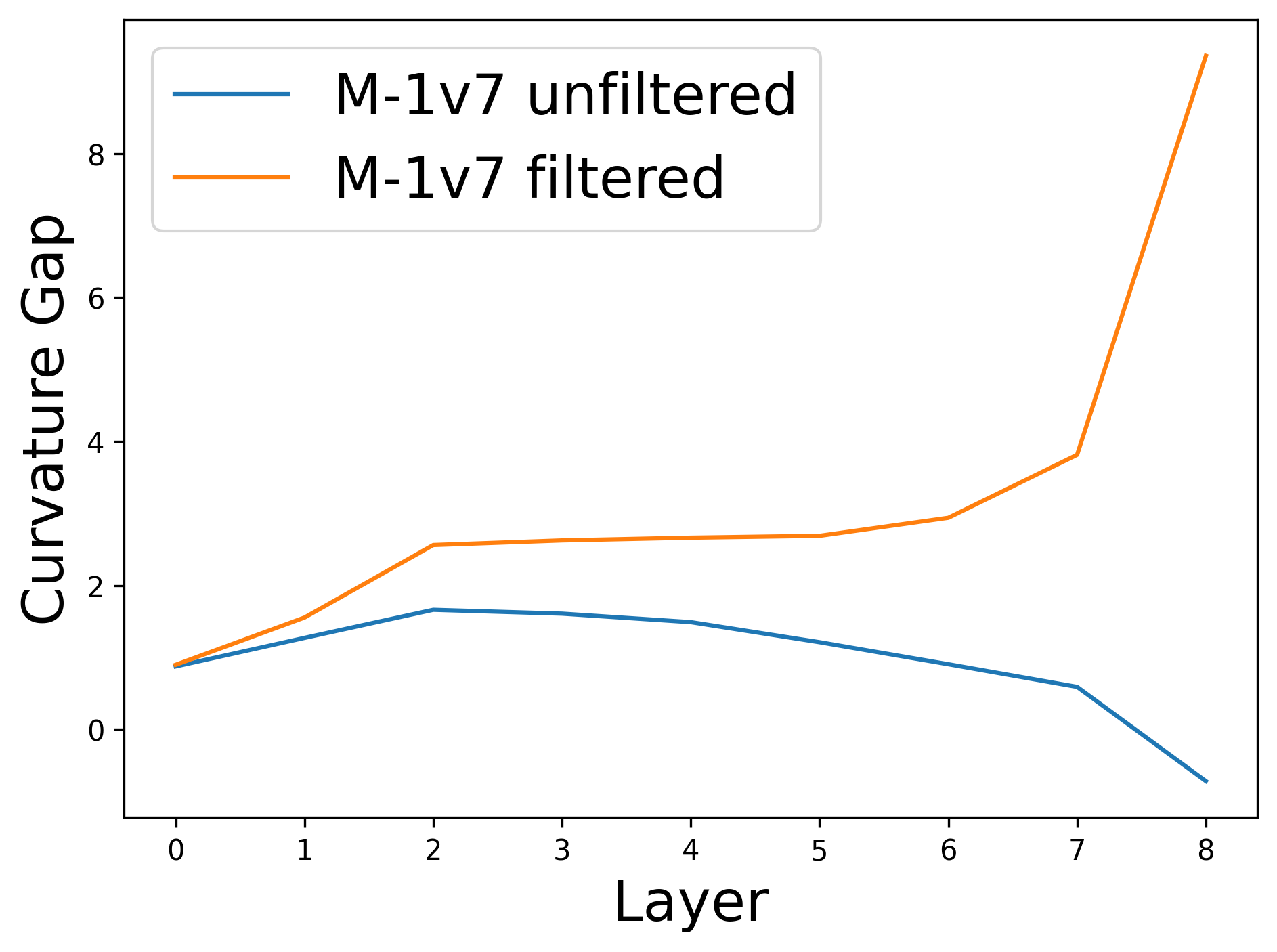}
  \caption{Curvature gaps before and after removing misclassified samples.}
  \label{fig:curvature_gap}
\end{wrapfigure}

In our setting, the curvature gap does not reliably capture how well the graph geometry aligns with the prescribed community structure. Most inter-community edges arise from misclassified nodes connected to correctly classified ones with the same label, which the network effectively treats as intra-community edges, making them indistinguishable through the curvature lens. To clarify this effect, Figure~\ref{fig:curvature_gap} compares the curvature gaps on the MNIST 1-vs-7 dataset computed on the full test set with those computed after removing the five misclassified points (out of 1000). While removing such a small fraction of samples should not noticeably alter the graph geometry, it leads to a qualitatively different behavior: the curvature gap increases consistently across layers instead of collapsing. This is expected, as inter-community edges now differ structurally from intra-community ones.
We discuss this phenomenon in more detail in Appendix~\ref{appendix:community_structure}.

Overall, these results demonstrate that deep neural networks progressively evolve the geometry of feature manifolds in a manner that amplifies the underlying community structure.
\begin{figure}[t]
    \includegraphics[scale=0.3]{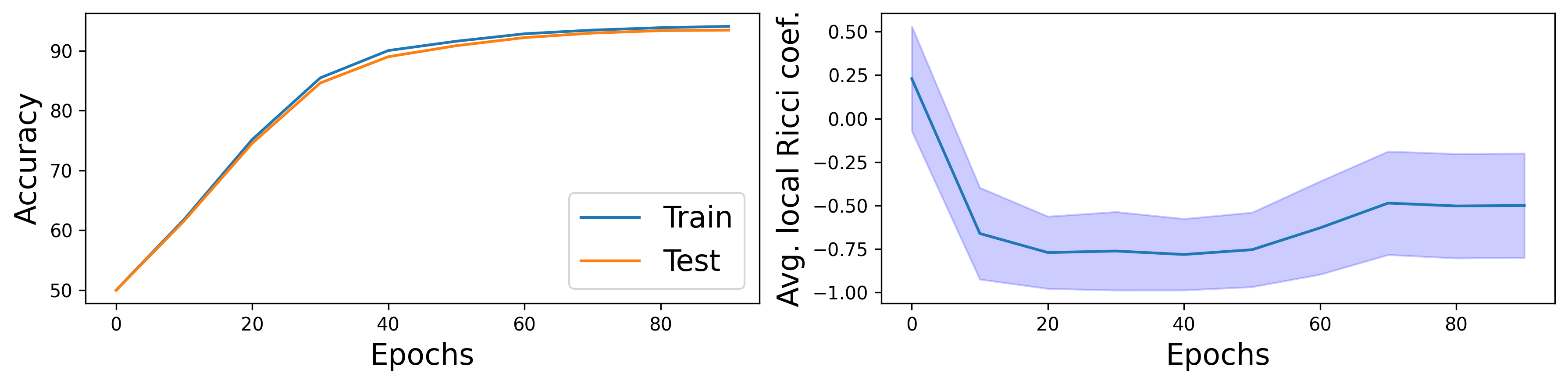}
\centering
\caption{Average local Ricci evolution coefficients, computed from the approximated Ollivier–Ricci curvature, shown with the corresponding accuracies throughout training on the Fashion-MNIST dataset. Reported values are averaged over 50 independently trained networks with random initialization.}
\label{fig:learning_dynamics}
\end{figure}

\subsection{Overfitting and Local Ricci Evolution Coefficients} \label{Sec:Early_Stopping}

To better understand how neural networks learn the geometry of the data manifold, we track the local Ricci evolution coefficients during training. Across all datasets, we observe a strikingly consistent pattern: at the beginning of training, the mean coefficients exhibit a sharp decline, suggesting that the network is effectively learning the underlying geometric structure. Once test accuracy stabilizes, however, this trend reverses: the mean coefficients plateau or rise again. We hypothesize that this marks a shift in training dynamics, where the network ceases to capture new geometric structure and instead begins to overfit individual samples. This pattern suggests that monitoring local Ricci evolution coefficients during training could serve as a principled stopping heuristic. In practice, this can be made more efficient by approximating Ollivier–Ricci curvature or by using augmented Forman curvature, both of which lower computational cost while retaining the essential geometric signal. Figure~\ref{fig:learning_dynamics} illustrates this phenomenon on the Fashion-MNIST dataset, showing the local Ricci evolution coefficients alongside train and test accuracy throughout training.

\subsection{Analysis Across Layers}\label{Sec:Network_Depth_Selection}

We now turn to the evaluation of the layer Ricci coefficients, introduced in Section~\ref{Sec:Local_Ricci_Coefficients}. We compute these coefficients across both synthetic and real-world datasets, considering networks of varying depth, while keeping the width fixed. As before, all models are trained to exceed $99\%$ training accuracy to ensure that we analyze meaningful learned representations. For each dataset-architecture pair, results are averaged over 50 independently trained networks to account for stochasticity in initialization and optimization. 

Across all experiments, we observe a strikingly consistent behavior: the curves of the layer-Ricci coefficients follow the same trend across network depths. Specifically, there appears to be a critical depth up to which the coefficients decrease, and after which they begin to increase again. This turning point suggests a balance between the network's ability to capture geometric structure and its tendency to overfit. Up to the critical depth, additional layers appear to enrich the evolution of the feature geometry, as reflected by decreasing layer Ricci coefficients. Beyond this point, however, further depth no longer contributes meaningful geometric transformations, which manifests as increasing Ricci coefficients. This phenomenon highlights the critical depth as a potential heuristic for selecting network architectures: it indicates the point at which adding more layers ceases to provide geometric benefits. An example of this behavior on the MNIST dataset is shown in Figure~\ref{fig:Layer_Ricci_Coefficients}. Notably, the depth identified by this procedure coincides with the depth that maximizes test accuracy when averaged over 50 independently trained networks.

\begin{figure}[t]
    \includegraphics[scale=0.3]{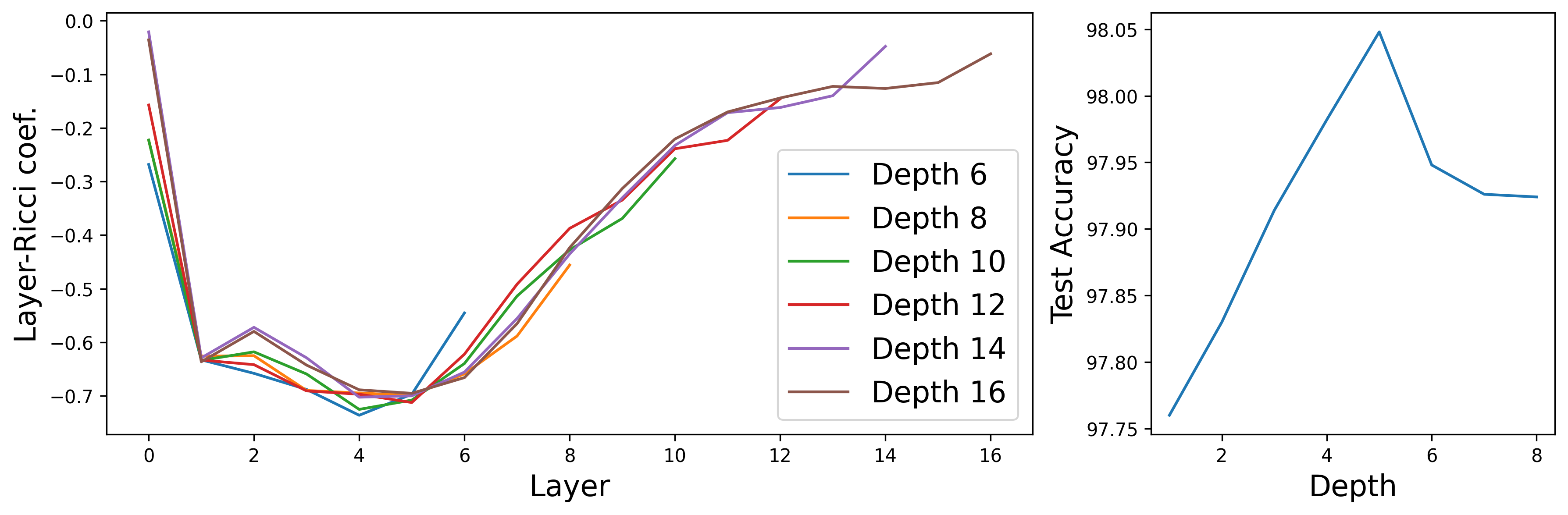}
\centering
\caption{Layer-Ricci coefficients, computed from the augmented Forman-Ricci curvature, on the MNIST 1-vs-7 dataset for networks of varying depth (width fixed to 25). Reported values are averaged over 50 independently trained networks with random initialization.}
\label{fig:Layer_Ricci_Coefficients}
\end{figure}

\section{Discussion}
\paragraph{Summary} 
In this paper we have introduced the \textit{local Ricci evolution coefficients}, a tool to evaluate locally the geometric transformations of feature manifolds by comparing them to Ricci flow dynamics. We theoretically show that nonlinear activations are essential for reshaping feature geometry. Empirically, we demonstrate that the progressive emergence of class separability is mirrored in the development of community structure within the corresponding graph representations. Moreover, our experiments indicate that well-trained networks exhibit curvature-driven transformations closely aligned with Ricci flow, and that this behavior emerges during training. Building on these insights, we propose an early-stopping criterion and a heuristic for selecting network depth, based on detecting when additional layers no longer induce curvature-driven changes.
\paragraph{Limitations and future work} While we have established the importance of non-linear activations in reshaping feature geometry, deriving exact evolution equations for graphs constructed from local connectivity patterns in non-linear networks remains an open problem. Moreover, our study was conducted on relatively small datasets and focused exclusively on feed-forward architectures; extending the analysis to larger-scale datasets and more diverse architectures (e.g., convolutional neural networks) represents a valuable direction for future work. 

Another interesting avenue for future study is to analyze the double descent phenomenon \citep{belkin2019reconciling} through the framework of local Ricci evolution coefficients. In the overparameterized regime, our results show that increasing network size---either by expanding depth at fixed width or width at fixed depth---systematically raises the proportion of vertices with negative Ricci coefficients. This suggests that larger networks operate in a more geometry-aware manner, providing a novel geometric perspective on the mechanisms underlying double descent. Further discussion and initial experimental results can be found in Appendix~\ref{appendix:double_descent}. Furthermore, local Ricci evolution coefficients could serve as a novel tool to detect geometric anomalies and support uncertainty quantification in deep neural networks, since regions of the data manifold with non-negative coefficients may signal unexpected geometric behavior by the network.

\subsubsection*{Acknowledgments}
MR thanks Karl Schlagenhauf for a very inspiring breakfast conversation triggering his interest in the question. MH wants to thank Willem Diepeveen for fruitful discussions and Guido Montúfar for his support. MH was partly supported by BMFTR in DAAD project 57616814 (\href{https://secai.org/}{SECAI}). MW was partially supported by NSF award CBET-2112085 and DMS-2406905, and an Alfred P. Sloan Research Fellowship in Mathematics.

\bibliography{arxiv_main}
\bibliographystyle{plainnat}

\clearpage
\appendix
\section{Appendix}
\localtableofcontents

\newpage

\subsection{Extended related work}\label{appendix:Related_Work}

Numerous works have addressed the challenge of explaining the remarkable success of deep neural networks from diverse theoretical perspectives. One line of research characterizes network expressivity in terms of the complexity of decision boundaries. \citet{pascanu2013number} and \citet{montufar2014number} established bounds on the number of linear regions generated by deep ReLU networks, showing that depth increases representational power. Furthermore, the Neural Tangent Kernel framework by \citet{jacot2018neural} offers an analytical tool to understand the training dynamics of wide networks by relating them to kernel methods. 

Other lines of research explore how the geometry and topology of neural feature representations evolve as data propagate through network layers.  Using tools from topological data analysis, such as persistent homology, \citet{naitzat2020topology} experimentally showed that neural networks progressively simplify the topology of feature representations. Geometric approaches have uncovered similar phenomena of simplification and regularization. \citet{brahma2015deep} observed flattening and disentanglement in manifold-shaped data, \citet{ansuini2019intrinsic} reported decreasing intrinsic dimension in deeper layers, and \citet{cohen2020separability} demonstrated improved classification capacity via geometric simplification. 

Beyond empirical observations, several works propose theoretical frameworks building on classical mathematical tools. \citet{hauser2017principles} argued that deep networks can be naturally interpreted using the language of Riemannian geometry, with network layers acting on the coordinate representation of the underlying data manifold. Meanwhile, \citet{haber2017stable} propose to interpret deep learning as a parameter estimation problem for nonlinear dynamical systems, a framework well-suited for analyzing stability and well-posedness of deep learning.

Closest to our work is the framework introduced by \citet{baptista2024deep}, which evaluates geometric transformations via Ricci flow at a global scale. A comparison between their global analysis and our local analysis is provided in the following section.

\subsubsection{Comparison of local and global Ricci coefficients}\label{appendix:Ricci_coefs_comparison}

\citet{baptista2024deep} introduced a metric that quantifies the geometric transformations induced by deep neural networks relative to those predicted by the Ricci flow at a global scale. In this section, we compare their global metric to our local Ricci evolution coefficients.

Their framework is based on comparing the Forman-Ricci curvature at a global scale to a global approximation of the expansion or contraction of the manifold. Specifically, they define
\[
    \mathcal{F}_\ell = \sum_{e \in E_\ell}\mathcal{F}(e),
\]
where $E_\ell$ denotes the edge set of the $k$-nearest-neighbor graph constructed from the set $\Phi_\ell(X) = \{\Phi_\ell(\vx^{(i)}): i=1,\ldots,N\}$. To quantify the global expansion or contraction of the manifold across layers, they consider all pairwise distances:
\[
    \eta_\ell = \sum_{\vx, \vy \in \Phi_{\ell+1}(X)} d_{\ell+1}(\vx, \vy) - \sum_{\vx, \vy \in \Phi_{\ell}(X)} d_{\ell}(\vx, \vy). 
\]
The relation between these two quantities is then summarized via the Pearson correlation coefficient
\[
    \rho = \frac{\sum_{\ell=1}^{L-1}(\eta_\ell - \bar{\eta})(\mathcal{F}_\ell - \bar{ \mathcal{F}})}{\sqrt{\sum_{\ell=1}^{L-1}(\eta_\ell - \bar{\eta})^2}\sqrt{\sum_{\ell=1}^{L-1}(\mathcal{F}_\ell - \bar{\mathcal{F}})^2}},
\]
where $\bar{\eta}$ and $\bar{\mathcal{F}}$ denote the respective layer-wise averages. We will refer to the quantity $\rho$ as the \textit{global Ricci coefficient}. A negative global Ricci coefficient indicates that the geometric changes induced by the network follow the dynamics predicted by Ricci flow at global scale---large "global curvature" corresponds to contraction, while small "global curvature" corresponds to expansion.

Our approach differs in two key aspects. First, it explicitly leverages the inherently local nature of the Ricci flow, which evolves the Riemannian metric tensor at each point of the manifold according to the local curvature, rather than relying on global approximations. Second, we adopt the more refined notion of Ollivier-Ricci curvature, which comes with consistency guarantees relative to the curvature of the underlying manifold given sufficiently dense samples \citep{van2021ollivier,trillos2023continuum}. In contrast, \citet{baptista2024deep} employ the Forman-Ricci curvature, which cannot capture higher-order structures and is therefore too simplistic to provide a rich geometric characterization.

To propose an early-stopping heuristic, we evaluate the local Ricci evolution coefficients throughout training. The global Ricci coefficient turns out to be too coarse to provide meaningful insights into the learning dynamics. Indeed, even for randomly initialized, untrained networks, the global Ricci coefficient typically takes negative values, suggesting Ricci flow-like behavior. Table~\ref{untrained-global-table} reports the global Ricci coefficients of randomly initialized, untrained networks with $10$ layers across different datasets, averaged over $100$ runs per dataset. For completeness, we also provide the percentage of networks with negative global Ricci coefficient and the minimum observed value.

\begin{table}[t]
\caption{Global Ricci coefficients of untrained neural networks, averaged over 100 independently and randomly initialized models.}
\label{untrained-global-table}
\begin{center}
\begin{tabular}{lcccc}
\multicolumn{1}{c}{} & 
\multicolumn{1}{c}{\bf Syn-I} & 
\multicolumn{1}{c}{\bf Syn-II} & 
\multicolumn{1}{c}{\bf Syn-III} & 
\multicolumn{1}{c}{\bf Syn-IV} 
\\ \hline \\
Mean $\pm$ std.    & $-0.389\pm 0.258$ & $-0.349\pm0.151$ & $-0.231\pm0.193$ & $-0.204\pm0.186$    \\
Minimum             & $-0.772$ & $-0.644$ & $-0.744$ & $-0.577$   \\
Negative share & $91\%$     &   $99\%$   &   $89\%$   & $89\%$       \\
\end{tabular}
\end{center}
\end{table}

\begin{table}[t]
\caption{Mean local Ricci evolution coefficients of untrained neural networks, averaged over 100 independently and randomly initialized models.}
\label{untrained-local-table}
\begin{center}
\begin{tabular}{lcccc}
\multicolumn{1}{c}{} & 
\multicolumn{1}{c}{\bf Syn-I} & 
\multicolumn{1}{c}{\bf Syn-II} & 
\multicolumn{1}{c}{\bf Syn-III} & 
\multicolumn{1}{c}{\bf Syn-IV} 
\\ \hline \\
Mean $\pm$ std.    & $-0.037\pm 0.077$ & $0.039\pm0.052$ & $-0.019\pm0.081$ & $-0.035\pm0.054$    \\
Minimum             & $-0.234$ & $-0.117$ & $-0.173$ & $-0.133$   \\
Negative share & $66\%$     &   $21\%$   &   $63\%$   & $76\%$       \\
\end{tabular}
\end{center}
\end{table}

This phenomenon is consistent with a simple heuristic indicating an inherent negative correlation between $\eta_\ell$ and $\mathcal{F}_\ell$. Specifically, the estimate of the global curvature of the underlying manifold at layer $\ell$ is given by
\[
    \mathcal{F}_\ell = \sum_{e\in E_\ell} \mathcal{F}(e) = 4 |E_\ell| - \sum_{\vx\in \Phi_\ell(X)} \deg(\vx)^2.
\]  
From this expression, $\mathcal{F}_\ell$ takes large negative values in densely connected graphs with many high-degree vertices. Such graphs, however, tend to exhibit smaller pairwise distances, thereby yielding larger values of $\eta_\ell$. As a result, a negative correlation between $\eta_\ell$ and $\mathcal{F}_\ell$ is expected regardless of the specific neural network under consideration.  

In contrast, when examined using the framework of local Ricci evolution coefficients, no systematic correlation is observed. For randomly initialized networks, the local Ricci evolution coefficients remain close to zero (Table~\ref{untrained-local-table}), reflecting the lack of correlation between the expansion of local neighborhoods and the Ollivier-Ricci curvature within those neighborhoods. This underscores the value of local Ricci evolution coefficients for studying learning dynamics: since no Ricci flow-like behavior is present at random initialization, they allow us to track the genuine emergence of curvature-driven dynamics during training.

Finally, note that computing the global Ricci coefficient requires the $k$-nearest-neighbor graphs of each layer to be connected. In practice, however, this condition may not be met, especially for smaller values of $k$. In contrast, an advantage of the local Ricci evolution coefficients is that they can still be computed even when the $k$-nearest-neighbor graphs are disconnected. The only requirement is that each point $\vx$ is connected to its neighbors in the subsequent layer — a significantly weaker condition.

\subsection{Extended background}

\subsubsection{Approximation of Ollivier-Ricci curvature}\label{appendix:Approx_Ollivier}

Computing the Ollivier-Ricci curvature is computationally demanding, since it involves solving an optimal transport problem for every edge in the graph. Using the Hungarian algorithm, each such computation has complexity $O(\deg_{\max}^3)$. However, the computational burden can be alleviated by approximating the Ollivier-Ricci curvature. \citet{tian2025curvature} proposed an approximation by taking the arithmetic mean of an upper and a lower bound, each of which can be efficiently computed in linear time. These bounds were first established by \citet{jost2014ollivier}.

\begin{theorem}[\citet{jost2014ollivier}]\label{Th:Bounds_Ollivier}
    Let $\gG=(V,E)$ be a locally finite graph and let $u,v \in V$ with $u \sim v$. Then, the Ollivier-Ricci curvature is bounded from below by
    \begin{align*}
        \mathcal{O}(u,v) \geq &-\left(1- \frac{1}{\deg(u)} - \frac{1}{\deg(v)} - \frac{|N(u) \cap N(v)|}{\deg(u) \wedge \deg(v)} \right)_+ \\
        &-\left(1- \frac{1}{\deg(u)} - \frac{1}{\deg(v)} - \frac{|N(u) \cap N(v)|}{\deg(u) \vee \deg(v)} \right)_+ + \frac{|N(u)\cap N(v)|}{\deg(u) \vee \deg(v)}.
    \end{align*}
    Furthermore, the Ollivier-Ricci curvature is bounded from above by
    \[
        \mathcal{O}(u,v) \leq \frac{|N(u) \cap N(v)|}{\deg(u) \vee \deg(v)}.
    \]
\end{theorem}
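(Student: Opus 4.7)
The plan is to prove the two bounds separately, exploiting the fact that for an edge $u \sim v$ one has $d(u,v)=1$, so $\mathcal{O}(u,v) = 1 - W_1(\mu_u,\mu_v)$. Consequently, the upper bound on $\mathcal{O}$ reduces to a lower bound on $W_1$ (attacked via a total-variation inequality), while the lower bound on $\mathcal{O}$ reduces to an upper bound on $W_1$ (obtained by exhibiting an explicit coupling). Throughout, I abbreviate $d_u = \deg(u)$, $d_v = \deg(v)$, and $c = |N(u) \cap N(v)|$, and partition the relevant vertex set into five disjoint classes: $\{u\}$, $\{v\}$, $N(u) \cap N(v)$, $N(u) \setminus (N(v) \cup \{v\})$, and $N(v) \setminus (N(u) \cup \{u\})$.

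For the upper bound, the key observation is that in any graph $d(a,b) \geq 1$ whenever $a \neq b$. Hence for any coupling $\pi \in \Pi(\mu_u,\mu_v)$,
\[
\sum_{a,b} d(a,b)\pi(a,b) \;\geq\; \sum_{a \neq b} \pi(a,b) \;=\; 1 - \sum_a \pi(a,a) \;\geq\; 1 - \sum_a \min(\mu_u(a), \mu_v(a)) \;=\; \|\mu_u - \mu_v\|_{TV}.
\]
I would then compute the TV distance class by class. A short calculation (WLOG $d_u \leq d_v$) gives
\[
\|\mu_u - \mu_v\|_{TV} \;=\; \tfrac{1}{2}\Bigl[\tfrac{1}{d_u} + \tfrac{1}{d_v} + c\bigl(\tfrac{1}{d_u} - \tfrac{1}{d_v}\bigr) + \tfrac{d_u-1-c}{d_u} + \tfrac{d_v-1-c}{d_v}\Bigr] \;=\; 1 - \tfrac{c}{d_u \vee d_v},
\]
which immediately yields $W_1(\mu_u,\mu_v) \geq 1 - c/(d_u \vee d_v)$ and the claimed upper bound.

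For the lower bound, I would construct an explicit coupling $\pi$ whose transport cost matches the stated expression. The strategy has three ingredients: (i) at each common neighbor $w \in N(u) \cap N(v)$, leave $1/(d_u \vee d_v)$ units of mass in place, paying zero cost and accounting for the $+c/(d_u \vee d_v)$ term; (ii) transport mass from $v$ (which carries $1/d_u$ under $\mu_u$) to $u$ (which carries $1/d_v$ under $\mu_v$) along the edge $uv$ at unit cost per unit of mass, up to capacity $1/(d_u \vee d_v)$; (iii) route residual surplus between $u$-exclusive and $v$-exclusive neighbors through length-$2$ shortcuts via $u$ and $v$ whenever possible. The two $(\cdot)_+$ corrections appear precisely when this local machinery is exhausted --- that is, when the number of exclusive neighbors to be routed exceeds the capacity available at $u$, $v$, and the common neighbors, quantified by $(1 - 1/d_u - 1/d_v - c/(d_u \wedge d_v))_+$ and $(1 - 1/d_u - 1/d_v - c/(d_u \vee d_v))_+$. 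In those cases the leftover flow must travel at distance $3$ rather than $\leq 2$, and a careful bookkeeping shows that the additional cost is exactly the sum of the two $(\cdot)_+$ terms.

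The main obstacle lies in designing this coupling carefully enough to hit the bound on the nose. One must simultaneously respect several mass-balance constraints: the asymmetry between the $1/d_u$ and $1/d_v$ weights on common neighbors, the matching between the $v$-mass in $\mu_u$ and the $u$-mass in $\mu_v$, and the size mismatch between the $u$-exclusive and $v$-exclusive neighbor sets. I expect the cleanest route is to split into the two sign cases of the $(\cdot)_+$ corrections, define auxiliary residual flows symbolically, verify the marginal constraints class by class, and then read off the total cost. Since each piece moves mass at distance $\leq 3$ and the shorter hops are preferred, the resulting upper bound on $W_1$ translates directly into the lower bound on $\mathcal{O}(u,v)$ stated in the theorem.
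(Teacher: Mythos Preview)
The paper does not supply its own proof of this statement: Theorem~\ref{Th:Bounds_Ollivier} is quoted verbatim from \citet{jost2014ollivier} and used only as input for the curvature approximation $\widetilde{\mathcal{O}}$. So there is no in-paper argument to compare against; the relevant benchmark is the original Jost--Liu proof.

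Your proposal follows exactly the Jost--Liu strategy. The upper bound via $W_1 \ge \|\mu_u-\mu_v\|_{TV}$ is correct and your TV computation checks out line by line. For the lower bound, the idea of exhibiting an explicit coupling that (i) fixes mass on common neighbours, (ii) swaps the $v\leftrightarrow u$ mass along the edge, and (iii) routes exclusive-neighbour mass through length-$2$ paths, falling back to length-$3$ only when capacity is exhausted, is precisely what Jost and Liu do. Your identification of the two $(\cdot)_+$ terms as the cost of the distance-$3$ overflow is the right intuition.

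The only caveat is that your lower-bound sketch stops short of the actual construction: ``careful bookkeeping'' and ``split into the two sign cases'' are where essentially all the work lives, and the Jost--Liu paper spends several pages on exactly this. In particular, the excess mass $c(1/d_u - 1/d_v)$ sitting on common neighbours (from the degree asymmetry) must also be routed, and its interaction with the exclusive-neighbour flows is what makes the case analysis delicate. Your outline is correct, but as written it is a plan rather than a proof; to complete it you would need to write down the coupling $\pi$ explicitly in each regime and verify both marginals and the cost sum.
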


Using these bounds, \citet{tian2025curvature} propose to approximate the Ollivier-Ricci curvature by taking the arithmetic mean, i.e., 
\[
    \widetilde{\mathcal{O}}(u,v) = \frac{1}{2}\left(\mathcal{O}^{up}(u,v) + \mathcal{O}^{low}(u,v)\right),
\]
where $\mathcal{O}^{up}(u,v)$ and $\mathcal{O}^{low}(u,v)$ denote the upper and lower bound established in Theorem~\ref{Th:Bounds_Ollivier}. Note that this approximation can be computed with complexity $O(\deg_{\max})$, which strongly reduces the cost compared to computing the exact Ollivier-Ricci curvature. 

\subsubsection{Forman-Ricci curvature and its augmentations}\label{appendix:forman_curvature}

\citet{forman2003bochner} introduced a discretization of the classical Ricci curvature on CW complexes, derived from a discrete analogue of the Bochner-Weitzenböck formula. Viewing a simple graph as a one-dimensional CW complex, with edges corresponding to one-cells, allows this notion to be applied naturally to graphs. In particular, for a simple, unweighted graph, the Forman-Ricci curvature of an edge $u\sim v$ is defined as
\[
    \mathcal{F}(u,v) = 4 - \deg(u) - \deg(v).
\]
Although this definition is well-founded in Forman's framework and computationally efficient, it is often too simplistic to provide the rich geometric characterization required in many practical and theoretical applications. For example, a key limitation of the Forman-Ricci curvature is that it disregards the number of triangles adjacent to an edge, one of the most elementary and important geometric properties of a graph \citet{jost2014ollivier}.

To address this limitation, augmentations of the Forman-Ricci curvature have been considered \citep{bloch2014combinatorial,samal2018comparative,weber2018coarse}. The core idea is to incorporate additional information about the local geometry by constructing a two-dimensional CW-complex from the graph, inserting two-cells into cycles up to a given length. This approach provides a natural way to capture higher-order correlations among vertices in the network. We augment the Forman-Ricci curvature with all cycles of length three, balancing improved empirical performance in community detection \citep{fesser2024augmentations} with computational tractability. The resulting augmented Forman-Ricci curvature for an edge $u\sim v$ is given by the following combinatorial formula:
\[
    \mathcal{AF}(u,v) = 4 - \deg(u) - \deg(v) + 3|N(u) \cap N(v)| = \mathcal{F}(u,v) + 3|N(u) \cap N(v)|. 
\]
This approximation can be computed in $O(E\deg_{\max})$ time on the whole graph, significantly reducing the cost relative to the computation of Ollivier–Ricci curvature.

\subsubsection{Measures of community strength}\label{appendix:Community_Strength}

Beyond curvature-based measures, the strength of community structure is often assessed using a set of well-established classical metrics. For completeness, we summarize the most widely used ones below. We consider a graph $\gG=(V,E)$, where the vertex set is partitioned into disjoint communities $C_1, \ldots. C_n$, i.e.,
\[
    V = \bigsqcup_{i=1}^n C_i.
\]

\textbf{Modularity.} One of the most prevalent measures for assessing community strength is modularity, first introduced by \citet{newman2004analysis}. It quantifies the density of edges within communities relative to the expected density in a random graph with the same degree distribution. Formally, it is defined by
\[
    Q = \frac{1}{2|E|} \sum_{u,v \in V} \left(A_{uv} - \frac{\deg(u)\deg(v)}{2|E|}\right)\delta(C_u, C_y),
\]
where $\delta(C_u, C_v)$ denotes the Kronecker delta, which equals 1 if $u$ and $v$ belong to the same community and 0 otherwise. Modularity equal to zero indicates that the density of intra-community edges is no greater than what would be expected in a random graph with the same degree distribution. Positive modularity, on the other hand, indicates a higher density of intra-community edges, with values above 0.3 typically reflecting strong community structure. 

\textbf{Normalized Cut.} Another classical approach for assessing the strength of community structure is based on the cut size, i.e., the number of edges crossing between different communities. Since raw cut size tends to favor unbalanced partitions, \citet{shi2000normalized} introduced a normalized variant, defined as 
\[
    \Ncut(C_1, \ldots, C_n) = \frac{1}{2}\sum_{i=1}^n \frac{\cut(C_i)}{\vol(C_i)}, 
\]
where $\cut(C_i) = |\{u\sim v: u\in C_i, v\notin C_i\}|$, and $\vol(C_i) = \sum_{v\in C_i} \deg(v)$.

\begin{figure}[t]
    \includegraphics[scale=0.3]{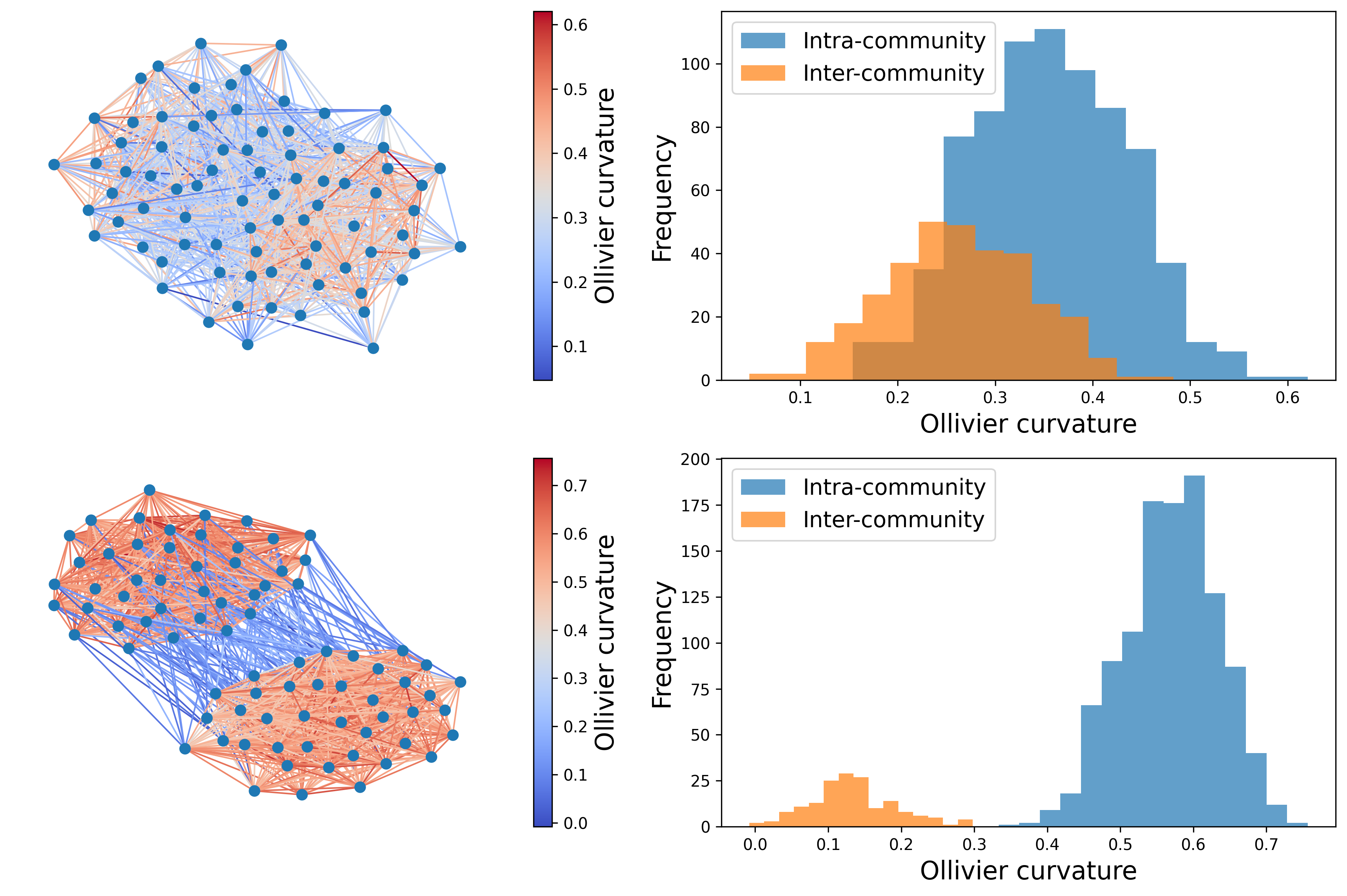}
\centering
\caption{Distribution of Ollivier–Ricci curvature for two stochastic block models. The first row shows weak community structure with two communities of 40 nodes each, intra-community edge probability 0.5, and inter-community edge probability 0.2. The second row shows strong community structure with intra-community edge probability 0.7, and inter-community edge probability 0.1.}
\label{fig:Curvature_Gap_Visualization}
\end{figure}

\textbf{Algebraic connectivity.} There exists a whole field dedicated to the study of graph Laplacians and their spectra, known as spectral graph theory. The eigenvalues and eigenvectors of the graph Laplacian are closely related to community structure, forming the basis of spectral clustering methods. In particular, the second-smallest eigenvalue of the Laplacian, called the algebraic connectivity, reflects how well connected the graph is: it is greater than zero if and only if the graph is connected, and larger values indicate stronger connectivity. For more details, we refer the reader to the comprehensive book by \citet{chung1997spectral}.

\textbf{Curvature Gap.} The neighborhoods of two adjacent vertices tend to be more tightly connected when they belong to the same community. This results in a lower transport cost between their neighborhood distributions and thus higher Ollivier-Ricci curvature. Building on this observation, graphs with community structure exhibit a bimodal distribution of curvature values, reflecting the systematic difference between intra-community and inter-community edges. To quantify this separation, \citet{gosztolai2021unfolding} introduced the curvature gap:
\[
    \Delta \mathcal{O} = \frac{1}{\sigma}\left(\mathcal{O}_{\intra} - \mathcal{O}_{\inter} \right),
\]
where $\mathcal{O}_{\intra}$ denotes the average Ollivier-Ricci curvature of intra-community edges, $\mathcal{O}_{\inter}$ denotes the average Ollivier-Ricci curvature of inter-community edges, and $\sigma = \sqrt{\frac{1}{2}\left(\sigma^2_{\intra} + \sigma^2_{\inter}\right)}$ is the pooled standard deviation. A large curvature gap indicates a significant distinction in local geometry between edges within communities and those connecting different communities. Figure~\ref{fig:Curvature_Gap_Visualization} illustrates this effect for two graphs with different degrees of community strength.

\subsubsection{Ricci flow}\label{Appendix:Ricci_Flow}

The Ricci flow, introduced by \citet{hamilton1982three}, is a second-order nonlinear partial differential equation for the Riemannian metric. Given a smooth Riemannian manifold $M$ with metric $g$, the Ricci flow evolves the metric according to
\begin{equation}\label{eq:RicciFlow}
    \left\{
\begin{aligned}
\frac{\partial}{\partial t} g(t) &= -2\, \mathrm{Ric}(g(t)), \\
g(0) &= g,
\end{aligned}
    \right.
\end{equation}
where $\mathrm{Ric}(g(t))$ denotes the Ricci curvature associated with the time-dependent metric $g(t)$. The constant factor $-2$ is conventional; any negative scalar would yield a qualitatively equivalent evolution under an appropriate time reparametrization.

Hamilton proved the short-time existence of solutions to the Ricci flow for arbitrary smooth initial metrics on compact manifolds.

\begin{theorem}[\citep{hamilton1982three}, Theorem 4.2]
    The Ricci flow introduced in (\ref{eq:RicciFlow}) has a solution for a short time on any compact Riemannian manifold with any initial metric at $t=0$.
\end{theorem}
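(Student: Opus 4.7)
The plan is to establish short-time existence by converting the quasilinear system $\tfrac{\partial}{\partial t} g = -2\,\mathrm{Ric}(g)$, $g(0)=g$ into a strictly parabolic system via DeTurck's trick, apply standard parabolic theory to the modified system, and then pull the solution back by a time-dependent family of diffeomorphisms to recover a solution of the original Ricci flow.

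First, I would analyze the linearization of the Ricci tensor. In local coordinates, $\mathrm{Ric}(g)_{ij}$ can be expanded as $-\tfrac{1}{2} g^{ab}\partial_a\partial_b g_{ij}$ plus lower-order terms that are at most quadratic in first derivatives of $g$. This calculation reveals that the principal symbol of $g \mapsto -2\,\mathrm{Ric}(g)$ is only weakly elliptic: its kernel contains exactly the symbols of Lie derivatives $L_X g$, reflecting the diffeomorphism invariance of the equation. Consequently, standard parabolic existence results do not apply directly.

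Second, I would introduce DeTurck's correction. Fixing the initial metric $\tilde g := g$ as a background, define in local coordinates the vector field $W^k(g) = g^{ab}\bigl(\Gamma^k_{ab}(g) - \tilde\Gamma^k_{ab}(\tilde g)\bigr)$ and consider the modified equation $\tfrac{\partial}{\partial t}\hat g = -2\,\mathrm{Ric}(\hat g) + L_{W(\hat g)}\hat g$ with $\hat g(0)=g$. A direct symbol computation shows that the added Lie derivative term exactly cancels the gauge-degenerate directions in the principal symbol of $-2\,\mathrm{Ric}$, producing a strictly parabolic quasilinear system for $\hat g$. Short-time existence and uniqueness of $\hat g(t)$ on some interval $[0,T)$ then follows from the classical theory of quasilinear parabolic systems on compact manifolds.

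Third, I would recover a Ricci flow solution by gauging away $W$. Since $M$ is compact and $W(\hat g(t))$ depends smoothly on $t$, the time-dependent ODE $\tfrac{d}{dt}\phi_t = -W(\hat g(t))\circ \phi_t$ with $\phi_0 = \mathrm{id}$ has a unique smooth family of diffeomorphisms as solution on the same interval $[0,T)$. Setting $g(t) := \phi_t^*\,\hat g(t)$ and differentiating yields $\tfrac{\partial}{\partial t} g(t) = \phi_t^*\bigl(\tfrac{\partial}{\partial t}\hat g - L_{W(\hat g)}\hat g\bigr) = -2\,\phi_t^* \mathrm{Ric}(\hat g) = -2\,\mathrm{Ric}(g(t))$ with $g(0)=g$, which is the desired Ricci flow solution.

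The main obstacle is the symbol computation showing that DeTurck's term precisely cancels the non-elliptic part of the linearized Ricci operator; this is the key technical step that transforms a degenerate system into a genuinely parabolic one and thereby unlocks the standard existence theory. A secondary, but more delicate, point is uniqueness of the Ricci flow solution itself: because the reduction passes through a gauge-dependent construction, uniqueness requires reversing the DeTurck trick via harmonic map heat flow (Hamilton's original argument relied instead on a Nash–Moser implicit function argument, which is technically heavier but avoids this detour).
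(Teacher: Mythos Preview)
Your proposal is mathematically sound and follows the now-standard DeTurck approach. The paper, however, does not actually prove this theorem: it is cited as Theorem~4.2 of \citet{hamilton1982three}, and the paper only remarks that ``the prove is based on the Nash--Moser implicit function theorem and also ensures the uniqueness of a short-time solution.'' So there is no paper proof to compare against in detail---only a one-line pointer to Hamilton's original method.

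That said, your approach and Hamilton's are genuinely different, and you yourself flag this in your final paragraph. Hamilton's 1982 argument treats the weakly parabolic Ricci flow directly and invokes the Nash--Moser implicit function theorem to overcome the loss of derivatives caused by the degenerate principal symbol; this is technically heavy but yields existence and uniqueness in one stroke. DeTurck's trick, which you outline, breaks the diffeomorphism gauge by adding the Lie derivative term $L_{W(\hat g)}\hat g$, converting the system into a strictly parabolic one for which classical quasilinear parabolic theory applies; the Ricci flow is then recovered by pulling back along the flow of $-W$. The DeTurck route is far more elementary for existence, at the cost of making uniqueness a separate argument (typically via harmonic map heat flow), exactly as you note. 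Both approaches are correct; yours is the one found in most modern textbooks.
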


The prove is based on the Nash-Moser implicit function theorem and also ensures the uniqueness of a short-time solution. Furthermore, Hamilton established the long-time existence theorem, which guarantees the existence and uniqueness of a solution as long as the curvature remains bounded.

\begin{theorem}[\citep{hamilton1982three}, Theorem 14.1]
    The Ricci flow introduced in (\ref{eq:RicciFlow}) has a unique solution on a maximal time interval $[0,T)$ with $T\leq \infty$ for any compact Riemannian manifold $M$ with any initial metric at $t=0$. If $T< \infty$, then 
    \[
    \sup_{x\in M}|\mathrm{Rm}(g(t))|(x) \to \infty
    \]
    as $t\to T$, where $|\mathrm{Rm}(g(t))|$ denotes the norm of the Riemannian curvature tensor associated with the metric $g(t)$.
\end{theorem}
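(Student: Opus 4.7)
The plan is to prove this as a continuation result building on the short-time existence theorem (Theorem 4.2). I would define
\[
T = \sup\{t \geq 0 : \text{a smooth solution to (\ref{eq:RicciFlow}) exists on } [0,t]\},
\]
which satisfies $T>0$ by short-time existence, and note that uniqueness on all of $[0,T)$ follows by gluing the local uniqueness statements together. The heart of the argument is the contrapositive of the blow-up claim: if $T<\infty$ and $|\mathrm{Rm}(g(t))|$ remains uniformly bounded by some constant $K$ on $M\times[0,T)$, then the flow can be extended past $T$, contradicting the maximality of $T$.

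Under the bounded-curvature assumption, the first step is to show that the metrics $g(t)$ remain uniformly equivalent. Since $\partial_t g=-2\,\mathrm{Ric}$ and $|\mathrm{Ric}|_{g(t)}$ is controlled by $|\mathrm{Rm}|\le K$, integrating the pointwise ODE $\tfrac{d}{dt}g(t)(v,v)=-2\,\mathrm{Ric}(v,v)$ along each tangent vector $v$ yields two-sided bounds of the form $e^{-CKt}\,g(0)\le g(t)\le e^{CKt}\,g(0)$ on $[0,T)$. Combined with the uniform bound on $\partial_t g$, this shows $g(t)$ is Cauchy in $C^0$ as $t\to T$ and so extends continuously to a limiting metric $g(T)$ on $M$.

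Next I would upgrade the $C^0$ convergence to smooth convergence using Bernstein--Bando--Shi interior derivative estimates. From the evolution equation $\partial_t\mathrm{Rm}=\Delta\mathrm{Rm}+Q(\mathrm{Rm})$, where $Q$ is quadratic in $\mathrm{Rm}$, and its covariant derivatives, one obtains uniform bounds $\sup_M|\nabla^k\mathrm{Rm}|\le C_k(K,\tau)$ on $[\tau,T)$ for every $k\ge 0$ and every $\tau>0$. These are proved by applying the maximum principle to carefully chosen auxiliary functions of the form $f_k=t\,|\nabla^k\mathrm{Rm}|^2+A_k|\nabla^{k-1}\mathrm{Rm}|^2$ and bootstrapping in $k$ to absorb the lower-order error terms. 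Together with $C^0$-equivalence of the metrics, these derivative bounds imply that $g(t)\to g(T)$ in $C^\infty(M)$ as $t\to T$.

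Finally, applying Theorem 4.2 with initial data $g(T)$ produces a smooth Ricci flow on some interval $[T,T+\varepsilon]$, and by uniqueness it glues with the original solution to yield a smooth flow on $[0,T+\varepsilon]$, contradicting the definition of $T$. The principal obstacle is the Shi derivative-of-curvature step: designing the correct auxiliary functions, controlling the commutators $[\nabla,\Delta]$ and $[\nabla,\partial_t]$ acting on $\mathrm{Rm}$, and running the induction on $k$ so that each maximum-principle estimate closes, requires the most technical care; the uniform equivalence of metrics and the compactness-based final extension are comparatively routine once these estimates are in hand.
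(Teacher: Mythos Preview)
The paper does not supply its own proof of this theorem: it is quoted verbatim as background from \citet{hamilton1982three} in the appendix on Ricci flow, with no argument given. There is therefore nothing in the paper to compare your proposal against.

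That said, your sketch is the standard modern route to this result and is correct in outline: define $T$ as the supremum of existence times, assume a uniform curvature bound on $[0,T)$ with $T<\infty$, use the ODE $\partial_t g=-2\,\mathrm{Ric}$ to get uniform $C^0$-equivalence of the metrics, invoke the Bernstein--Bando--Shi derivative estimates to bound $|\nabla^k\mathrm{Rm}|$ uniformly, pass to a smooth limit $g(T)$, and restart the flow from $g(T)$ via short-time existence to contradict maximality. One historical remark worth knowing: Hamilton's original 1982 argument predates Shi's 1989 local derivative estimates, so his proof of Theorem~14.1 obtained the higher-order bounds by a somewhat different (and more hands-on) maximum-principle computation tailored to the compact setting; the Shi-estimate version you outline is the cleaner textbook presentation now found in, e.g., Chow--Knopf or Topping's lecture notes, and is entirely adequate here.
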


A variety of discrete Ricci flow formulations on graphs have been developed, based on the idea that negatively curved regions expand while positively curved ones contract. Although no canonical version exists, many build on this intuition. \citet{ollivier2010survey} introduced discrete Ricci flow using Ollivier-Ricci curvature. Later work established convergence and uniqueness results, such as \citet{li2024convergence} for discrete-time flows, and \citet{bai2020ollivier} for continuous-time flows on weighted graphs. Other flows based on different curvature notions include the Bakry-Émery flow \citep{cushing2025bakry} and Forman-Ricci flow \citep{weber2017characterizing}. Additionally, \citet{erbar2020super} introduce a concept of super Ricci flow for weighted graphs.

\subsection{Deferred proof details}

In this section, we provide the deferred proofs for the theoretical results stated in Section~\ref{Sec:Theoretical_results}. 

\subsubsection{Random initialization}\label{appendix:Proofs_Random_Init}
To derive lower bounds on the network width that ensure the preservation of graph structures under random initialization, we build upon the Johnson–Lindenstrauss Lemma.

\begin{theorem}[Johnson-Lindenstrauss Lemma, \citep{johnson1984extensions}]
    Let $\vx \in \R^n$ and let $\mA \in \R^{m \times n}$ be a random matrix with i.i.d. entries $ \emA_{ij} \sim \mathcal{N}(0,1)$. Then, for $0 < \epsilon < 1$, we have
    \[
        \mathbb{P}\left((1-\epsilon)\lVert \vx\rVert^2 \leq \left\lVert \frac{1}{\sqrt{m}} \mA\vx\right\rVert^2 \leq (1+ \epsilon)\lVert \vx \rVert^2\right) \geq 1 - 2\exp\left(\frac{m}{4} (\epsilon^3 - \epsilon^2) \right).
    \]
\end{theorem}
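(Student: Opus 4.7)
The plan is to reduce the statement to a concentration inequality for a chi-squared random variable and then apply a standard Chernoff-type bound, carefully tracking constants so that the final exponent matches $\tfrac{m}{4}(\epsilon^3 - \epsilon^2)$.

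\textbf{Step 1: Reduction by rotational invariance.} Without loss of generality assume $\vx \neq \vzero$, and observe that each row of $\mA$ is a standard Gaussian vector in $\R^n$, so $\mA\vx$ is a Gaussian vector in $\R^m$ with covariance $\lVert \vx \rVert^2 \mI_m$. Hence $\lVert \mA \vx \rVert^2 / \lVert \vx \rVert^2$ has the same distribution as $S := \sum_{i=1}^m Z_i^2$ with $Z_i$ i.i.d. $\mathcal{N}(0,1)$, i.e.\ a $\chi^2_m$ variable. The inequality to prove therefore becomes
\[
\mathbb{P}\!\left( (1-\epsilon) m \leq S \leq (1+\epsilon) m \right) \geq 1 - 2\exp\!\left( \tfrac{m}{4}(\epsilon^3 - \epsilon^2) \right).
\]

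\textbf{Step 2: Chernoff bounds on each tail.} I would use the moment-generating function $\mathbb{E}[e^{t Z_i^2}] = (1-2t)^{-1/2}$ for $t < 1/2$. For the upper tail, Markov's inequality applied to $e^{tS}$ at the optimal choice $t = \epsilon/(2(1+\epsilon))$ yields
\[
\mathbb{P}(S \geq (1+\epsilon)m) \leq \bigl( (1+\epsilon)\, e^{-\epsilon} \bigr)^{m/2}.
\]
Analogously, for the lower tail, Markov applied to $e^{-tS}$ at $t = \epsilon/(2(1-\epsilon))$ gives
\[
\mathbb{P}(S \leq (1-\epsilon)m) \leq \bigl( (1-\epsilon)\, e^{\epsilon} \bigr)^{m/2}.
\]

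\textbf{Step 3: Matching the prescribed exponent.} The remaining task is to show that both bounds are controlled by $\exp(\tfrac{m}{4}(\epsilon^3 - \epsilon^2))$. Taking logs, this reduces to the two elementary inequalities
\[
\log(1+\epsilon) - \epsilon \;\leq\; \tfrac{1}{2}(\epsilon^3 - \epsilon^2), \qquad \log(1-\epsilon) + \epsilon \;\leq\; \tfrac{1}{2}(\epsilon^3 - \epsilon^2),
\]
valid for $0 < \epsilon < 1$. The first follows from the alternating Taylor expansion $\log(1+\epsilon) \leq \epsilon - \tfrac{\epsilon^2}{2} + \tfrac{\epsilon^3}{3}$ together with $\tfrac{\epsilon^3}{3} \leq \tfrac{\epsilon^3}{2}$. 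The second follows even more directly: $\log(1-\epsilon) + \epsilon = -\sum_{k\geq 2} \epsilon^k/k \leq -\tfrac{\epsilon^2}{2}$, and $-\tfrac{\epsilon^2}{2} \leq \tfrac{1}{2}(\epsilon^3 - \epsilon^2)$ is equivalent to $0 \leq \epsilon^3$.

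\textbf{Step 4: Union bound.} Combining the two tail estimates gives
\[
\mathbb{P}\bigl( |S - m| > \epsilon m \bigr) \leq 2\exp\!\left( \tfrac{m}{4}(\epsilon^3 - \epsilon^2) \right),
\]
and re-scaling by $\lVert \vx \rVert^2/m$ yields the claim. The only real technical point is the optimization of the Chernoff exponent and the Taylor-series comparison in Step 3; both are routine but must be done with care to produce exactly the cubic-minus-quadratic form stated in the lemma rather than the looser $O(\epsilon^2)$ bound that a crude second-order expansion would give.
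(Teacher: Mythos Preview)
Your proposal is correct and follows essentially the same approach as the paper: reduce to a $\chi^2_m$ concentration statement via rotational invariance, then invoke chi-squared tail bounds. In fact you supply considerably more detail than the paper, which simply writes ``using standard concentration inequalities for the chi-squared distribution'' at the point where you carry out the Chernoff optimization and the Taylor comparison in your Steps~2--3.
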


\begin{proof}
    Let $\vx \in \R^n$ be arbitrary. First, observe that the entries of $\mA\vx$ are normally distributed, as the sum of independent, normally distributed random variables. Furthermore, we have 
    \[
        \mathbb{E}\left[(\mA\vx)_i\right] = \mathbb{E}\left[\sum_{j=1}^n \emA_{ij}\evx_j\right] = \sum_{j=1}^n\mathbb{E}\left[ \emA_{ij}\right]\evx_j = 0,
    \]
    and
    \begin{align*}
        \mathbb{V}[(\mA\vx)_i] &=\mathbb{E}\left[((\mA\vx)_i)^2\right]  - \mathbb{E}\left[((\mA\vx)_i)\right]^2= \mathbb{E}\left[\left(\sum_{j=1}^n \emA_{ij}\evx_j\right)^2\right]\\
        &= \mathbb{E}\left[\sum_{k,j=1}^n \emA_{ij}\emA_{ik}\evx_j\evx_k\right]= \sum_{k,j=1}^n \mathbb{E}\left[\emA_{ij}\emA_{ik}\right]\evx_j\evx_k= \sum_{j=1}^n\evx_j^2 = \lVert \vx\rVert^2.      
    \end{align*}
    Hence, the random variables 
    \[
        X_i = \frac{(\mA\vx)_i}{\lVert \vx \rVert}
    \]
    are i.i.d. with $X_i \sim \mathcal{N}(0,1)$. Therefore, we obtain
    \[
        \mathbb{P}\left(\left\lVert \frac{1}{\sqrt{m}}\mA\vx\right\rVert^2 > (1+ \epsilon) \lVert \vx\rVert^2\right)  =  \mathbb{P}\left(\left\lVert \frac{\mA\vx}{\Vert \vx \rVert}\right\rVert^2 > (1+ \epsilon)m\right) = \mathbb{P}\left(\sum_{i=1}^m X_i^2 > (1+ \epsilon)m\right),
    \]
    where $\sum_{i=1}^m X_m^2$ is distributed according to the chi-squared distribution with $m$ degrees of freedom. Using standard concentration inequalities for the chi-squared distribution, we obtain
    \[
        \mathbb{P}\left(\left\lVert \frac{1}{\sqrt{m}}\mA\vx\right\rVert^2 > (1+ \epsilon) \lVert \vx\rVert^2\right) \leq e^{\frac{m}{4}(\epsilon^3 - \epsilon^2)}.
    \]
    Analogously, one can prove that
    \[
         \mathbb{P}\left(\left\lVert \frac{1}{\sqrt{m}}\mA\vx\right\rVert^2 < (1-\epsilon) \lVert \vx\rVert^2\right) \leq e^{\frac{m}{4}(\epsilon^3 - \epsilon^2)}.
    \]
    This concludes the proof.
\end{proof}

Using Boole's inequality, we immediately obtain the following corollary.

\begin{corollary}\label{Cor:Point_Cloud}
    Let $X\subset \R^n$ be a finite set, and let $\mA \in \R^{m\times n}$ be a random matrix with i.i.d. entries $\emA_{ij}\sim \mathcal{N}(0,1/m)$. Then, for $0 < \epsilon < 1$, we have
    \[
        \mathbb{P}\left((1-\epsilon)\lVert \vx- \vy\rVert^2 \leq \lVert \mA\vx -\mA\vy\rVert^2 \leq (1+\epsilon)\lVert\vx -\vy\rVert^2: \forall \vx, \vy \in X\right) \geq 1-\delta,
    \]
    where
    \[
        \delta = |X|(|X|-1)\exp\left(\frac{m}{4}(\epsilon^3 - \epsilon^2)\right).
    \]
\end{corollary}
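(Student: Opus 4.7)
The plan is entirely straightforward: apply the Johnson--Lindenstrauss Lemma (stated just above) pointwise to each difference vector $\vx-\vy$ with $\vx,\vy\in X$, and then close out with Boole's inequality. This is precisely the ``using Boole's inequality'' appeal the authors flag before the statement.

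First, I would use linearity to rewrite $\mA\vx-\mA\vy = \mA(\vx-\vy)$, so that the pairwise distance claim reduces to a norm-preservation claim about a single vector. The only subtlety is matching the normalization: the entries here are $\emA_{ij}\sim\mathcal{N}(0,1/m)$, so $\mA$ equals $(1/\sqrt{m})\tilde{\mA}$ with $\tilde{\mA}_{ij}\sim\mathcal{N}(0,1)$, which is exactly the regime in which the JL lemma was stated. Applying the lemma to $\vu=\vx-\vy$ therefore yields, for each fixed pair,
$$\mathbb{P}\Bigl((1-\epsilon)\lVert \vu\rVert^2 \leq \lVert \mA\vu\rVert^2 \leq (1+\epsilon)\lVert \vu\rVert^2\Bigr) \geq 1-2\exp\!\left(\tfrac{m}{4}(\epsilon^3-\epsilon^2)\right).$$

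Second, I would take a union bound over all unordered pairs $\{\vx,\vy\}\subseteq X$. The diagonal case $\vx=\vy$ is trivial (both sides of the inequality vanish), and swapping $\vx,\vy$ defines the same event, so it suffices to control the $\binom{|X|}{2}=|X|(|X|-1)/2$ unordered pairs. Boole's inequality then bounds the total failure probability by
$$\frac{|X|(|X|-1)}{2}\cdot 2\exp\!\left(\tfrac{m}{4}(\epsilon^3-\epsilon^2)\right)=|X|(|X|-1)\exp\!\left(\tfrac{m}{4}(\epsilon^3-\epsilon^2)\right)=\delta,$$
so the complementary event, on which every pairwise distance is preserved up to a factor of $1\pm\epsilon$, has probability at least $1-\delta$, as claimed. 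There is no genuine obstacle in the argument: the factor of $2$ in the JL tail bound exactly cancels the $1/2$ from $\binom{|X|}{2}$, and the only item worth a line of verification is the variance normalization mentioned above. The lone ``design'' decision is the choice to union-bound over unordered rather than ordered pairs, which gives the tightest constant consistent with the stated $\delta$.
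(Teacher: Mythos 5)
Your proof is correct and is exactly the argument the paper intends: the paper states the corollary follows from the Johnson--Lindenstrauss Lemma ``using Boole's inequality,'' and your union bound over the $|X|(|X|-1)/2$ unordered pairs of difference vectors, with the factor of $2$ in the JL tail cancelling the $1/2$, is that argument spelled out. The normalization check (matrix entries $\mathcal{N}(0,1/m)$ matching the $\frac{1}{\sqrt{m}}$ scaling in the lemma) is the right detail to flag, and you handle it correctly.
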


We are now prepared to prove Theorem~\ref{th:kNN_randomly_init}.

\begin{theoremRandomInit}
     Let $X\subset \R^n$ be a finite set, and assume there exists $0< \epsilon <1$ such that
    \[
        \min_{\substack{Y\subset X\setminus\{x\}\\ |Y|=k}} \max_{y\in Y} \lVert \vx-\vy \rVert^2 \leq \frac{1-\epsilon}{1+\epsilon}\min_{\substack{Y\subset X\setminus\{x\}\\ |Y|=k+1}} \max_{y\in Y} \lVert \vx-\vy \rVert^2 \quad \forall\vx\in X.
    \]
    Furthermore, let $\mA\in \R^{m\times n}$ be a random matrix with i.i.d. entries $\emA_{ij} \sim \mathcal{N}(0,1/m)$. Then, the map 
    \[
        \psi: X \to \mA X:=\{\mA\vx: \vx\in X\}; \quad \psi(\vx) =\mA\vx
    \]
    is a graph isomorphism between $\gG_k(X)$ and  $\gG_k(\mA X)$ with probability bounded from below
    \[
    \mathbb{P}\left(\gG_k(X) \cong\gG_k(\mA X) \; \text{under} \; \psi\right) \geq 1 - |X|(|X|-1)e^{\frac{m}{4}(\epsilon^3 - \epsilon^2)}.
    \]  
\end{theoremRandomInit}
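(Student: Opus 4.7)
The plan is to reduce the statement to a single high-probability event supplied by Corollary~\ref{Cor:Point_Cloud} and then argue, deterministically on that event, that the gap condition forces the $k$-nearest-neighbor relation to be preserved by $\psi$. Concretely, I would first invoke Corollary~\ref{Cor:Point_Cloud} with the same $\epsilon$ as in the hypothesis to obtain an event $\Omega_\epsilon$ on which
\[
(1-\epsilon)\lVert \vx-\vy\rVert^2 \;\leq\; \lVert \mA\vx - \mA\vy\rVert^2 \;\leq\; (1+\epsilon)\lVert \vx-\vy\rVert^2
\]
for every pair $\vx,\vy \in X$, with $\mathbb{P}(\Omega_\epsilon) \geq 1 - |X|(|X|-1)\exp\!\bigl(m(\epsilon^3-\epsilon^2)/4\bigr)$. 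It then suffices to show that on $\Omega_\epsilon$ the map $\psi$ is a graph isomorphism. Bijectivity onto $\mA X$ is immediate: injectivity follows from the lower bound (for $\vx \neq \vy$, $\lVert \mA\vx-\mA\vy\rVert^2 \geq (1-\epsilon)\lVert \vx-\vy\rVert^2 > 0$), and surjectivity is by construction.

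The core of the proof is adjacency preservation. Fix $\vx \in X$ and set
\[
r_k^2(\vx) := \min_{\substack{Y\subset X\setminus\{\vx\}\\|Y|=k}} \max_{\vy\in Y}\lVert \vx-\vy\rVert^2, \qquad r_{k+1}^2(\vx) := \min_{\substack{Y\subset X\setminus\{\vx\}\\|Y|=k+1}} \max_{\vy\in Y}\lVert \vx-\vy\rVert^2,
\]
and let $Y^\star(\vx)$ be any minimizer of $r_k^2(\vx)$, i.e.\ the set of $k$ nearest neighbors of $\vx$ in $X$. On $\Omega_\epsilon$, any $\vy\in Y^\star(\vx)$ satisfies $\lVert \mA\vx-\mA\vy\rVert^2 \leq (1+\epsilon)\, r_k^2(\vx)$, while any $\vy\in X\setminus(Y^\star(\vx)\cup\{\vx\})$ satisfies $\lVert \vx-\vy\rVert^2 \geq r_{k+1}^2(\vx)$ and therefore $\lVert \mA\vx-\mA\vy\rVert^2 \geq (1-\epsilon)\,r_{k+1}^2(\vx)$. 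The hypothesis $(1+\epsilon)\,r_k^2(\vx) \leq (1-\epsilon)\,r_{k+1}^2(\vx)$ thus implies that every element of $\mA Y^\star(\vx)$ is strictly closer to $\mA\vx$ in $\mA X$ than every element outside $\mA Y^\star(\vx)\cup\{\mA\vx\}$. Consequently $\mA Y^\star(\vx)$ is exactly the set of $k$ nearest neighbors of $\mA\vx$ in $\mA X$. Applying this at every vertex and using that the $k$-NN graph symmetrizes the neighbor relation ($(u,v)\in E$ iff $u\in N_k(v)$ or $v\in N_k(u)$), we conclude that $\psi$ preserves edges in both directions, hence is a graph isomorphism on $\Omega_\epsilon$. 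Taking complements yields the claimed probability bound.

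The only genuinely delicate point is keeping the multiplicative JL distortion and the additive/multiplicative gap hypothesis aligned: the ratio $(1-\epsilon)/(1+\epsilon)$ in the assumption is exactly what is needed so that the worst-case expansion of an in-neighbor distance still falls below the worst-case contraction of an out-neighbor distance. Once this bookkeeping is written out, the rest of the argument is routine. A minor subtlety is that the min-max formulation does not require the $k$-NN set to be unique; any minimizer $Y^\star(\vx)$ works, since the strict separation on $\Omega_\epsilon$ forces the image set $\mA Y^\star(\vx)$ to be the unique $k$-NN set of $\mA\vx$ in $\mA X$.
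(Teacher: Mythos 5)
Your proof is correct and follows essentially the same route as the paper: invoke Corollary~\ref{Cor:Point_Cloud} to obtain the Johnson--Lindenstrauss distortion event, then argue deterministically that the gap hypothesis $(1+\epsilon)r_k^2(\vx)\le(1-\epsilon)r_{k+1}^2(\vx)$ forces the $k$-NN relation to be preserved. The paper phrases the deterministic step as two contradiction arguments (one per edge direction) whereas you show directly that the image $\mA Y^\star(\vx)$ is again the $k$-NN set and then appeal to the symmetric edge rule, but this is a presentational variant of the same core argument and yields the same probability bound.
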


\begin{remark}
    To bound the probability of error by $\delta$, i.e.,
    \[
        \mathbb{P}\left(\gG_k(X) \not\cong\gG_k(AX) \; \text{under} \; \psi\right) \leq \delta,
    \]
    we have to choose the width of the network
    \[
        m \geq \frac{4 (\log(|X|(|X|-1)) - \log(\delta))}{\epsilon^2 - \epsilon^3}.
    \]
\end{remark}

\begin{proof}
     We first prove that $\psi$ is a graph isomorphism, if
    \begin{equation}\label{eq:bounds_knn}
        (1-\epsilon)\lVert \vx-\vy\rVert^2 \leq \lVert \mA\vx - \mA\vy \rVert^2 \leq (1+\epsilon)\lVert \vx-\vy\rVert^2 \quad \forall \vx,\vy \in X. 
    \end{equation}
    Let $\vx, \vy \in X$ such that $\vx \sim \vy$ in $\gG_k(X)$. Without loss of generality, we may assume that $\vy$ is among the $k$-nearest neighbors of $\vx$. We claim that $\mA\vy$ is among the $k$-nearest neighbors of $\mA\vx$. Assume for contradiction that this is not the case. Hence, there exists a $\vz\in X$, which is not among the $k$-nearest neighbors of $\vx$, such that
    \[
        \lVert \mA \vx - \mA \vz\rVert < \lVert \mA \vx -\mA \vy\rVert.
    \]
    This contradicts our assumption, since
    \[
        \Vert \mA\vx - \mA\vy\rVert^2 \leq (1+\epsilon)\lVert \vx-\vy\rVert^2 \leq (1-\epsilon)\lVert \vx-\vz\rVert^2 \leq \lVert \mA\vx - \mA\vz\rVert^2,
    \]
    where we applied our assumption on $\epsilon$ to obtain the second inequality.
    Therefore, our assumption is false, implying that $\mA\vy$ belongs to the $k$-nearest neighbors of $\mA\vx$ and therefore $\mA\vx \sim \mA\vy$ in $\gG_k(\mA X)$. 
    
    Conversely, let $\mA\vx \sim \mA \vy$ be an arbitrary edge in $\gG_k(\mA X)$, and assume without loss of generality that $\mA\vy$ is among the $k$-nearest neighbors of $\mA\vx$. It remains to show that $\vx \sim \vy$ in $\gG_k(X)$. Assume for contradiction that this is not the case. Hence, there exists $\vz\in X$ among the $k$-nearest neighbors of $\vx$ such that 
    \[
        \lVert \mA \vx - \mA \vz\rVert > \lVert \mA \vx -\mA \vy\rVert.
    \]
    This contradicts our assumption, since
    \[
        \Vert \mA\vx - \mA\vz\rVert^2 \leq (1+\epsilon)\lVert \vx-\vz\rVert^2 \leq (1-\epsilon)\lVert \vx-\vy\rVert^2 \leq \lVert \mA\vx - \mA\vy\rVert^2,
    \]
    where we again applied our assumption on $\epsilon$ to obtain the second inequality. Thus, the assumption is contradicted, and $\vx \sim \vy$ in $\gG_k(X)$ must hold.

    This concludes the proof that the map $\psi$ is a graph isomorphism, assuming that condition (\ref{eq:bounds_knn}) holds. By Corollary~\ref{Cor:Point_Cloud}, the probability for this is bounded from below by
    \[
         1- |X|(|X|-1) \exp \left(\frac{m}{4}(\epsilon^3 - \epsilon^2)\right).
    \]
    This concludes the proof.
\end{proof}

We can prove a similar result for $r$-neighborhood graphs.

\begin{theorem}\label{th:Random_Geometric_Graphs}
    Let $X\subset \mathbb{R}^n$ be a finite set, and denote by $N(\vx)$ the one-hop neighborhood of $\vx$ in $G_r(X)$. Choose $0<\epsilon <1$ such that 
    \[
        \epsilon < \min\left\{\frac{r^2 - \max_{\vy\in N(\vx)}\lVert \vx-\vy\rVert^2}{\max_{\vy\in N(\vx)}\lVert \vx-\vy\rVert^2}, \frac{\min_{\vy\not\in N(\vx)}\lVert \vx-\vy\rVert^2 - r^2}{\min_{\vy\not\in N(\vx)}\lVert \vx-\vy\rVert^2 }\right\} \quad \forall \vx\in X.
    \]
    Furthermore, let $\mA\in \mathbb{R}^{m\times n}$ be a random matrix with i.i.d. entries $\emA_{ij} \sim \mathcal{N}(0,1/m)$. Then, the map
    \[  
        \psi: X \to \mA X:= \{\mA\vx: \vx\in X\}; \quad \psi(\vx) = \mA\vx
    \]
    is a graph isomorphism between $G_r(X)$ and $G_r(AX)$ with probability bounded from below by 
    \[
    \mathbb{P}\left(G_r(X) \cong G_r(\mA X) \; \text{under} \; \psi\right) \geq 1 - |X|(|X|-1)e^{\frac{m}{4}(\epsilon^3 - \epsilon^2)}.
    \]
\end{theorem}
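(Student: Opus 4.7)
The strategy is to parallel the proof of Theorem~\ref{th:kNN_randomly_init}, replacing the ordering-based argument for $k$-nearest neighbors with a threshold-based argument tailored to the radius $r$. The probabilistic ingredient is exactly the same: Corollary~\ref{Cor:Point_Cloud} guarantees that, with probability at least $1 - |X|(|X|-1)\exp\!\bigl(\tfrac{m}{4}(\epsilon^3 - \epsilon^2)\bigr)$, the random matrix $\mA$ acts as an $(1\pm \epsilon)$ approximate isometry on the squared distances of all pairs in $X$. The remainder of the argument is deterministic: assuming this event holds, I will show that the stated assumption on $\epsilon$ forces $\psi$ to preserve both edges and non-edges.

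First, I would rewrite the hypothesis on $\epsilon$ as two cleaner inequalities. The bound $\epsilon < \bigl(r^2 - \max_{\vy \in N(\vx)}\lVert \vx-\vy\rVert^2\bigr)/\max_{\vy \in N(\vx)}\lVert \vx-\vy\rVert^2$ is equivalent to $(1+\epsilon)\max_{\vy \in N(\vx)}\lVert \vx-\vy\rVert^2 < r^2$, and the bound $\epsilon < \bigl(\min_{\vy\notin N(\vx)}\lVert \vx-\vy\rVert^2 - r^2\bigr)/\min_{\vy\notin N(\vx)}\lVert \vx-\vy\rVert^2$ is equivalent to $(1-\epsilon)\min_{\vy\notin N(\vx)}\lVert \vx-\vy\rVert^2 > r^2$. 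These two inequalities, which hold for every $\vx \in X$, are the exact analogues of the gap condition used for the $k$-nearest neighbor case.

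Conditional on the JL event, I verify both directions of the isomorphism. For edge preservation, take any $\vx \sim \vy$ in $G_r(X)$, so that $\vy \in N(\vx)$. Then $\lVert \mA\vx - \mA\vy\rVert^2 \leq (1+\epsilon)\lVert \vx-\vy\rVert^2 \leq (1+\epsilon)\max_{\vz \in N(\vx)}\lVert \vx-\vz\rVert^2 < r^2$, so $\mA\vx \sim \mA\vy$ in $G_r(\mA X)$. For the reverse direction, take any $\vx, \vy$ with $\vx \not\sim \vy$ in $G_r(X)$, i.e., $\vy \notin N(\vx)$; then $\lVert \mA\vx - \mA\vy\rVert^2 \geq (1-\epsilon)\lVert \vx-\vy\rVert^2 \geq (1-\epsilon)\min_{\vz \notin N(\vx)}\lVert \vx-\vz\rVert^2 > r^2$, so $\mA\vx \not\sim \mA\vy$ in $G_r(\mA X)$. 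Since $\psi$ is clearly a bijection onto $\mA X$ (again on the JL event, as distinct points map to distinct points), this establishes that $\psi$ is a graph isomorphism.

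I do not anticipate any real obstacle: the randomness is already packaged in Corollary~\ref{Cor:Point_Cloud}, and the only care needed is to see that the two halves of the hypothesis on $\epsilon$ correspond precisely to the upper JL bound on edges and the lower JL bound on non-edges. A minor subtlety is that the hypothesis as stated implicitly requires $N(\vx)$ to be neither empty nor all of $X\setminus\{\vx\}$, but these degenerate cases are handled by vacuously dropping the corresponding inequality. Putting the deterministic consequence together with the probability bound from Corollary~\ref{Cor:Point_Cloud} yields the claimed lower bound on $\mathbb{P}\bigl(G_r(X) \cong G_r(\mA X)\text{ under } \psi\bigr)$.
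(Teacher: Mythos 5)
Your proof is correct and follows essentially the same approach as the paper: both condition on the event from Corollary~\ref{Cor:Point_Cloud}, use the first half of the $\epsilon$-hypothesis to carry edges forward, the second half to control non-edges, and then invoke the corollary's probability bound. The only (purely stylistic) difference is that for the converse direction the paper argues by contradiction from an edge $\mA\vx\sim\mA\vy$, whereas you prove the logically equivalent contrapositive directly; your explicit reformulation of the $\epsilon$-condition as $(1+\epsilon)\max_{\vy\in N(\vx)}\lVert\vx-\vy\rVert^2 < r^2$ and $(1-\epsilon)\min_{\vy\notin N(\vx)}\lVert\vx-\vy\rVert^2 > r^2$, and your note on injectivity of $\psi$, are small improvements in clarity over the paper's presentation.
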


\begin{proof}
    We first prove that $\psi$ is a graph isomorphism, if
    \begin{equation}\label{eq:bounds_rangom_geometric}
        (1-\epsilon)\lVert \vx-\vy\rVert^2 \leq \lVert \mA\vx - \mA\vy \rVert^2 \leq (1+\epsilon)\lVert \vx-\vy\rVert^2 \quad \forall \vx,\vy \in X. 
    \end{equation}
    Let $\vx\sim \vy$ be an arbitrary edge in $G_r(X)$. Using our assumption and the upper bound on $\epsilon$, we obtain
    \[
        \lVert \mA\vx - \mA\vy\rVert^2 \leq (1+\epsilon)\lVert \vx-\vy\rVert^2 < r^2.
    \]
    Therefore, we obtain $\mA\vx \sim \mA\vy$ in $G_r(\mA X)$. Analogously, consider an arbitrary edge $\mA\vx \sim \mA\vy$ in $G_r(\mA X)$. It remains to show that $\vx\sim \vy$ in $G_r(X)$. Assume this is not the case. Hence, $\lVert \vx -\vy\rVert >r$ and therefore
    \[
        \lVert \mA\vx -\mA\vy\rVert^2 \geq (1-\epsilon)\lVert \vx - \vy\rVert^2> r^2,
    \]
    contradicting $\mA\vx \sim \mA\vy$. Hence, the assumption leads to a contradiction, and we conclude that $\psi$ is a graph isomorphism, provided that (\ref{eq:bounds_rangom_geometric}) holds. By Corollary~\ref{Cor:Point_Cloud}, the probability for this is bounded from below by
    \[  
        1- |X|(|X|-1) \exp \left(\frac{m}{4}(\epsilon^3 - \epsilon^2)\right).
    \]
    This concludes the proof.
\end{proof}

Thus, for sufficiently wide, randomly initialized one-layer networks without non-linear activation functions, the graph structures are preserved. This result can be naturally extended to multi-layer networks in the following way.

\begin{theorem}\label{th:Deep_NN}
    Let $X\subset \R^n$ be a finite set, and assume there exists $0< \epsilon <1$ such that
    \[
        \min_{\substack{Y\subset X\setminus\{x\}\\ |Y|=k}} \max_{y\in Y} \lVert \vx-\vy \rVert^2 \leq \left( \frac{1-\epsilon}{1+\epsilon}\right)^L\min_{\substack{Y\subset X\setminus\{x\}\\ |Y|=k+1}} \max_{y\in Y} \lVert \vx-\vy \rVert^2 \quad \forall \vx\in X.
    \]
    Furthermore, let $\mA_1\in \R^{m\times n}$ and $\mA_2,\ldots, \mA_L\in \R^{m\times m}$ be random matrices with i.i.d. entries $(\emA_\ell)_{ij} \sim \mathcal{N}(0, 1/m)$ for $\ell=1,\ldots,L$. Then, the map 
    \[
        \psi_L: X \to X_L:=\{\mA_L\mA_{L-1}\ldots \mA_1\vx: \vx\in X\}; \quad \psi_L(\vx) = \mA_L\mA_{L-1}\ldots \mA_1\vx
    \]
    is a graph isomorphism between $\gG_k(X)$ and  $\gG_k(X_L)$ with probability bounded from below
    \[
    \mathbb{P}\left(\gG_k(X) \cong\gG_k(X_L) \; \text{under} \; \psi_L\right) \geq \left(1 - |X|(|X|-1)e^{\frac{m}{4}(\epsilon^3 - \epsilon^2)}\right)^L.
    \]
\end{theorem}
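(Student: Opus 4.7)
The natural strategy is to iterate the single-layer argument behind Theorem~\ref{th:kNN_randomly_init}. For $\ell = 1, \dots, L$, denote the point set after applying the first $\ell$ matrices by $X_\ell = \{\mA_\ell \cdots \mA_1 \vx : \vx \in X\}$, and let $E_\ell$ be the event
\[
(1-\epsilon)\,\lVert \vu - \vv\rVert^2 \;\leq\; \lVert \mA_\ell \vu - \mA_\ell \vv\rVert^2 \;\leq\; (1+\epsilon)\,\lVert \vu - \vv\rVert^2 \quad \text{for all } \vu,\vv \in X_{\ell-1}.
\]
On the intersection $E := \bigcap_{\ell=1}^L E_\ell$, a telescoping of these inequalities yields
\[
(1-\epsilon)^L\,\lVert \vx - \vy\rVert^2 \;\leq\; \lVert \psi_L(\vx) - \psi_L(\vy)\rVert^2 \;\leq\; (1+\epsilon)^L\,\lVert \vx - \vy\rVert^2 \quad \forall \vx,\vy \in X,
\]
so the composite map $\psi_L$ distorts squared distances by at most the factor $(1+\epsilon)^L/(1-\epsilon)^L$.

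\textbf{From the distortion bound to graph isomorphism.} Under the hypothesis of the theorem, for every $\vx \in X$ the squared distance from $\vx$ to any of its $k$-nearest neighbors is at most $\left(\tfrac{1-\epsilon}{1+\epsilon}\right)^L$ times the squared distance to any non-neighbor. Combining this with the two-sided bound above, the same order-preserving argument as in the proof of Theorem~\ref{th:kNN_randomly_init} applies verbatim: if $\vy$ is a $k$-NN of $\vx$ and $\vz$ is not, then
\[
\lVert \psi_L(\vx) - \psi_L(\vy)\rVert^2 \;\leq\; (1+\epsilon)^L \lVert \vx-\vy\rVert^2 \;\leq\; (1-\epsilon)^L \lVert \vx-\vz\rVert^2 \;\leq\; \lVert \psi_L(\vx) - \psi_L(\vz)\rVert^2,
\]
so the $k$-NN relation is preserved, and conversely it cannot be created, exactly as before. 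Hence on $E$ the map $\psi_L$ is a graph isomorphism between $\gG_k(X)$ and $\gG_k(X_L)$.

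\textbf{Bounding $\mathbb{P}(E)$.} The slightly delicate step is the probability bound, because $X_{\ell-1}$ is itself random. Since $\mA_\ell$ is independent of $\mA_1, \dots, \mA_{\ell-1}$, we can condition on the filtration $\mathcal{F}_{\ell-1} = \sigma(\mA_1, \dots, \mA_{\ell-1})$: given $\mathcal{F}_{\ell-1}$, the set $X_{\ell-1}$ is fixed and has at most $|X|$ elements, so Corollary~\ref{Cor:Point_Cloud} applied to $X_{\ell-1}$ gives
\[
\mathbb{P}\bigl(E_\ell \;\big|\; \mathcal{F}_{\ell-1}\bigr) \;\geq\; 1 - |X|(|X|-1)\,e^{\frac{m}{4}(\epsilon^3 - \epsilon^2)}.
\]
Multiplying these conditional lower bounds via the tower property,
\[
\mathbb{P}(E) = \mathbb{E}\!\left[\prod_{\ell=1}^L \mathbf{1}_{E_\ell}\right] \;\geq\; \left(1 - |X|(|X|-1)\,e^{\frac{m}{4}(\epsilon^3 - \epsilon^2)}\right)^L,
\]
which is the claimed bound.

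\textbf{Main obstacle.} The computation itself is routine; the only point requiring care is the probabilistic step, since one must recognise that the ambient dimension shifts from $n$ to $m$ after the first layer but that Corollary~\ref{Cor:Point_Cloud} is dimension-agnostic in its conclusion, and that the independence of the $\mA_\ell$ lets one pass from conditional JL bounds on the random intermediate point clouds to an unconditional product bound. Otherwise the gap hypothesis is exactly calibrated so that the telescoped distortion $((1-\epsilon)/(1+\epsilon))^L$ still suffices to separate $k$-th from $(k+1)$-th nearest neighbors, and no new ideas beyond those of Theorem~\ref{th:kNN_randomly_init} are needed.
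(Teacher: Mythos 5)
Your proposal is correct and follows the same overall strategy as the paper's proof (telescoping the layer-wise Johnson--Lindenstrauss bounds, then running the same nearest-neighbor-ordering argument as in Theorem~\ref{th:kNN_randomly_init}). One detail in your treatment is actually more careful than the paper's. The paper justifies the product bound by asserting that ``the corresponding events are independent across the different layers'' because the weight matrices have independent entries. Strictly speaking this is not accurate: the event $E_\ell$ is a statement about the random point set $X_{\ell-1}$, which depends on $\mA_1,\ldots,\mA_{\ell-1}$, so the events $E_1,\ldots,E_L$ are not mutually independent. Your conditioning argument via the filtration $\mathcal{F}_{\ell-1}=\sigma(\mA_1,\ldots,\mA_{\ell-1})$ and the tower property gives exactly the same product lower bound while sidestepping that inaccuracy, since all that is needed is a uniform conditional lower bound $\mathbb{P}(E_\ell\mid\mathcal{F}_{\ell-1})\geq 1-\delta$, which Corollary~\ref{Cor:Point_Cloud} supplies once $X_{\ell-1}$ is fixed by the conditioning. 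You also correctly flag that the corollary is dimension-agnostic, so its application after the first layer (where the ambient dimension becomes $m$ rather than $n$) is legitimate. In short: same argument, but your probability step is the cleaner one.
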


\begin{proof}
    We first prove that $\psi_L$ is an isomorphism, if the following inequality holds for all $x,y \in X$ and $\ell=1,\ldots,L$:
    \begin{equation}\label{eq:bounds_deep}
        (1-\epsilon)\lVert\psi_{\ell-1}(\vx) - \psi_{\ell-1}(\vy) \rVert^2 \leq \lVert \psi_\ell(\vx) - \psi_\ell(\vy)\rVert^2 \leq (1+\epsilon)\lVert\psi_{\ell-1}(\vx) - \psi_{\ell-1}(\vy) \rVert^2,
    \end{equation}
    where we use the convention $\psi_0(\vx)=\vx$.
    
    To this end, consider an arbitrary edge $\vx \sim \vy$ in $\gG_k(X)$, and assume without loss of generality that $\vy$ is among the $k$-nearest neighbors of $\vx$. We aim to show that $\psi_L(\vx)\sim \psi_L(\vy)$ in $\gG_k(X_L)$. Assume this is not the case. Hence, there exists a vertex $\vz\in X$, which is not among the $k$-nearest neighbors of $\vx$, such that
    \[
        \lVert\psi_L(\vx) - \psi_L(\vz)\rVert < \lVert \psi_L(\vx) -\psi_L(\vy)\rVert.
    \]
    This contradicts our assumption, since
    \[
     \lVert \psi_L(\vx) -\psi_L(\vy)\rVert^2 \leq (1+\epsilon)^L\lVert \vx - \vy\rVert^2 \leq (1-\epsilon)^L\lVert \vx - \vz\rVert^2\lVert \leq \lVert \psi_L(\vx) -\psi_L(\vz)\rVert^2,
    \]
    where we applied our assumption on $\epsilon$ to obtain the second inequality. Therefore, our assumption is false, implying that $\psi_L(\vx) \sim \psi_L(\vy)$ in $\gG_k(X_L)$ must hold. Using a similar argument, one can show that $\psi_L(\vx) \sim \psi_L(\vy)$ in $\gG_k(X_L)$ implies $\vx \sim \vy$ in $\gG_k(X)$.

    Thus, $\psi_L$ is a graph isomorphism, provided that condition (\ref{eq:bounds_deep}) holds. For fixed $\ell$, the probability that this condition is satisfied can be bounded from below by Corollary~\ref{Cor:Point_Cloud}. Since all entries of all weight matrices are independent, the corresponding events are independent across the different layers $\ell=1,\ldots,L$. Consequently, we obtain a lower bound by taking the product of the individual probabilities:
    \[
    \mathbb{P}\left(\gG_k(X) \cong\gG_k(X_L) \; \text{under} \; \phi_L\right) \geq \left(1 - |X|(|X|-1)e^{\frac{m}{4}(\epsilon^3 - \epsilon^2)}\right)^L.
    \]
\end{proof}

\subsubsection{Trained networks}\label{appendix:trained_networks}

Random initialization together with over-parameterization ensures that the network’s weights remain close to their initial values throughout gradient descent. To illustrate, consider a two-layer neural network $\Phi = \phi_2 \circ \phi_1$, where the first layer is
\[
    \phi_1(\vx) = \sigma\left(\frac{1}{\sqrt{m}}\mW\vx\right)
\]
with $\sigma$ denoting the ReLU activation and $\mW \in \R^{m\times n}$ the weight matrix. The second layer computes a weighted linear combination, $\phi_2(\vx) = \langle \va, \vx \rangle$ with $\va \in \R^m$.

Given a training data set $\{(\vx_i, y_i)\}_{i=1}^N$, we aim to minimize the empirical loss
\[
    L(\mW, \va) = \frac{1}{2}\sum_{i=1}^N (\Phi(\vx_i) - y_i)^2.
\]
To this end, we fix the second-layer weights $\va\in \R^m$ and apply gradient descent to optimize the first-layer weight matrix $\mW\in \R^{m\times n}$ via the update rule
\[
    \mW(k+1) = \mW(k) - \eta \frac{\partial L(\mW(k), \va)}{\partial \mW(k)},
\]
where $\eta >0$ denotes the learning rate. We denote by
\[
    \vu(k) = (\Phi(\vx_1), \ldots, \Phi(\vx_N)) \in \R^N
\]
the prediction vector after $k$ steps of gradient descent. Our main result in this section relies on an assumption regarding the smallest eigenvalue of the Gram matrix, so we briefly recall this concept here.

\begin{definition}[Gram matrix]
    The \textit{Gram matrix} $\mH^\infty\in \R^{N\times N}$ is defined by
    \[
        \emH^{\infty}_{ij} = \mathbb{E}_{\vw\sim\mathcal{N}(0,1)^n}\left[\vx_i^\top \vx_j 1_{\{\vw^\top \vx_i \geq 0, \vw^\top \vx_j \geq 0\}}\right]. 
    \]
    We denote by $\lambda_0 = \lambda_{min}(\mH^\infty)$ the smallest eigenvalue of the Gram matrix.
\end{definition}

\begin{remark}
    If $\vx_i \not\parallel \vx_j$ for all $i \neq j$, then $\lambda_0 > 0$. Since this condition is typically satisfied in real-world datasets, the assumption $\lambda_0 > 0$ is not restrictive in practice.
\end{remark}

Assuming that the smallest eigenvalue of the Gram matrix is strictly positive, \citet{du2018gradient} proved that gradient descent converges to a global minimum at a linear rate.

\begin{theorem}[\citep{du2018gradient}]\label{th:gradient_descent}
    Suppose that $\lambda_0 >0$ and $\Vert \vx_i\rVert=1$ and $|y_i| \leq C$ for all $i=1,\ldots,N$. Assume that the width $m=\Omega\left(\frac{N^6}{\lambda_0^4\delta^3}\right)$, and $\emW_{ij} \sim \mathcal{N}(0,1)$, $\eva_i\sim Unif(\{-1,1\})$, and set the step size $\eta = \mathcal{O}\left( \frac{\lambda_0}{N^2}\right)$. Then, with probability at least $1-\delta$ we obtain
    \[
        \lVert \vu(k) - \vy\rVert^2 \leq \left(1-\frac{\eta\lambda_0}{2}\right)^k\lVert \vu(0) - \vy\rVert^2.
    \]
\end{theorem}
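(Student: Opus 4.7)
The plan is to establish linear convergence via a coupled bootstrap argument that simultaneously tracks three quantities: the residual $\lVert \vu(k) - \vy\rVert^2$, the weight movement $\lVert \vw_r(k) - \vw_r(0)\rVert$ for each hidden neuron $r$, and the time-dependent Gram matrix $\mH(k) \in \R^{N\times N}$ with entries
\[
    \emH_{ij}(k) = \frac{1}{m}\vx_i^\top \vx_j \cdot \lvert\{r : \vw_r(k)^\top \vx_i \geq 0,\; \vw_r(k)^\top \vx_j \geq 0\}\rvert.
\]
The first step is to expand the one-step update of the prediction vector under gradient descent and identify the dominant linear dynamics $\vu(k+1) - \vu(k) = -\eta \mH(k)\bigl(\vu(k)-\vy\bigr) + \ve(k)$, where $\ve(k)$ collects the discretization error coming from neurons whose activation pattern flips between iterations $k$ and $k+1$.

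Next, I would handle the initialization. Using Bernstein-type concentration on the sum of independent indicator variables $\1\{\vw_r(0)^\top \vx_i \geq 0,\; \vw_r(0)^\top \vx_j \geq 0\}$, the matrix $\mH(0)$ is close to $\mH^\infty$ in operator norm with probability at least $1 - \delta/3$ once $m$ is polynomially large in $N/\lambda_0$, so that $\lambda_{\min}(\mH(0)) \geq \tfrac{3}{4}\lambda_0$. Gaussian concentration of the output at initialization analogously controls $\lVert \vu(0) - \vy\rVert^2 = \mathcal{O}(N)$ with high probability.

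The heart of the argument is an inductive claim: for all $k' \leq k$, (i) $\lVert\vu(k')-\vy\rVert^2 \leq (1-\eta\lambda_0/2)^{k'}\lVert\vu(0)-\vy\rVert^2$ and (ii) $\lVert\vw_r(k') - \vw_r(0)\rVert \leq R$ for every $r$, where $R$ is of order $\sqrt{N}/(\sqrt{m}\lambda_0)$. Part (i) implies (ii) at step $k+1$ by bounding the per-step gradient norm and summing the geometric series. The delicate step is showing that weight movements of order $R$ flip the activation pattern on only an $\mathcal{O}(R)$ fraction of the $m$ neurons per data point, which hinges on a Gaussian anti-concentration bound: $\mathbb{P}[\lvert\vw_r(0)^\top \vx_i\rvert \leq R] \lesssim R$ since $\lVert\vx_i\rVert = 1$. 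This controls simultaneously $\lVert\mH(k) - \mH(0)\rVert$ (keeping $\lambda_{\min}(\mH(k)) \geq \lambda_0/2$) and the error term $\lVert\ve(k)\rVert$. A direct expansion of $\lVert\vu(k+1)-\vy\rVert^2 = \lVert(I - \eta\mH(k))(\vu(k)-\vy) + \ve(k)\rVert^2$ together with the eigenvalue lower bound then closes the induction.

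The main obstacle is the circular dependence in the bootstrap: the loss decay bounds the weight movement, small weight movement controls both the Gram matrix and the discretization error, and control on these two quantities is precisely what yields the loss decay. The polynomial widths $m = \Omega(N^6/(\lambda_0^4\delta^3))$ and step size $\eta = \mathcal{O}(\lambda_0/N^2)$ are calibrated so that each source of slack (concentration at initialization, anti-concentration for pattern flips, accumulation over iterations, and union bound over the $N$ data points and $m$ neurons) is absorbed by a sufficiently large power of $m$; losing any of these factors causes the bootstrap to unravel, and tracking exactly which terms force which polynomial dependencies is what makes the accounting subtle.
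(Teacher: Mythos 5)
The paper does not prove this theorem; it is imported verbatim from \citet{du2018gradient}, so there is no ``paper's own proof'' to compare against. Your sketch is a faithful high-level reconstruction of the cited argument: the decomposition $\vu(k+1)-\vu(k) = -\eta\mH(k)(\vu(k)-\vy)+\ve(k)$ with the time-dependent Gram matrix, the concentration of $\mH(0)$ around $\mH^\infty$, the coupled induction linking residual decay to weight movement $R \asymp \sqrt{N}/(\sqrt{m}\lambda_0)$, and the Gaussian anti-concentration bound controlling activation-pattern flips are precisely the ingredients of that proof, in the same order and playing the same roles.
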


\begin{remark}
    The assumption $\lVert \vx_i\rVert = 1$ can be easily relaxed. If the inputs satisfy $0 < c \leq \lVert \vx_i \rVert \leq C$ for all $i=1,\ldots, N$, then the result still holds, but the required network width will now also depend on the ratio $\frac{C}{c}$.
\end{remark}

Using this, \citet{du2018gradient} prove that the weight matrix remains close to its initialization throughout training.

\begin{corollary}\label{Cor:WeightBounds}
    Assume that the assumptions of Theorem~\ref{th:gradient_descent} are satisfied. Then, with probability at least $1-\delta$, we have for all $k\geq 0$ and every row index $r\in\{1,\ldots,m\}$ that
    \[
        \lVert \mW_{r,:}(k) - \mW_{r, :}(0)\rVert \leq \frac{4\sqrt{N}}{\sqrt{m}\lambda_0}\lVert \vu(0) - \vy\rVert,
    \]
    where $\mW_{r, :}(k)$ denotes the $r$-th row of the weight matrix $\mW(k)$.
\end{corollary}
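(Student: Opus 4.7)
The plan is to control the per-step movement of a single row $\mW_{r,:}$ by bounding the corresponding gradient, then to sum these increments using the geometric decay of the residual $\lVert \vu(k)-\vy\rVert$ guaranteed by Theorem~\ref{th:gradient_descent}. Everything will be carried out on the high-probability event of that theorem, which is exactly where the $1-\delta$ comes from; no additional probabilistic argument is needed.

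First I would write out the gradient explicitly. With $\Phi(\vx)=\sum_{s=1}^m \eva_s \sigma\!\left(\mW_{s,:}^\top \vx/\sqrt{m}\right)$ and the ReLU derivative $\sigma'(z)=\mathbf{1}_{\{z>0\}}$, a direct computation gives
\[
\frac{\partial L}{\partial \mW_{r,:}}(k) \;=\; \frac{\eva_r}{\sqrt{m}}\sum_{i=1}^N \bigl(\Phi(\vx_i)-y_i\bigr)\,\mathbf{1}_{\{\mW_{r,:}(k)^\top \vx_i>0\}}\,\vx_i.
\]
Taking norms, using $|\eva_r|=1$, $\lVert \vx_i\rVert=1$, and Cauchy--Schwarz,
\[
\left\lVert \frac{\partial L}{\partial \mW_{r,:}}(k)\right\rVert \;\le\; \frac{1}{\sqrt{m}}\sum_{i=1}^N |\Phi(\vx_i)-y_i| \;\le\; \frac{\sqrt{N}}{\sqrt{m}}\lVert \vu(k)-\vy\rVert.
\]
Hence, by the gradient-descent update rule, each per-step displacement of the row satisfies $\lVert \mW_{r,:}(k+1)-\mW_{r,:}(k)\rVert \le \eta\,\frac{\sqrt{N}}{\sqrt{m}}\,\lVert \vu(k)-\vy\rVert$.

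Next I would invoke Theorem~\ref{th:gradient_descent} to substitute the square-root form of the linear convergence bound, $\lVert \vu(k)-\vy\rVert \le (1-\eta\lambda_0/2)^{k/2}\,\lVert \vu(0)-\vy\rVert$, and then telescope with the triangle inequality:
\[
\lVert \mW_{r,:}(k)-\mW_{r,:}(0)\rVert \;\le\; \frac{\eta\sqrt{N}}{\sqrt{m}}\,\lVert \vu(0)-\vy\rVert\sum_{j=0}^{k-1}\bigl(1-\eta\lambda_0/2\bigr)^{j/2}.
\]
Summing the geometric series (extending to $\infty$) and using the elementary inequality $1-\sqrt{1-x}\ge x/2$ for $x\in[0,1]$ with $x=\eta\lambda_0/2$, I would obtain $\sum_{j\ge 0}(1-\eta\lambda_0/2)^{j/2} \le \bigl(1-\sqrt{1-\eta\lambda_0/2}\bigr)^{-1} \le 4/(\eta\lambda_0)$, which cancels the leading $\eta$ and yields exactly the advertised constant $4\sqrt{N}/(\sqrt{m}\lambda_0)$.

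The main obstacle is bookkeeping rather than conceptual: one must be careful to pass from the $\lVert\cdot\rVert^2$ contraction in Theorem~\ref{th:gradient_descent} to the $\lVert\cdot\rVert$ contraction needed to sum the geometric series, which forces the use of the \emph{square-root} ratio $\sqrt{1-\eta\lambda_0/2}$ and hence the elementary inequality above to recover the clean $4/(\eta\lambda_0)$ factor. A secondary subtlety is that the bound on $\lVert \partial L/\partial \mW_{r,:}(k)\rVert$ must hold uniformly in $k$, which it does because we never used anything about the sign pattern $\mathbf{1}_{\{\mW_{r,:}(k)^\top \vx_i>0\}}$ except that it lies in $\{0,1\}$; thus the estimate is independent of how activation patterns evolve during training, and the whole argument operates on the same event of probability $\ge 1-\delta$ on which Theorem~\ref{th:gradient_descent} holds.
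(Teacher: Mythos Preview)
Your proposal is correct and follows exactly the argument of \citet{du2018gradient}, which the paper cites without reproducing: bound the per-row gradient by $\tfrac{\sqrt{N}}{\sqrt{m}}\lVert \vu(k)-\vy\rVert$ via Cauchy--Schwarz, telescope using the linear convergence of Theorem~\ref{th:gradient_descent}, and sum the resulting geometric series in $\sqrt{1-\eta\lambda_0/2}$ with the elementary bound $1-\sqrt{1-x}\ge x/2$ to obtain the $4/(\eta\lambda_0)$ factor. Your identification of the square-root passage and the uniformity-in-$k$ of the gradient bound as the only subtleties is accurate; nothing further is needed.
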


We are now almost ready to prove that, even after an arbitrary number of gradient descent steps, the network remains unable to alter the feature geometry encoded in the graph structures before the application of the nonlinearity. To this end, we introduce one final technical lemma.

\begin{lemma}\label{lem:Lin_Alg}
    Let $\mA\in \mathbb{R}^{m\times n}$ satisfying $\lVert \mA_{r,:}\rVert \leq \epsilon$ for every $r\in \{1,\ldots, m\}$. For $\vx\in \R^n$, we obtain the following upper bound
    \[
        \lVert \mA\vx \rVert \leq \sqrt{m}\epsilon\lVert \vx\rVert .
    \]
\end{lemma}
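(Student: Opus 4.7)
The plan is to expand the squared norm $\lVert \mA\vx\rVert^2$ as a sum over the rows of $\mA$, bound each summand by Cauchy--Schwarz using the uniform row-norm hypothesis, and take a square root at the end. This is the most direct route since the hypothesis is stated row-wise and the $\normltwo$ norm of $\mA\vx$ decomposes cleanly into contributions from individual rows.

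Concretely, I would first write
\[
    \lVert \mA\vx \rVert^2 = \sum_{r=1}^m \bigl(\mA_{r,:}\,\vx\bigr)^2,
\]
which is just the definition of the Euclidean norm applied to the vector whose $r$-th entry is the inner product $\langle \mA_{r,:}, \vx\rangle$. Then I would apply the Cauchy--Schwarz inequality to each row to obtain $\bigl(\mA_{r,:}\,\vx\bigr)^2 \leq \lVert \mA_{r,:}\rVert^2 \,\lVert \vx\rVert^2$, and invoke the hypothesis $\lVert \mA_{r,:}\rVert \leq \epsilon$ to replace the row-norm factor by $\epsilon^2$. Summing the resulting inequality over $r = 1,\ldots,m$ gives $\lVert \mA\vx\rVert^2 \leq m\,\epsilon^2\,\lVert \vx\rVert^2$, and taking square roots yields the desired bound $\lVert \mA\vx\rVert \leq \sqrt{m}\,\epsilon\,\lVert \vx\rVert$.

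There is no serious obstacle in this proof; it is a routine two-line computation. The only minor point worth flagging is that one should work with $\lVert \mA\vx\rVert^2$ rather than $\lVert \mA\vx\rVert$ directly, so that the $m$-fold sum of row-wise Cauchy--Schwarz bounds can be handled linearly and the characteristic $\sqrt{m}$ factor in the conclusion emerges naturally after taking the final square root.
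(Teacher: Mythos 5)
Your proof is correct and follows essentially the same route as the paper: expand $\lVert \mA\vx\rVert^2$ row-wise, apply Cauchy--Schwarz to each inner product, bound each row norm by $\epsilon$, sum, and take a square root. No meaningful differences to report.
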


\begin{proof}
    This follows immediately from the Cauchy-Schwarz inequality:
    \[
    \lVert \mA\vx \rVert^2 = \sum_{r=1}^m \langle \mA_{r,:}, \vx\rangle^2\leq \sum_{r=1}^m\lVert \mA_{r,:}\rVert^2\lVert \vx\rVert^2 \leq m\epsilon^2\lVert \vx\rVert^2.
    \]
    Taking the square root on both sides completes the proof. 
\end{proof}

We now show that with large probability, sufficiently wide networks cannot alter the geometry of the k-nearest neighbor graph before the activation function is applied, regardless of the number of gradient descent steps performed. This highlights the crucial role of the non-linearity.

\begin{theoremMainRes}
    Let $X\subset \R^n$ be a finite set, and assume there exists $0<\epsilon < 1$ such that
    \[
        \min_{\substack{Y\subset X\setminus\{x\}\\ |Y|=k}} \max_{y\in Y} \lVert \vx-\vy \rVert \leq \frac{1-\epsilon}{1+\epsilon}\min_{\substack{Y\subset X\setminus\{x\}\\ |Y|=k+1}} \max_{y\in Y} \lVert \vx-\vy \rVert\quad \forall \vx\in X.
    \]
    Assume that the assumptions of Theorem~\ref{th:gradient_descent} are satisfied. Furthermore, assume that
    \[
        m \geq  \frac{64 N \lVert \vu(0) - \vy\rVert^2}{\epsilon^2\lambda_0^2}.
    \]
    Then, for any number of gradient descent steps $l \geq 0$, the map 
    \[
        \psi: X \to X(l) := \left\{\frac{1}{\sqrt{m}}\mW(l)\vx: \vx\in X  \right\}; \quad \psi(\vx) = \frac{1}{\sqrt{m}}\mW(l)\vx
    \]
    is a graph isomorphism between $\gG_k(X)$ and $\gG_k(X(l))$ with probability bounded from below by
    \[  
        \mathbb{P}(\gG_k(X) \cong\gG_k(X(l)) \;\text{under}\; \psi) \geq 1 - \delta - |X|(|X|-1)e^{\frac{m}{4}\left(\frac{\epsilon^3}{8} - \frac{\epsilon^2}{4}\right)}.
    \]
\end{theoremMainRes}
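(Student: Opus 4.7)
The strategy is to decompose the trained weight matrix as $\mW(l) = \mW(0) + \Delta(l)$ with $\Delta(l) := \mW(l) - \mW(0)$, and to argue that, suitably scaled, both summands act as approximate isometries on $X$: the random initialization via Johnson--Lindenstrauss concentration, and the gradient-descent perturbation because overparameterization keeps the iterates close to their starting point. Once $\tfrac{1}{\sqrt m}\mW(l)$ is shown to preserve pairwise distances up to a small multiplicative factor, I replay the combinatorial argument of Theorem~\ref{th:kNN_randomly_init} to conclude that $\psi$ is a graph isomorphism.

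First, I apply Corollary~\ref{Cor:Point_Cloud} to $\tfrac{1}{\sqrt m}\mW(0)$ with the tightened margin $\epsilon/2$. This produces an event $E_1$ of probability at least $1 - |X|(|X|-1)\exp(\tfrac{m}{4}(\epsilon^3/8 - \epsilon^2/4))$ on which $(1-\epsilon/2)\|\vx-\vy\|^2 \le \|\tfrac{1}{\sqrt m}\mW(0)(\vx-\vy)\|^2 \le (1+\epsilon/2)\|\vx-\vy\|^2$ holds for every pair $\vx,\vy \in X$. Next, under the hypotheses of Theorem~\ref{th:gradient_descent}, Corollary~\ref{Cor:WeightBounds} delivers an event $E_2$ of probability at least $1-\delta$ on which every row of $\Delta(l)$ satisfies $\|\Delta_{r,:}(l)\| \le \tfrac{4\sqrt N\,\|\vu(0)-\vy\|}{\sqrt m\,\lambda_0}$. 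Plugging in the width assumption $m \ge \tfrac{64N\|\vu(0)-\vy\|^2}{\epsilon^2\lambda_0^2}$ reduces this bound to $\epsilon/2$, and Lemma~\ref{lem:Lin_Alg} then converts it into the operator estimate $\|\tfrac{1}{\sqrt m}\Delta(l)\vw\| \le \tfrac{\epsilon}{2}\|\vw\|$ valid for every $\vw \in \R^n$.

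On $E_1 \cap E_2$, the triangle inequality yields $(\sqrt{1-\epsilon/2} - \tfrac{\epsilon}{2})\|\vx-\vy\| \le \|\tfrac{1}{\sqrt m}\mW(l)(\vx-\vy)\| \le (\sqrt{1+\epsilon/2} + \tfrac{\epsilon}{2})\|\vx-\vy\|$ for all $\vx,\vy \in X$. The elementary estimates $\sqrt{1+x}\le 1+x/2$ and $\sqrt{1-x}\ge 1-x$ on $[0,1]$ then deliver the cleaner two-sided bound $(1-\epsilon)\|\vx-\vy\| \le \|\tfrac{1}{\sqrt m}\mW(l)(\vx-\vy)\| \le (1+\tfrac{3\epsilon}{4})\|\vx-\vy\|$. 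Replaying the argument of Theorem~\ref{th:kNN_randomly_init} is now straightforward: for $\vy$ a $k$-nearest neighbor of $\vx$ and $\vz$ a non-neighbor, the margin hypothesis $\|\vx-\vy\|\le\tfrac{1-\epsilon}{1+\epsilon}\|\vx-\vz\|$ combined with the displayed bound and the trivial inequality $1+\tfrac{3\epsilon}{4}\le 1+\epsilon$ gives $\|\tfrac{1}{\sqrt m}\mW(l)(\vx-\vy)\| \le (1+\tfrac{3\epsilon}{4})\cdot\tfrac{1-\epsilon}{1+\epsilon}\|\vx-\vz\| \le (1-\epsilon)\|\vx-\vz\| \le \|\tfrac{1}{\sqrt m}\mW(l)(\vx-\vz)\|$, and a symmetric argument handles the converse implication. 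A union bound on $E_1$ and $E_2$ produces the claimed probability estimate.

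The main obstacle lies in this third step: Johnson--Lindenstrauss supplies a multiplicative, two-sided concentration bound for $\mW(0)$, whereas Corollary~\ref{Cor:WeightBounds} delivers only an additive operator-norm bound for $\Delta(l)$. Combining them so that the resulting composite distortion is still tight enough to preserve the strict ordering of distances under the margin $\tfrac{1-\epsilon}{1+\epsilon}$ is delicate; the budget split $\epsilon/2 + \epsilon/2$ between the JL margin and the perturbation bound is calibrated exactly so that the algebra closes, as witnessed by the elementary inequality $1+\tfrac{3\epsilon}{4}\le 1+\epsilon$ at the end of the combinatorial step.
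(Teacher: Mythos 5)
Your proposal is correct and follows essentially the same route as the paper's proof: split $\mW(l)=\mW(0)+\Delta(l)$, apply Corollary~\ref{Cor:Point_Cloud} to $\tfrac{1}{\sqrt m}\mW(0)$ with margin $\epsilon/2$, use Corollary~\ref{Cor:WeightBounds} together with Lemma~\ref{lem:Lin_Alg} and the width hypothesis to bound $\|\tfrac{1}{\sqrt m}\Delta(l)\vw\|\le\tfrac{\epsilon}{2}\|\vw\|$, combine via the triangle inequality to get a two-sided multiplicative distortion bound, replay the combinatorial argument of Theorem~\ref{th:kNN_randomly_init}, and finish with a union bound. The only cosmetic difference is that you track the intermediate constant $1+\tfrac{3\epsilon}{4}$ before loosening it to $1+\epsilon$, whereas the paper collapses directly to $1+\epsilon$; both are valid.
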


\begin{proof}
    For ease of notation, we define $\mA(l) = \frac{1}{\sqrt{m}}\mW(l)$. Note that the matrix $\emA(0)$ has i.i.d. entries with $A(0)_{ij}\sim \mathcal{N}(0, 1/m)$.
    We first show that $\psi$ is a graph isomorphism, if 
    \begin{equation} \label{eq:boundsA0kNN}
        (1-\frac{\epsilon}{2}) \lVert \vx-\vy\rVert^2 \leq \lVert \mA(0)(\vx - \vy)\rVert^2 \leq (1+\frac{\epsilon}{2})\lVert \vx- \vy\rVert^2 \quad \forall \vx,\vy \in X, 
    \end{equation}
    and for every $l \geq 0$ and every row index $r\in\{1,\ldots,m\}$
    \begin{equation}\label{eq:boundsWkNN}
        \lVert \mW_{r,:}(l) - \mW_{r,:}(0)\rVert \leq \frac{4\sqrt{N}}{\sqrt{m}\lambda_0}\lVert \vu(0) - \vy\rVert.  
    \end{equation}
    To this end, observe that, for any $\vx\in \R^n$, we have
    \[
        \lVert (\mA(l) - \mA(0))\vx\rVert = \frac{1}{\sqrt{m}}\lVert (\mW(l) -\mW(0))x\rVert \leq \frac{4 \sqrt{N}}{\sqrt{m}\lambda_0}\lVert\vu(0) - \vy\rVert \lVert \vx \rVert \leq \frac{\epsilon}{2}\lVert \vx\rVert,
    \]
   where the first inequality is a consequence of Lemma~\ref{lem:Lin_Alg}, and the second follows from our assumption on $m$. Using these inequalities, we obtain
    \begin{align*}
        \lVert \mA(l)\vx - \mA(l)\vy\rVert &\leq \lVert (\mA(l) - \mA(0))(\vx-\vy)\rVert + \lVert \mA(0)(\vx - \vy)\rVert \\ 
        &\leq \frac{\epsilon}{2}\lVert \vx-\vy\rVert + \sqrt{1+\frac{\epsilon}{2}}\lVert \vx-\vy\rVert \\
        &\leq (1+ \epsilon)\lVert \vx - \vy\rVert     
    \end{align*}
    for all $\vx,\vy \in X$. On the other hand, using the reverse triangle inequality, we obtain
    \begin{align*}
        \lVert (\mA(l)\vx - \mA(l))\vy\rVert &\geq \Big|\lVert \mA(0)(\vx-\vy)\rVert - \lVert (\mA(l) - \mA(0))(\vx-\vy)\rVert \Big| \\
        &\geq \sqrt{1-\frac{\epsilon}{2}}\lVert \vx-\vy\rVert - \frac{\epsilon}{2}\Vert \vx-\vy \rVert \\
        &\geq (1-\epsilon)\lVert \vx-\vy\rVert.
    \end{align*}
    for all $\vx,\vy\in X$.
    
    We are now prepared to prove that $\psi$ is a graph isomorphism. To this end, let $\vx\sim \vy$ be an arbitrary edge in $\mathcal{G}_k(X)$. Without loss of generality, we may assume that $\vy$ is among the $k$-nearest neighbors of $\vx$. We claim that $\psi(\vy)$ is among the $k$-nearest neighbors of $\psi(\vx)$. Assume this is not the case. Hence, there exists a $\vz\in X$, which is not among the $k$-nearest neighbors of $\vx$, such that 
    \[
        \lVert \psi(\vx) - \psi(\vz) \rVert = \lVert \mA(l)\vx - \mA(l)\vz \rVert < \lVert \mA(l)\vx - \mA(l)\vy \rVert=\lVert \psi(\vx) - \psi(\vy) \rVert.
    \]
    This contradicts
    \[
        \lVert \mA(l)\vx - \mA(l)\vy \rVert  \leq (1+\epsilon)\lVert \vx- \vy\rVert \leq (1-\epsilon)\lVert \vx- \vz\rVert \leq \lVert \mA(l)\vx - \mA(l)\vz \rVert,
    \]
    where we applied our assumption on $\epsilon$ to obtain the second inequality. Therefore, our assumption is false, implying that $\psi(\vy)$ belongs to the $k$-nearest neighbors of $\psi(\vx)$ and therefore $\psi(\vx)\sim \psi(\vy)$ in $\gG_k(X(l))$.

    Conversely, let $\psi(\vx) \sim \psi(\vy)$ be an arbitrary edge in $\gG_k(X(l))$, and assume without loss of generality that $\psi(\vy)$ is among the $k$-nearest neighbors of $\psi(\vx)$. It remains to show that $\vx \sim \vy$ in $\gG_k(X)$. Assume for contradiction that this is not the case. Hence, there exists $\vz\in X$ among the $k$-nearest neighbors of $\vx$ such that 
    \[
        \lVert \psi(\vx) - \psi(\vz)\rVert = \Vert \mA(l)\vx - \mA(l)\vz \rVert > \lVert \psi(\vx) -\psi(\vy)\rVert = \Vert \mA(l)\vx - \mA(l)\vy \rVert.
    \]
    This contradicts our assumption, since
    \[
        \Vert \mA(l)\vx - \mA(l)\vz\rVert \leq (1+\epsilon)\lVert \vx-\vz\rVert \leq (1-\epsilon)\lVert \vx-\vy\rVert \leq \lVert \mA(l)\vx - \mA(l)\vy\rVert,
    \]
    where we again applied our assumption on $\epsilon$ to obtain the second inequality. Thus, the assumption is contradicted, and $\vx \sim \vy$ in $\gG_k(X)$ must hold.
    
    Hence, $\psi$ is a graph isomorphism between $\mathcal{G}_k(X)$ and $\mathcal{G}_k(X(l))$, provided that (\ref{eq:boundsA0kNN}) and (\ref{eq:boundsWkNN}) hold. According to Corollary~\ref{Cor:Point_Cloud}, the probability that (\ref{eq:boundsA0kNN}) holds is bounded from below by
    \[
        1- |X|(|X|-1)\exp\left(\frac{m}{4}\left(\frac{\epsilon^3}{8} - \frac{\epsilon^2}{4}\right)\right).
    \]
    On the other hand, by Corollary~\ref{Cor:WeightBounds}, we know that the probability that (\ref{eq:boundsWkNN}) holds is bounded from below by $1-\delta$. The claim now follows from the Bonferroni inequality.
\end{proof}

An analogous result can also be established for $r$-neighborhood graphs.

\begin{theorem}\label{th:Main_res_rGG}
    Let $X\subset \mathbb{R}^n$ be a finite set, and denote by $N(\vx)$ the one-hop neighborhood of $\vx$ in $G_r(X)$. Choose $0 < \epsilon < 1$ such that
    \[
    \epsilon < \min \left\{\frac{r - \max_{\vy\in N(\vx)} \lVert \vx-\vy\rVert}{\max_{\vy\in N(\vx)} \lVert \vx-\vy\rVert}, \frac{\min_{\vy\not\in N(\vx)} \lVert \vx-\vy\rVert - r}{\min_{\vy\not\in N(\vx)} \lVert \vx-\vy\rVert}  \right\} \quad \forall \vx\in X.
    \]
    Assume that the assumptions of Theorem~\ref{th:gradient_descent} are satisfied. Furthermore, assume that
    \[
        m \geq \frac{64 N \lVert\vu(0) - \vy\rVert^2}{\epsilon^2\lambda_0^2}.
    \]
    Then, for any number of gradient descent steps $l\geq 0$, the map
    \[
        \psi: X \to X(l):=\left\{\frac{1}{\sqrt{m}} \mW(l)\vx : \vx\in X \right\}; \quad \psi(\vx) = \frac{1}{\sqrt{m}}\mW(l)\vx
    \]
    is a graph isomorphism between $G_r(X)$ and $G_r(X(l))$ with probability bounded from below by
    \[
        \mathbb{P}(G_r(X) \cong G_r(X(l)) \; \text{under}\; \psi) \geq 1 - \delta - |X|(|X|-1)e^{\frac{m}{4}\left(\frac{\epsilon^3}{8} - \frac{\epsilon^2}{4}\right)}.
    \]
\end{theorem}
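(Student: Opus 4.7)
The plan is to combine the tools already developed in the preceding proofs, namely the proof of Theorem~\ref{th:main_res_kNN} (for the bound propagation argument through training) and the proof of Theorem~\ref{th:Random_Geometric_Graphs} (for translating an approximate isometry into an edge-preserving bijection for $r$-neighborhood graphs). Set $\mA(l) = \tfrac{1}{\sqrt m}\mW(l)$, and note that $\mA(0)$ has i.i.d. $\mathcal{N}(0,1/m)$ entries, so Corollary~\ref{Cor:Point_Cloud} applies to $\mA(0)$.

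First I would isolate two sufficient high-probability events: Event~A, that $(1-\tfrac{\epsilon}{2})\lVert \vx-\vy\rVert^2 \le \lVert \mA(0)(\vx-\vy)\rVert^2 \le (1+\tfrac{\epsilon}{2})\lVert \vx-\vy\rVert^2$ for all $\vx,\vy\in X$, whose lower probability bound $1-|X|(|X|-1)\exp\!\bigl(\tfrac{m}{4}(\tfrac{\epsilon^3}{8}-\tfrac{\epsilon^2}{4})\bigr)$ comes from Corollary~\ref{Cor:Point_Cloud} applied at level $\epsilon/2$; and Event~B, that $\lVert \mW_{r,:}(l)-\mW_{r,:}(0)\rVert \le \tfrac{4\sqrt N}{\sqrt m\,\lambda_0}\lVert \vu(0)-\vy\rVert$ for all $l\ge 0$ and all rows $r$, which holds with probability at least $1-\delta$ by Corollary~\ref{Cor:WeightBounds}. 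On Event~B, Lemma~\ref{lem:Lin_Alg} together with the width hypothesis $m \ge 64 N\lVert \vu(0)-\vy\rVert^2/(\epsilon^2\lambda_0^2)$ gives $\lVert (\mA(l)-\mA(0))\vx\rVert \le \tfrac{\epsilon}{2}\lVert \vx\rVert$ for every $\vx\in\mathbb{R}^n$ and every $l\ge 0$. Combining this with Event~A via the triangle and reverse triangle inequalities (as done in the proof of Theorem~\ref{th:main_res_kNN}) yields
\[
(1-\epsilon)\lVert \vx-\vy\rVert \;\le\; \lVert \mA(l)\vx-\mA(l)\vy\rVert \;\le\; (1+\epsilon)\lVert \vx-\vy\rVert
\]
for all $\vx,\vy\in X$ and all $l\ge 0$.

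Once this approximate isometry is in place, the $r$-neighborhood graph isomorphism follows by exactly the same threshold argument as in Theorem~\ref{th:Random_Geometric_Graphs}. If $\vx\sim\vy$ in $G_r(X)$, then $\lVert \vx-\vy\rVert \le \max_{\vz\in N(\vx)}\lVert \vx-\vz\rVert$, and the assumed bound on $\epsilon$ gives $(1+\epsilon)\max_{\vz\in N(\vx)}\lVert \vx-\vz\rVert < r$, so $\lVert \psi(\vx)-\psi(\vy)\rVert < r$ and the edge is preserved. Conversely, if $\vx\not\sim\vy$ then $\lVert \vx-\vy\rVert \ge \min_{\vz\notin N(\vx)}\lVert \vx-\vz\rVert$, and the second part of the hypothesis on $\epsilon$ gives $(1-\epsilon)\min_{\vz\notin N(\vx)}\lVert \vx-\vz\rVert > r$, so $\lVert \psi(\vx)-\psi(\vy)\rVert > r$ and no spurious edge is introduced. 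A final application of the Bonferroni inequality to the complements of Events~A and~B yields the claimed lower bound on the probability that $\psi$ is a graph isomorphism between $G_r(X)$ and $G_r(X(l))$.

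There is essentially no hard step: the proof is a clean merger of two arguments already present in the paper. The only minor technical point to watch is that the constants must align, in particular that applying Corollary~\ref{Cor:Point_Cloud} at level $\epsilon/2$ produces the exponent $\tfrac{m}{4}(\tfrac{\epsilon^3}{8}-\tfrac{\epsilon^2}{4})$ appearing in the theorem statement, and that the width lower bound $m \ge 64 N\lVert \vu(0)-\vy\rVert^2/(\epsilon^2\lambda_0^2)$ is exactly what is needed to absorb the $\epsilon/2$ deviation from initialization into the overall $\epsilon$-isometry. Everything else mirrors the proof of Theorem~\ref{th:main_res_kNN} line by line, with the rank-based edge criterion replaced by the threshold criterion $\lVert \vx-\vy\rVert \lessgtr r$.
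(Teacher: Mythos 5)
Your proposal is correct and follows essentially the same approach as the paper's own proof: define Events A (the $\epsilon/2$-isometry of $\mA(0)$ via Corollary~\ref{Cor:Point_Cloud}) and B (the row-wise stability of $\mW(l)$ via Corollary~\ref{Cor:WeightBounds}), propagate to an $\epsilon$-isometry of $\mA(l)$ using Lemma~\ref{lem:Lin_Alg} and the width hypothesis, then apply the $r$-threshold edge criterion and Bonferroni. The only cosmetic point worth noting is that the hypothesis on $\epsilon$ in this theorem is stated in terms of $\lVert\cdot\rVert$ rather than $\lVert\cdot\rVert^2$ as in Theorem~\ref{th:Random_Geometric_Graphs}, which is exactly what your final-step algebra correctly uses.
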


\begin{proof}
    For ease of notation, we define $\mA(l) = \frac{1}{\sqrt{m}}\mW(l)$. Note that the matrix $\mA(0)$ has i.i.d. entries with $\mA(0)_{ij}\sim \mathcal{N}(0, 1/m)$.
    We first show that $\psi$ is a graph isomorphism, if 
    \begin{equation} \label{eq:boundsA0rGG}
        (1-\frac{\epsilon}{2}) \lVert \vx-\vy\rVert^2 \leq \lVert \mA(0)(\vx - \vy)\rVert^2 \leq (1+\frac{\epsilon}{2})\lVert \vx- \vy\rVert^2 \quad \forall \vx,\vy \in X, 
    \end{equation}
    and for every $l \geq 0$ and every row index $r\in\{1,\ldots,m\}$
    \begin{equation}\label{eq:boundsWrGG}
        \lVert \mW_{r,:}(l) - \mW_{r,:}(0)\rVert \leq \frac{4\sqrt{N}}{\sqrt{m}\lambda_0}\lVert \vu(0) - \vy\rVert.  
    \end{equation}
    By employing the same reasoning as in Theorem~\ref{th:main_res_kNN}, it follows that
    \[
        (1-\epsilon) \lVert \vx - \vy \rVert \leq \lVert \mA(l)\vx - \mA(l)\vy\rVert \leq (1+\epsilon) \lVert \vx - \vy \rVert
    \]
    holds for every $\vx,\vy \in X$ and $l\geq 0$.
    
    We now proceed to show that $\psi$ is a graph isomorphism. To this end, let $\vx\sim \vy$ be an arbitrary edge in $G_r(X)$. Using our upper bound, we obtain
    \[
        \lVert \psi(\vx) - \psi(\vy) \rVert=\lVert \mA(l)\vx - \mA(l)\vy \rVert \leq (1+\epsilon) \lVert \vx-\vy\rVert < r,
    \]
    by our assumption on $\epsilon$. Therefore, we conclude $\psi(\vx) \sim \psi(\vy)$ in $G_r(X(l))$. Conversely, consider an arbitrary edge $\psi(\vx) \sim \psi(\vy)$ in $G_r(X(l))$. It remains to show that $\vx\sim \vy$ in $G_r(X)$. Assume this is not the case. Hence, $\lVert \vx-\vy\rVert > r$ and therefore
    \[
        \lVert \psi(\vx) - \psi(\vy)\rVert =\lVert \mA(l)\vx - \mA(l)\vy\rVert \geq (1-\epsilon) \lVert \vx-\vy\rVert > r,
    \]
    contradicting $\psi(\vx) \sim \psi(\vy)$. 
    
    Hence, $\psi$ is a graph isomorphism between $G_r(X)$ and $G_r(X(l))$, provided that (\ref{eq:boundsA0rGG}) and (\ref{eq:boundsWrGG}) hold. Again, according to Corollary~\ref{Cor:Point_Cloud}, the probability that (\ref{eq:boundsA0rGG}) holds is bounded from below by
    \[
        1- |X|(|X|-1)\exp\left(\frac{m}{4}\left(\frac{\epsilon^3}{8} - \frac{\epsilon^2}{4}\right)\right).
    \]
    On the other hand, by Corollary~\ref{Cor:WeightBounds}, we know that the probability that (\ref{eq:boundsWrGG}) holds is bounded from below by $1-\delta$. The claim now follows from the Bonferroni inequality.
\end{proof}

\subsubsection{Impact of nonlinearity on feature geometry}\label{appendix:ReLU_networks}

We have seen that, in wide linear networks, the feature geometry captured by the graph structures remains unchanged, as the learned weight matrices act approximately as isometries. In this section, we show that this behavior changes once a ReLU activation is introduced. Specifically, we prove that, even when the weight
matrices are exact isometries, adding the ReLU nonlinearity is sufficient to change the geometry of the feature manifolds.

It is a standard result from linear algebra that the linear isometries of $\R^n$ correspond precisely to the set of orthogonal matrices, denoted by $\mathrm{O}(n)$, defined as
\[
    \mathrm{O}(n) = \{\mA \in \R^{n\times n}: \mA^\top\mA = \mI_n\}.
\]

The proof of the main theorem of this section relies on the following lemma.

\begin{lemma}\label{lem:Orthogonal_Matrix}
    Let $\vx \in \R^n$ be arbitrary. Then, there exists a linear isometry $\mA\in \mathrm{O}(n)$ such that
    \[
        \mA \vx =(\lVert \vx\rVert , 0, \ldots, 0)^\top.
    \]
\end{lemma}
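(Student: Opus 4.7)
The plan is to construct $\mA$ explicitly by exhibiting an orthonormal basis whose first element is aligned with $\vx$. First I would dispose of the degenerate case: if $\vx = \vzero$, then $\mA = \mI_n$ trivially satisfies $\mA \vx = \vzero = (\lVert \vx\rVert, 0, \ldots, 0)^\top$, and $\mI_n \in \mathrm{O}(n)$.

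For $\vx \neq \vzero$, the natural first step is to set $\ve_1 := \vx / \lVert \vx \rVert$, so that $\ve_1$ is a unit vector pointing in the direction of $\vx$. Since $\{\ve_1\}$ is linearly independent, I would extend it to a basis of $\R^n$ and then apply the Gram–Schmidt orthonormalization procedure to produce an orthonormal basis $\{\ve_1, \ve_2, \ldots, \ve_n\}$ of $\R^n$. The first vector is preserved by Gram–Schmidt since it is already a unit vector.

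Next, I would define $\mA \in \R^{n \times n}$ to be the matrix whose $i$-th row is $\ve_i^\top$. By construction, the rows of $\mA$ form an orthonormal basis, so $\mA\mA^\top = \mI_n$, which gives $\mA \in \mathrm{O}(n)$. Computing the image of $\vx$ entrywise yields
\[
    (\mA \vx)_i = \ve_i^\top \vx = \lVert \vx \rVert \, \ve_i^\top \ve_1 = \lVert \vx \rVert \, \delta_{i1},
\]
where the last step uses orthonormality of the basis. Hence $\mA \vx = (\lVert \vx \rVert, 0, \ldots, 0)^\top$ as required.

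There is no real obstacle in this proof; it is a textbook application of Gram–Schmidt. An alternative route, which I would mention only briefly, is to use the Householder reflection $\mA = \mI_n - \frac{2}{\lVert \vv \rVert^2} \vv \vv^\top$ with $\vv = \vx - \lVert \vx \rVert \ve_1$ (when $\vx \neq \lVert \vx \rVert \ve_1$), which is manifestly orthogonal and symmetric and by direct computation sends $\vx$ to $\lVert \vx \rVert \ve_1$. Either construction gives an explicit $\mA$, which will be convenient in the subsequent theorem where the ReLU nonlinearity is applied after this normalizing isometry.
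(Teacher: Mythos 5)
Your proof is correct and follows essentially the same construction as the paper: normalize $\vx$, extend to an orthonormal basis via Gram--Schmidt, and stack the basis vectors as rows of $\mA$. The brief mention of the Householder reflection is a nice alternative but does not change the substance.
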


\begin{proof}
    If $\lVert \vx\rVert  = 0$, the claim holds for every linear isometry $\mA \in \mathrm{O}(n)$. Hence, assume $\lVert \vx \rVert > 0$, and define the normalized vector 
    \[
        \vu_1 = \frac{\vx}{\lVert \vx\rVert}.
    \]
    Using the Basis Extension Theorem and the Gram-Schmidt Process, we can extend the set $\{\vu_1\}$ to an orthogonal basis $\{\vu_1, \ldots, \vu_n\}$ of $\R^n$. Then, the matrix 
    \[
        \mA = \begin{pmatrix} \vu_1^\top \\\vdots \\ \vu_n^\top \end{pmatrix} \in \mathrm{O}(n)
    \]
    satisfies $\mA \vx =(\lVert \vx\rVert , 0, \ldots, 0)^\top$ by construction.  
\end{proof}

We are now prepared to prove the main theorem of this section.

\begin{theoremMainResReLU}
    Let $\vx,\vy, \vz \in \R^n$, such that $\vz\not\in \SPAN\{\vx, \vy\}$ and
    \[
        \lVert \vx -\vy \rVert \geq \lVert \vx - \vz\rVert.
    \]
    Then, for $m\geq n$, there exists a linear isometry $\mA \in \R^{m\times n}$ and a bias vector $\vb \in \R^m$, such that
    \[
        \lVert \sigma(\mA\vx + \vb) - \sigma(\mA\vy +\vb) < \lVert\sigma(\mA\vx + \vb) - \sigma(\mA\vz +\vb) \rVert.
    \]
\end{theoremMainResReLU}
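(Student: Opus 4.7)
My plan is to construct an isometry $\mA$ and bias $\vb$ that align $\vx-\vy$ with a single coordinate axis and then use $\vb$ to push that coordinate into the negative half-space for both $\mA\vx+\vb$ and $\mA\vy+\vb$, while keeping every remaining coordinate strictly positive for all three inputs. Because the only coordinate in which $\mA\vx$ and $\mA\vy$ differ is annihilated by $\sigma$, their ReLU-images coincide exactly and contribute $0$ to the distance. Meanwhile the separation between $\mA\vx$ and $\mA\vz$ survives in a transverse coordinate, because $\vz-\vx$ necessarily carries a component orthogonal to $\vx-\vy$. Note that the hypothesis $\lVert\vx-\vy\rVert\geq\lVert\vx-\vz\rVert$ only motivates the statement; the construction itself produces a strict inequality whenever $\vz\notin\SPAN\{\vx,\vy\}$.

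\textbf{Key observation and construction.} The span hypothesis has two consequences. First, $\vx\neq\vy$, else $\lVert\vx-\vz\rVert\leq 0$ would force $\vz=\vx\in\SPAN\{\vx,\vy\}$. Second, $\vz-\vx$ is not a scalar multiple of $\vx-\vy$, else $\vz=(1+t)\vx-t\vy$ would again lie in the span. Setting $\vu_1:=(\vx-\vy)/\lVert\vx-\vy\rVert$ and $\vu_\perp:=(\vz-\vx)-\langle\vz-\vx,\vu_1\rangle\vu_1$, the latter observation yields $\delta:=\lVert\vu_\perp\rVert>0$. Define $\vu_2:=\vu_\perp/\delta$, extend $\{\vu_1,\vu_2\}$ to an orthonormal basis $\{\vu_1,\ldots,\vu_n\}$ of $\R^n$ (via Gram--Schmidt, as in Lemma~\ref{lem:Orthogonal_Matrix}), and let $\mA\in\R^{m\times n}$ be the matrix whose first $n$ rows are $\vu_1^\top,\ldots,\vu_n^\top$ and whose remaining $m-n$ rows are zero; then $\mA^\top\mA=\mI_n$, so $\mA$ is a linear isometry. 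By design, $\mA(\vx-\vy)=\lVert\vx-\vy\rVert\,\ve_1$ and $\mA(\vz-\vx)=\gamma\,\ve_1+\delta\,\ve_2$ for some $\gamma\in\R$, so $\mA\vx$ and $\mA\vy$ agree in every coordinate except the first.

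\textbf{Choice of bias and verification.} Take
\[
    \vb := -(\mA\vx)_1\,\ve_1 + L\sum_{i=2}^{m}\ve_i,
\]
with $L>0$ large enough that coordinates $2,\ldots,m$ of each of $\mA\vx+\vb$, $\mA\vy+\vb$, and $\mA\vz+\vb$ are strictly positive --- finitely many linear constraints, jointly satisfiable. In coordinate $1$, $\mA\vx+\vb$ equals $0$ and $\mA\vy+\vb$ equals $-\lVert\vx-\vy\rVert<0$, both mapped to $0$ by $\sigma$; in coordinates $2,\ldots,m$, the two vectors are equal (since $\mA(\vx-\vy)$ is supported only on $\ve_1$) and positive, hence unchanged by $\sigma$. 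Therefore $\sigma(\mA\vx+\vb)=\sigma(\mA\vy+\vb)$, giving a left-hand side of $0$. On the other hand, coordinate $2$ of $\sigma(\mA\vz+\vb)-\sigma(\mA\vx+\vb)$ equals $\delta>0$, so the right-hand side is at least $\delta$, and strict inequality follows. I do not foresee a substantial obstacle: the proof is essentially constructive, the only conceptual point being that the span hypothesis is exactly what produces the nonzero transverse direction $\vu_2$, and the case $m>n$ is handled trivially by zero-padding the rows of $\mA$ and extending $\vb$ with positive entries.
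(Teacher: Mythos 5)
Your proof is correct, and it is cleaner than the paper's own argument. The paper's proof starts by rotating $\vx$ itself onto an axis (via their Lemma~\ref{lem:Orthogonal_Matrix}) and then must branch into a degenerate case $\|\vx\|=0$, a case where $\vy\in\SPAN\{\vx\}$, and a case where it is not, each requiring its own bias construction; the bias is chosen so that $\sigma(\mA\vx+\vb)$ and $\sigma(\mA\vy+\vb)$ both collapse to the zero vector while a coordinate of $\mA\vz$ survives. You instead build the orthonormal frame from the \emph{difference} $\vx-\vy$ (for $\vu_1$) and the component of $\vz-\vx$ orthogonal to it (for $\vu_2$), which is always nondegenerate once $\vx\neq\vy$ and $\vz\notin\SPAN\{\vx,\vy\}$, so no case split is needed. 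Your bias strategy is also different: rather than annihilating everything, you push all transverse coordinates into the strictly positive region (where $\sigma$ is the identity) and only annihilate the single coordinate carrying $\mA(\vx-\vy)$, so $\sigma(\mA\vx+\vb)=\sigma(\mA\vy+\vb)$ as a nonzero vector. Both routes give a left-hand side of $0$ and a right-hand side bounded below by a positive transverse component; yours arrives there more uniformly. Two small remarks: (1) your aside that the hypothesis $\|\vx-\vy\|\geq\|\vx-\vz\|$ is not used in the construction is slightly overstated, since you do invoke it to rule out $\vx=\vy$ (which is needed for $\vu_1$ to be defined) --- you acknowledge this, but the phrasing could mislead; (2) for $m>n$ the zero rows of $\mA$ send $\vx,\vy,\vz$ all to $0$ in coordinates $n{+}1,\ldots,m$, and your $\vb$ adds $L>0$ there, so these coordinates stay equal and positive for all three points --- this is consistent with your argument, just worth stating explicitly.
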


\begin{remark}
    As shown above, a wide linear neural network cannot change the geometry of the features, since its weight matrices are almost isometries. However, as Theorem~\ref{th:Main_res_ReLU} demonstrates, this is no longer the case once the ReLU activation function is introduced: for any three vertices, the ordering of their pairwise distances can be altered by applying an orthogonal matrix followed by the ReLU activation, thereby rewiring the $k$-nearest neighbor graph.
\end{remark}

\begin{proof}
    Without loss of generality, we may assume $m=n$. In the case $m>n$, any $n$-dimensional vector can be embedded into $\R^m$ by appending $m-n$ zero coordinates. If $\lVert \vx \rVert = 0$, then by assumption $\lVert \vy\rVert>0$ must hold. According to Lemma~\ref{lem:Orthogonal_Matrix}, there exists $\mA_1 \in \mathrm{O}(n)$, such that
    \[  
        \mA_1\vy = (-\lVert \vy\rVert, 0,\ldots, 0)^\top.
    \]
    By assumption, we have $\vz\not\in \SPAN\{\vx,\vy\}=\SPAN\{\vy\}$. Therefore, there exists $i \in \{2, \ldots, n\}$ such that $(\mA_1\vz)_i \neq 0$. Without loss of generality, we may assume that $(\mA_1\vz)_i >0$. Choose $\vb$ to be the zero vector in $\R^n$. Then, 
    \[
     \lVert \sigma(\mA_1\vx+\vb) - \sigma(\mA_1\vy + \vb)\rVert = 0 < ((\mA_1\vz)_i)^2 \leq \Vert\sigma(\mA_1\vx+\vb) - \sigma(\mA_1\vz + \vb)\rVert.
    \]
    Thus, we may assume $\lVert x\rVert > 0$.

    According to Lemma~\ref{lem:Orthogonal_Matrix}, there exists $\mA_1 \in \mathrm{O}(n)$, such that
    \[  
        \mA_1\vx = (-\lVert \vx\rVert, 0,\ldots, 0)^\top.
    \]
    The proof proceeds by cases.
    
    \textit{Case 1: $\vy\in \SPAN\{\vx\}$}. Hence, there exists $\alpha \in \R$ such that $\vy = \alpha \vx$. Thus, we obtain
    \[
        \mA_1 \vy = (-\alpha\lVert \vx \rVert, 0, \ldots, 0).
    \]
    By assumption, we have $\vz\not\in \SPAN\{\vx\}$. Therefore, there exists $i \in \{2, \ldots, n\}$ such that $(\mA_1\vz)_i \neq 0$. Without loss of generality, we may assume that $(\mA_1\vz)_i >0$. Define the bias vector 
    \[
        \vb =
        \begin{cases}
        (\alpha \lVert \vx\rVert, 0, \ldots, 0), & \text{if }\alpha <0,\\[1ex]
        \mathbf{0}, & \text{otherwise,}
        \end{cases}
    \]
    where $\mathbf{0}\in \R^n$ denotes the zero vector. Thus, by construction, we obtain
    \[
     \lVert \sigma(\mA_1\vx+\vb) - \sigma(\mA_1\vy + \vb)\rVert = 0 < ((\mA_1\vz)_i)^2 \leq \Vert\sigma(\mA_1\vx+\vb) - \sigma(\mA_1\vz + \vb)\rVert.
    \]

    \textit{Case 2: $\vy\not\in\SPAN\{\vx\}$}.
    Denote by 
    \[
    (\mA_1\vy)_{-1} = ((\mA_1\vy)_2, \ldots, (\mA_1\vy)_n)\in \mathbb{R}^{n-1}
    \]
    the vector obtained from $\mA_1\vy$ by removing its first coordinate. Note that $\lVert (\mA_1\vy)_{-1}\rVert>0$, since $\vy\not\in\SPAN\{\vx\}$. Define
    \[
        \tilde{\vu}_1 = \frac{(\mA_1\vy)_{-1}}{\lVert (\mA_1\vy)_{-1}\rVert}.
    \]
    and $\vu_1 = (0, -\tilde{\vu}_1)\in \mathbb{R}^n$, so that the first coordinate of $\vu_1$ is zero and the remaining coordinates are given by \(\tilde{\vu}_1\).
    
    Denote by $\ve^{(i)}$ the $i$-th standard basis vector. The set $\{\ve^{(1)}, \vu_1\}$  forms an orthonormal system and can therefore be extended to an orthonormal basis $\{\ve^{(1)}, \vu_1, \ldots, \vu_{n-1}\}$ of $\R^n$ using the Basis Extension Theorem together with the Gram–Schmidt Process. Note that for every $i\in \{1,\ldots, n-1\}$, we have $\langle\vu_i, \ve^{(1)}\rangle =0$, and therefore $\langle \vu_i, \mA_1\vx\rangle = 0$. Define the matrix 
    \[
        \mA_2 = \begin{pmatrix} \ve^{(1)\top} \\ \vu_1^\top \\\vdots \\ \vu_{n-1}^\top \end{pmatrix} \in O(n) \quad \text{and}\quad \mA = \mA_2 \mA_1\in \mathrm{O}(n).
    \]
    By construction, we obtain
    \[
        \mA\vx = (-\lVert \vx \rVert, 0,\ldots , 0)^\top \quad \text{and} \quad \mA\vy = ((\mA_1\vy)_1, -\lVert (\mA_1\vy)_{-1}\rVert, 0, \ldots, 0).
    \]
    By assumption, we have $\vz\not \in \SPAN\{\vx,\vy\}$. Hence, there exists $i\in \{3, \ldots, n\}$ such that $(\mA\vz)_i \neq 0$. Without loss of generality, we may assume that $(\mA\vz)_i > 0$.

    Finally, define the bias vector 
    \[
        \vb =
        \begin{cases}
        (-(\mA_1 \vy)_1, 0, \ldots, 0), & \text{if } (\mA_1 \vy)_1 > 0,\\[1ex]
        \mathbf{0}, & \text{otherwise.}
        \end{cases}
    \]
    Therefore, by construction, we obtain
    \[
     \lVert \sigma(\mA\vx+\vb) - \sigma(\mA\vy + \vb)\rVert = 0 < ((\mA\vz)_i)^2 \leq \Vert\sigma(\mA\vx+\vb) - \sigma(\mA\vz + \vb)\rVert.
    \]
\end{proof}

\subsection{Additional experimental results}

\subsubsection{Experimental confirmation of theoretical insights}\label{appendix:Experimental_Validation}

We supplement the theoretical results of Section~\ref{Sec:Theoretical_results} with experimental validation. Specifically, we sample points uniformly from the $d$-dimensional unit ball and construct $k$-nearest neighbor graphs on these point clouds. Note that any point cloud can be rescaled to the unit ball without altering its $k$-nearest neighbor graph, ensuring generality of this setup. For varying network widths, we apply randomly initialized linear neural networks and test whether the induced graphs remain isomorphic to the original ones. Figure~\ref{fig:feature_geometry} reports the proportion of linear neural networks that preserve the $k$-nearest neighbor and $r$-neighborhood graphs across different widths. For each width, 1,000 linear neural networks were independently initialized. Consistent with our theoretical predictions, the preservation probability converges to one as the width increases. The faster convergence observed for $r$-neighborhood graphs is explained by the fact that the maximal $\epsilon$ satisfying the condition of Theorem~\ref{th:Random_Geometric_Graphs} was larger than the corresponding bound from Theorem~\ref{th:kNN_randomly_init}.

Across all experiments, we find that the network widths required for the estimated probabilities to exceed a given threshold $1-\delta$ are in practice smaller than the widths for which Theorems~\ref{th:kNN_randomly_init} and~\ref{th:Random_Geometric_Graphs} guarantee this.
This is expected, since the proofs rely on Boole’s inequality, which generally does not provide a tight bound for the probability of a union.

\begin{figure}[t]
    \includegraphics[scale=0.4]{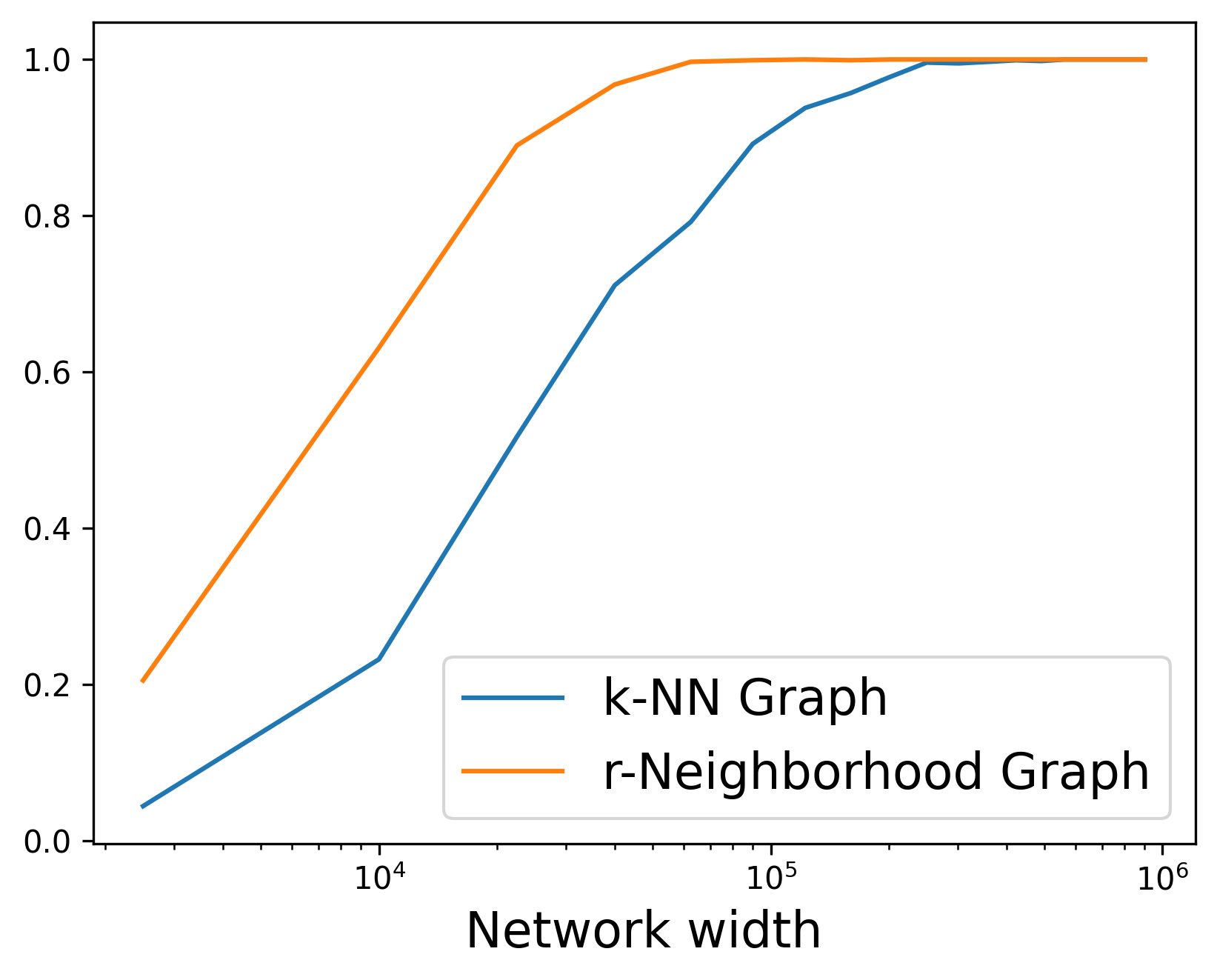}
\centering
\caption{Proportion of linear neural networks that preserve the $k$-nearest neighbor and $r$-neighborhood graphs, constructed from the feature manifolds, across different network widths. The graphs are built from a point cloud of 50 samples in the 3-dimensional unit ball. We consider the 5-nearest neighbor graph, and for the $r$-neighborhood graph we set the radius equal to 0.3.}
\label{fig:feature_geometry}
\end{figure}

\subsubsection{Local Ricci evolution coefficients} \label{appendix:Local_Ricci_Coefficients}

In this subsection, we present additional experimental results for the computation of local Ricci evolution coefficients. In addition to the Ollivier-Ricci curvature, we also compute the coefficients using the augmented Forman curvature and the approximation of Ollivier-Ricci curvature proposed by \citet{tian2025curvature}. For all curvature notions, we evaluate both our synthetic and real-world datasets by training deep neural networks of varying width and depth and subsequently computing the local Ricci evolution coefficients. We average our results over 50 independently trained networks for each dataset-architecture pair, to account for the inherent randomness in neural network training, making sure our observed patterns are robust rather than accidental.

\begin{figure}[ht]
    \includegraphics[scale=0.4]{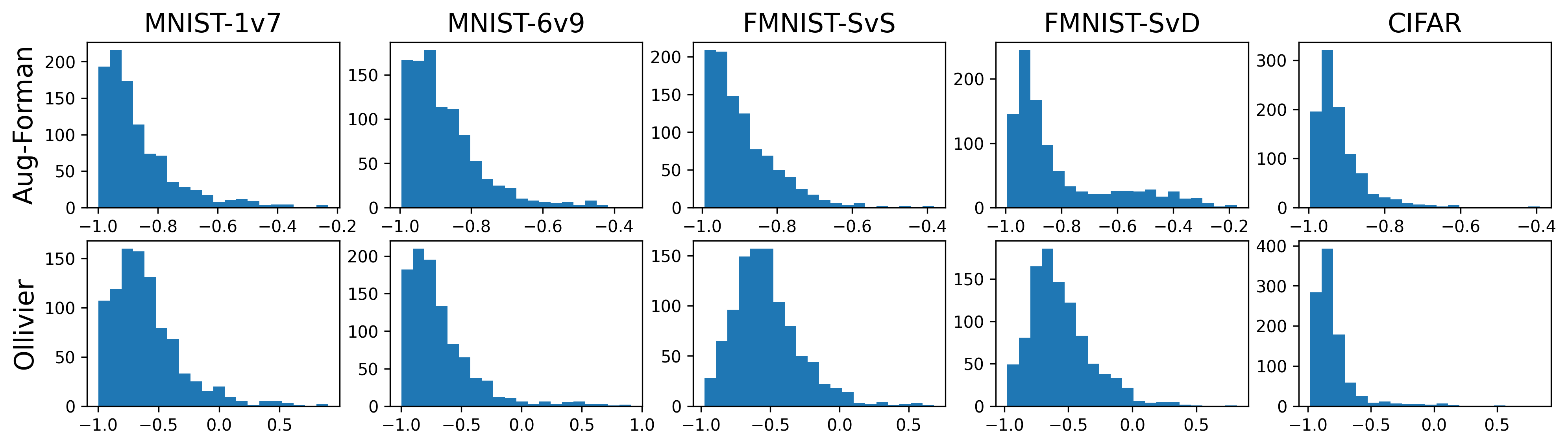}
\centering
\caption{Distribution of local Ricci evolution coefficients for networks of depth 15 and width 50 on real-world datasets, shown for augmented Forman–Ricci curvature (top row) and Ollivier–Ricci curvature (bottom row).}
\label{fig:Ricci_coefficient_distribution}
\end{figure}

Results on real-world datasets using augmented Forman curvature and approximated Ollivier curvature are reported in Table~\ref{local-Ricci-coefs-real-world-data-aug-forman} and Table~\ref{local-Ricci-coefs-real-world-data-approx-0llivier}, respectively. In all cases, we observe strongly negative local Ricci evolution coefficients, highlighting pronounced curvature-driven dynamics in the evolution of feature geometry. To further support this finding, we evaluate the proportion of vertices with negative local coefficients, consistently showing that the vast majority of vertices exhibit such behavior. Hence, curvature-driven dynamics appear almost universally across the data manifold. Figure~\ref{fig:Ricci_coefficient_distribution} shows the entire distribution of local Ricci evolution coefficients on the real-world datasets for both Ollivier–Ricci curvature and augmented Forman-Ricci curvature. Complementary results on synthetic datasets are provided in Table~\ref{local-Ricci-coefs-synthetic-ollivier} for Ollivier-Ricci curvature, in Table~\ref{local-Ricci-coefs-synthetic-forman} for augmented Forman-Ricci curvature, and in Table~\ref{local-Ricci-coefs-synthetic-approx-ollivier} for approximated Ollivier-Ricci curvature. The results are consistent with those observed on the real-world datasets.

Furthermore, we observe consistent results for all three discretizations of Ricci curvature. The numerical values obtained using the augmented Forman-Ricci curvature and the approximation of the Ollivier-Ricci curvature are nearly identical, which is expected since both curvature notions are primarily influenced by three-cycles. Moreover, \citet{jost2021characterizations} show that Ollivier–Ricci curvature coincides with the maximal Forman curvature over cell complexes having the given graph as their 1-skeleton, providing a theoretical explanation for the close agreement observed across the different notions.

\begin{table}[t]
\caption{Average local Ricci evolution coefficients, computed using augmented Forman curvature, on real-world data. Values are means $\pm$ standard deviations over 50 independently trained networks per architecture; proportion of vertices with negative coefficients is reported in parentheses. Networks were randomly initialized.}
\label{local-Ricci-coefs-real-world-data-aug-forman}
\scriptsize
\begin{center}
\setlength{\tabcolsep}{1.5pt}
\begin{tabular}{lccccc}
\multicolumn{1}{c}{\bf (Width,Depth)}& 
\multicolumn{1}{c}{\bf MNIST-1v7} & 
\multicolumn{1}{c}{\bf MNIST-6v9} & 
\multicolumn{1}{c}{\bf FMNIST-SvS} & 
\multicolumn{1}{c}{\bf FMNIST-SvD} &
\multicolumn{1}{c}{\bf CIFAR} 
\\ \hline \\
$(15,7)$    & $-0.82\pm0.06\,(98.5\%)$ & $-0.79\pm0.09\,(97.5\%)$ & $-0.81\pm0.08\,(98.7\%)$ & $-0.53\pm0.13\,(91.2\%)$ & $-0.73\pm0.11\,(98.2\%)$ \\
$(15,10)$   & $-0.83\pm0.05\,(99.3\%)$ & $-0.83\pm0.06\,(99.5\%)$ & $-0.82\pm0.06\,(99.8\%)$ & $-0.58\pm0.16\,(94.0\%)$ & $-0.76\pm0.15\,(97.5\%)$ \\
$(15,15)$   & $-0.84\pm0.05\,(99.8\%)$ & $-0.88\pm 0.03\,(99.9\%)$ & $-0.86\pm0.05\,(99.9\%)$ & $-0.66\pm0.13\,(97.5\%)$ & $-0.79\pm0.21\,(97.4\%)$ \\
\hline
$(25,7)$    & $-0.80\pm0.05\,(97.8\%)$ & $-0.69\pm 0.18\,(94.0\%)$ & $-0.80\pm0.04\,(99.5\%)$ & $-0.54\pm0.11\,(91.9\%)$ & $-0.69\pm0.13\,(96.7\%)$ \\
$(25,10)$   & $-0.83\pm0.07\,(99.0\%)$ & $-0.83\pm0.06\,(99.4\%)$ & $-0.83\pm0.05\,(99.8\%)$ & $-0.62\pm0.12\,(96.2\%)$ & $-0.80\pm0.09\,(99.3\%)$ \\
$(25,15)$   & $-0.83\pm0.05\,(99.8 \%)$ & $-0.85\pm0.04\,(99.9\%)$ & $-0.85\pm0.04\,(100\%)$ & $-0.74\pm0.07\,(99.4\%)$ & $-0.90\pm 0.03\,(99.9\%)$ \\
\hline
$(50,7)$    & $-0.81\pm0.04\,(98.3\%)$ & $-0.61\pm0.16\,(91.1\%)$ & $-0.79\pm0.04\,(99.3\%)$ & $-0.58\pm0.10\,(92.9\%)$ & $-0.76\pm0.08\,(98.8\%)$ \\
$(50,10)$   & $-0.84\pm0.03\,(99.7\%)$ & $-0.79\pm0.09\,(98.5\%)$ & $-0.84\pm0.05\,(99.8\%)$ & $-0.70\pm0.12\,(97.2\%)$ & $-0.87\pm0.03\,(100\%)$ \\
$(50,15)$   & $-0.83\pm0.04\,(99.9\%)$ & $-0.86\pm0.05\,(99.9\%)$ & $-0.88\pm0.02\,(100\%)$ & $-0.80\pm0.06\,(100\%)$ & $-0.91\pm0.02\,(100\%)$ \\
\end{tabular}
\end{center}
\end{table}

\begin{table}[t]
\caption{Average local Ricci evolution coefficients, computed using approximated Ollivier curvature, on real-world data. Values are means $\pm$ standard deviations over 50 independently trained networks per architecture; proportion of vertices with negative coefficients is reported in parentheses. Networks were randomly initialized.}
\label{local-Ricci-coefs-real-world-data-approx-0llivier}
\scriptsize
\begin{center}
\setlength{\tabcolsep}{1.5pt}
\begin{tabular}{lccccc}
\multicolumn{1}{c}{\bf (Width,Depth)}& 
\multicolumn{1}{c}{\bf MNIST-1v7} & 
\multicolumn{1}{c}{\bf MNIST-6v9} & 
\multicolumn{1}{c}{\bf FMNIST-SvS} & 
\multicolumn{1}{c}{\bf FMNIST-SvD} &
\multicolumn{1}{c}{\bf CIFAR} 
\\ \hline \\
$(15,7)$    & $-0.75\pm0.21\,(94.6\%)$ & $-0.76\pm0.12\,(96.4\%)$ & $-0.77\pm0.07\,(98.8\%)$ & $-0.50\pm0.13\,(89.5\%)$ & $-0.66\pm0.13\,(96.5\%)$ \\
$(15,10)$   & $-0.82\pm0.05\,(99.5\%)$ & $-0.84\pm0.04\,(99.7\%)$ & $-0.77\pm0.06\,(99.7\%)$ & $-0.54\pm0.16\,(92.7\%)$ & $-0.69\pm0.16\,(96.4\%)$ \\
$(15,15)$   & $-0.83\pm0.04\,(99.8\%)$ & $-0.83\pm0.07\,(99.1\%)$ & $-0.81\pm0.06\,(99.9\%)$ & $-0.64\pm0.14\,(97.0\%)$ & $-0.75\pm0.18\,(97.2\%)$ \\
\hline
$(25,7)$    & $-0.79\pm0.05\,(98.1\%)$ & $-0.66\pm 0.20\,(92.3\%)$ & $-0.76\pm0.04\,(99.2\%)$ & $-0.50\pm0.12\,(90.0\%)$ & $-0.62\pm0.15\,(94.6\%)$ \\
$(25,10)$   & $-0.81\pm0.04\,(99.4\%)$ & $-0.82\pm0.05\,(99.6\%)$ & $-0.78\pm0.04\,(99.6\%)$ & $-0.59\pm0.12\,(95.3\%)$ & $-0.74\pm0.10\,(98.9\%)$ \\
$(25,15)$   & $-0.81\pm0.06\,(99.8\%)$ & $-0.84\pm0.03\,(100\%)$ & $-0.79\pm0.04\,(100\%)$ & $-0.72\pm0.08\,(99.2\%)$ & $-0.86\pm0.04\,(99.9\%)$ \\
\hline
$(50,7)$    & $-0.79\pm0.04\,(98.5\%)$ & $-0.57\pm0.21\,(87.9\%)$ & $-0.75\pm0.04\,(98.9\%)$ & $-0.56\pm0.11\,(91.4\%)$ & $-0.69\pm0.10\,(98.1)\%$ \\
$(50,10)$   & $-0.82\pm0.03\,(99.8\%)$ & $-0.82\pm0.04\,(99.7\%)$ & $-0.80\pm0.05\,(99.7\%)$ & $-0.68\pm0.12\,(96.6\%)$ & $-0.83\pm0.03\,(100\%)$ \\
$(50,15)$   & $-0.83\pm0.04\,(99.9\%)$ & $-0.84\pm0.04\,(100\%)$ & $-0.83\pm0.03\,(100\%)$ & $-0.78\pm0.07\,(100\%)$ & $-0.89\pm0.03\,(100\%)$ \\
\end{tabular}
\end{center}
\end{table}

\begin{table}[t]
\caption{Average local Ricci evolution coefficients, computed using Ollivier curvature, on synthetic data. Values are means $\pm$ standard deviations over 50 independently trained networks per architecture; proportion of vertices with negative coefficients is reported in parentheses. Networks were randomly initialized.}
\label{local-Ricci-coefs-synthetic-ollivier}
\begin{center}
\footnotesize
\begin{tabular}{lcccc}
\multicolumn{1}{l}{\bf(Width,Depth)} & 
\multicolumn{1}{c}{\bf Syn-I} & 
\multicolumn{1}{c}{\bf Syn-II} & 
\multicolumn{1}{c}{\bf Syn-III } & 
\multicolumn{1}{c}{\bf Syn-IV} 
\\ \hline \\
(15,7)    & $-0.38\pm0.07\,(80.9\%)$ & $-0.31\pm0.11\,(78.2\%)$ & $-0.53\pm0.09\,(92.1\%)$ & $-0.39\pm0.09\,(82.9\%)$   \\
(15,10)   & $-0.43\pm0.07\,(83.9\%)$ & $-0.29\pm0.14\,(81.2\%)$ & $-0.59\pm0.09\,(92.7\%)$ & $-0.45\pm0.10\,(87.1\%)$   \\
(15,15)   & $-0.43\pm0.12\,(81.0\%)$ & $-0.36\pm0.09\,(84.9\%)$ & $-0.64\pm0.07\,(93.9\%)$ & $-0.49\pm0.13\,(86.2\%)$   \\
\hline
(25,7)    & $-0.37\pm0.06\,(81.9\%)$ & $-0.34\pm0.10\,(79.7\%)$ & $-0.56\pm0.07\,(93.8\%)$ & $-0.32\pm0.09\,(77.4\%)$   \\
(25,10)   & $-0.43\pm0.07\,(83.8\%)$ & $-0.37\pm0.08\,(86.9\%)$ & $-0.63\pm0.05\,(96.2\%)$ & $-0.40\pm0.09\,(85.4\%)$   \\
(25,15)   & $-0.49\pm0.04\,(86.5\%)$ & $-0.40\pm0.05\,(87.6\%)$ & $-0.69\pm0.04\,(95.5\%)$ & $-0.51\pm0.05\,(90.3\%)$   \\
\hline
(50,7)    & $-0.38\pm0.06\,(83.2\%)$ & $-0.38\pm0.07\,(81.6\%)$ & $-0.59\pm0.05\,(96.3\%)$ & $-0.29\pm0.05\,(74.9\%)$   \\
(50,10)   & $-0.47\pm0.05\,(88.0\%)$ & $-0.41\pm0.05\,(86.9\%)$ & $-0.66\pm0.05\,(97.3\%)$ & $-0.34\pm0.07\,(81.8\%)$   \\
(50,15)   & $-0.53\pm0.04\,(89.1\%)$ & $-0.42\pm0.04\,(88.2\%)$ & $-0.72\pm0.03\,(97.0\%)$ & $-0.53\pm0.06\,(91.7\%)$   \\
\end{tabular}
\end{center}
\end{table}

\begin{table}[t]
\caption{Average local Ricci evolution coefficients, computed using augmented Forman curvature, on synthetic data. Values are means $\pm$ standard deviations over 50 independently trained networks per architecture; proportion of vertices with negative coefficients is reported in parentheses. Networks were randomly initialized.}
\label{local-Ricci-coefs-synthetic-forman}
\footnotesize
\begin{center}
\begin{tabular}{lcccc}
\multicolumn{1}{l}{\bf(Width,Depth)} & 
\multicolumn{1}{c}{\bf Syn-I} & 
\multicolumn{1}{c}{\bf Syn-II} & 
\multicolumn{1}{c}{\bf Syn-III } & 
\multicolumn{1}{c}{\bf Syn-IV} 
\\ \hline \\
(15,7)    & $-0.43\pm0.10\,(87.2\%)$ & $-0.32\pm0.16\,(78.4\%)$ & $-0.64\pm0.08\,(97.6\%)$ & $-0.37\pm0.12\,(81.7\%)$   \\
(15,10)   & $-0.51\pm0.16\,(90.4\%)$ & $-0.34\pm0.12\,(87.4\%)$ & $-0.72\pm0.10\,(98.5\%)$ & $-0.48\pm0.13\,(90.0\%)$   \\
(15,15)   & $-0.63\pm0.10\,(95.3\%)$ & $-0.45\pm0.09\,(91.6\%)$ & $-0.70\pm0.15\,(96.0\%)$ & $-0.63\pm0.20\,(94.3\%)$   \\
\hline
(25,7)    & $-0.43\pm0.09\,(88.4\%)$ & $-0.36\pm0.14\,(81.6\%)$ & $-0.63\pm0.08\,(97.6\%)$ & $-0.27\pm0.09\,(73.7\%)$   \\
(25,10)   & $-0.57\pm0.07\,(95.5\%)$ & $-0.40\pm0.10\,(91.5\%)$ & $-0.74\pm0.06\,(99.3\%)$ & $-0.44\pm0.11\,(88.1\%)$   \\
(25,15)   & $-0.65\pm0.09\,(97.0\%)$ & $-0.50\pm0.07\,(95.9\%)$ & $-0.75\pm0.12\,(98.2\%)$ & $-0.67\pm0.07\,(97.4\%)$   \\
\hline
(50,7)    & $-0.39\pm0.09\,(86.0\%)$ & $-0.45\pm0.08\,(89.0\%)$ & $-0.62\pm0.07\,(97.4\%)$ & $-0.22\pm0.08\,(68.7\%)$   \\
(50,10)   & $-0.58\pm0.07\,(96.7\%)$ & $-0.50\pm0.08\,(95.5\%)$ & $-0.76\pm0.05\,(99.8\%)$ & $-0.33\pm0.10\,(79.8\%)$   \\
(50,15)   & $-0.69\pm0.06\,(98.4\%)$ & $-0.55\pm0.05\,(97.1\%)$ & $-0.81\pm0.03\,(99.8\%)$ & $-0.63\pm0.10\,(96.5\%)$   \\
\end{tabular}
\end{center}
\end{table}

\begin{table}[t]
\caption{Average local Ricci evolution coefficients, computed using approximated Ollivier curvature, on synthetic data. Values are means $\pm$ standard deviations over 50 independently trained networks per architecture; proportion of vertices with negative coefficients is reported in parentheses. Networks were randomly initialized.}
\label{local-Ricci-coefs-synthetic-approx-ollivier}
\begin{center}
\footnotesize
\begin{tabular}{lcccc}
\multicolumn{1}{l}{\bf(Width,Depth)} & 
\multicolumn{1}{c}{\bf Syn-I} & 
\multicolumn{1}{c}{\bf Syn-II} & 
\multicolumn{1}{c}{\bf Syn-III } & 
\multicolumn{1}{c}{\bf Syn-IV} 
\\ \hline \\
(15,7)    & $-0.44\pm0.10\,(89.6\%)$ & $-0.37\pm0.16\,(82.9\%)$ & $-0.64\pm0.11\,(96.6\%)$ & $-0.36\pm0.11\,(81.0\%)$   \\
(15,10)   & $-0.52\pm0.11\,(92.6\%)$ & $-0.39\pm0.08\,(91.5\%)$ & $-0.72\pm0.12\,(98.1\%)$ & $-0.51\pm0.16\,(90.6\%)$   \\
(15,15)   & $-0.60\pm0.15\,(94.7\%)$ & $-0.44\pm0.11\,(91.6\%)$ & $-0.73\pm0.19\,(96.2\%)$ & $-0.65\pm0.13\,(95.9\%)$   \\
\hline
(25,7)    & $-0.43\pm0.08\,(89.5\%)$ & $-0.34\pm0.13\,(83.0\%)$ & $-0.64\pm0.08\,(97.8\%)$ & $-0.26\pm0.11\,(72.6\%)$   \\
(25,10)   & $-0.56\pm0.07\,(95.8\%)$ & $-0.44\pm0.09\,(94.2\%)$ & $-0.75\pm0.05\,(99.7\%)$ & $-0.44\pm0.13\,(87.5\%)$   \\
(25,15)   & $-0.66\pm0.04\,(98.3\%)$ & $-0.49\pm0.08\,(94.9\%)$ & $-0.79\pm0.05\,(99.6\%)$ & $-0.64\pm0.13\,(96.1\%)$   \\
\hline
(50,7)    & $-0.38\pm0.07\,(87.5\%)$ & $-0.46\pm0.07\,(91.2\%)$ & $-0.66\pm0.06\,(98.7\%)$ & $-0.22\pm0.08\,(68.4\%)$   \\
(50,10)   & $-0.60\pm0.06\,(97.4\%)$ & $-0.53\pm0.04\,(97.3\%)$ & $-0.77\pm0.04\,(100\%)$ & $-0.35\pm0.09\,(81.2\%)$   \\
(50,15)   & $-0.69\pm0.04\,(99.1\%)$ & $-0.55\pm0.05\,(97.5\%)$ & $-0.83\pm0.03\,(100\%)$ & $-0.64\pm0.08\,(97.9\%)$   \\
\end{tabular}
\end{center}
\end{table}

\subsubsection{Community structure}\label{appendix:community_structure}

\begin{figure}[t]
    \includegraphics[scale=0.4]{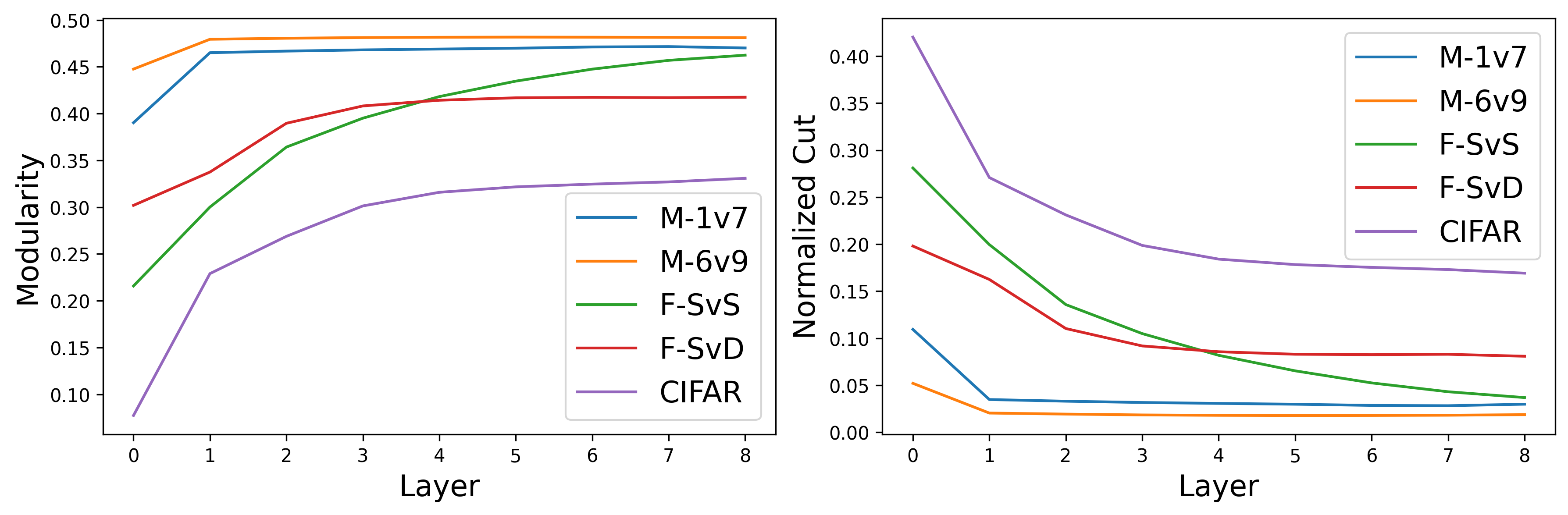}
\centering
\caption{Modularity and normalized cut across network layers on real-world datasets. Reported values are averaged over 50 independently trained networks with random initialization.}
\label{fig:modularity_real}
\end{figure}

In this section, we examine how the curvature gap evolves as the data manifold propagates through the layers of the deep neural network. Whereas both modularity and the normalized cut provide clear evidence that the network rewires the $k$-nearest neighbor graph derived from the point clouds such that its geometry aligns more closely with the community structure induced by the true labels (see Figure~\ref{fig:modularity_synthetic} and Figure~\ref{fig:modularity_real}), the behavior of the curvature gap is less straightforward.

The explanation for this is that most inter-community edges connect misclassified nodes to correctly classified nodes with the same label, making them indistinguishable from intra-community edges. This effect is clearly illustrated in Figure~\ref{fig:curvature_distribution}, where we show the full curvature distribution on the MNIST 1-vs-7 dataset, comparing inter-community edges (orange) and intra-community edges (blue). As expected, intra-community edges systematically shift toward more positive curvature values as the $k$-nearest neighbor graphs are transformed through the layers of the deep neural network. In contrast, the behavior of inter-community edges is more intricate.
The left column displays the distributions computed on the entire test set. In the final layer, two structurally distinct types of inter-community edges emerge. The majority exhibit positive curvature and vanish once the five misclassified points are removed. These are precisely the edges described above, connecting a misclassified point with a correctly classified one. In contrast, a small subset of inter-community edges remains, characterized by highly negative curvature values. These correspond to the true inter-community edges. This distinction explains the vanishing of the curvature gaps before removing the misclassified samples, and we find the same qualitative pattern consistently across all synthetic and real-world datasets considered.

\begin{figure}[ht]
    \includegraphics[scale=0.3]{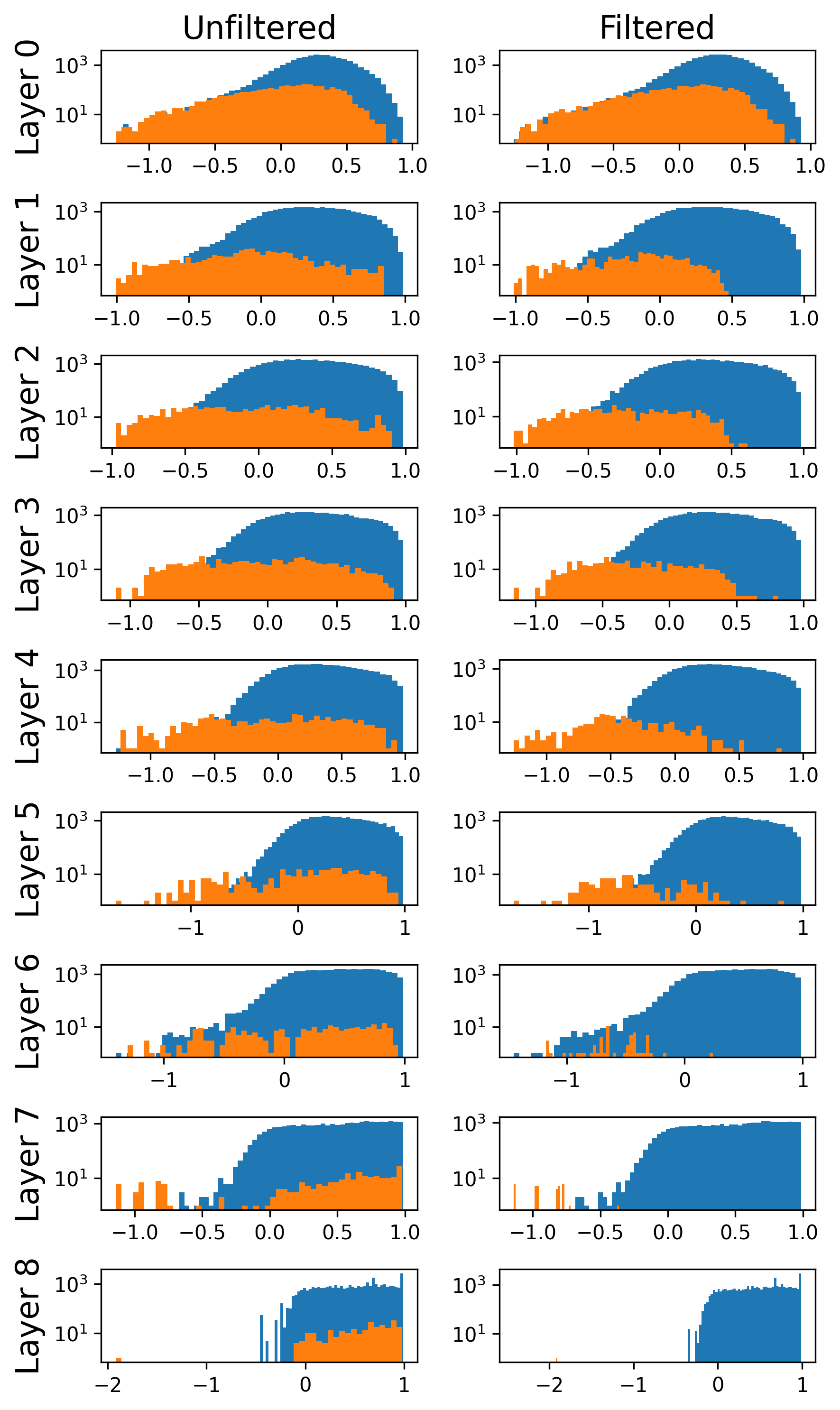}
\centering
\caption{Curvature distributions for inter-community edges (orange) and intra-community edges (blue) on MNIST 1-vs-7 before (left column) and after (right column) removing misclassified samples.}
\label{fig:curvature_distribution}
\end{figure}

\subsubsection{Double descent phenomenon}\label{appendix:double_descent}

In modern machine learning, it is common to train extremely large and heavily overparameterized models that achieve zero training error while still exhibiting strong generalization performance. This surprising behavior is captured by the \emph{double descent} phenomenon, introduced by \citet{belkin2019reconciling}, which refines the classical view of the bias–variance trade-off. Whereas the traditional theory predicts a U-shaped generalization curve as model capacity increases, double descent reveals an additional regime: once the interpolation threshold is crossed, generalization error can decrease again with increasing capacity.
Recent work has shown that this phenomenon is a fundamental property of overparameterized models, appearing across a wide range of settings including neural networks, ensemble methods, decision trees, and classical linear regression \citep{belkin2019reconciling,Ba2020Generalization, deng2022model}.

\begin{wrapfigure}{r}{0.3\textwidth}
  \centering
  \includegraphics[width=0.25\textwidth]{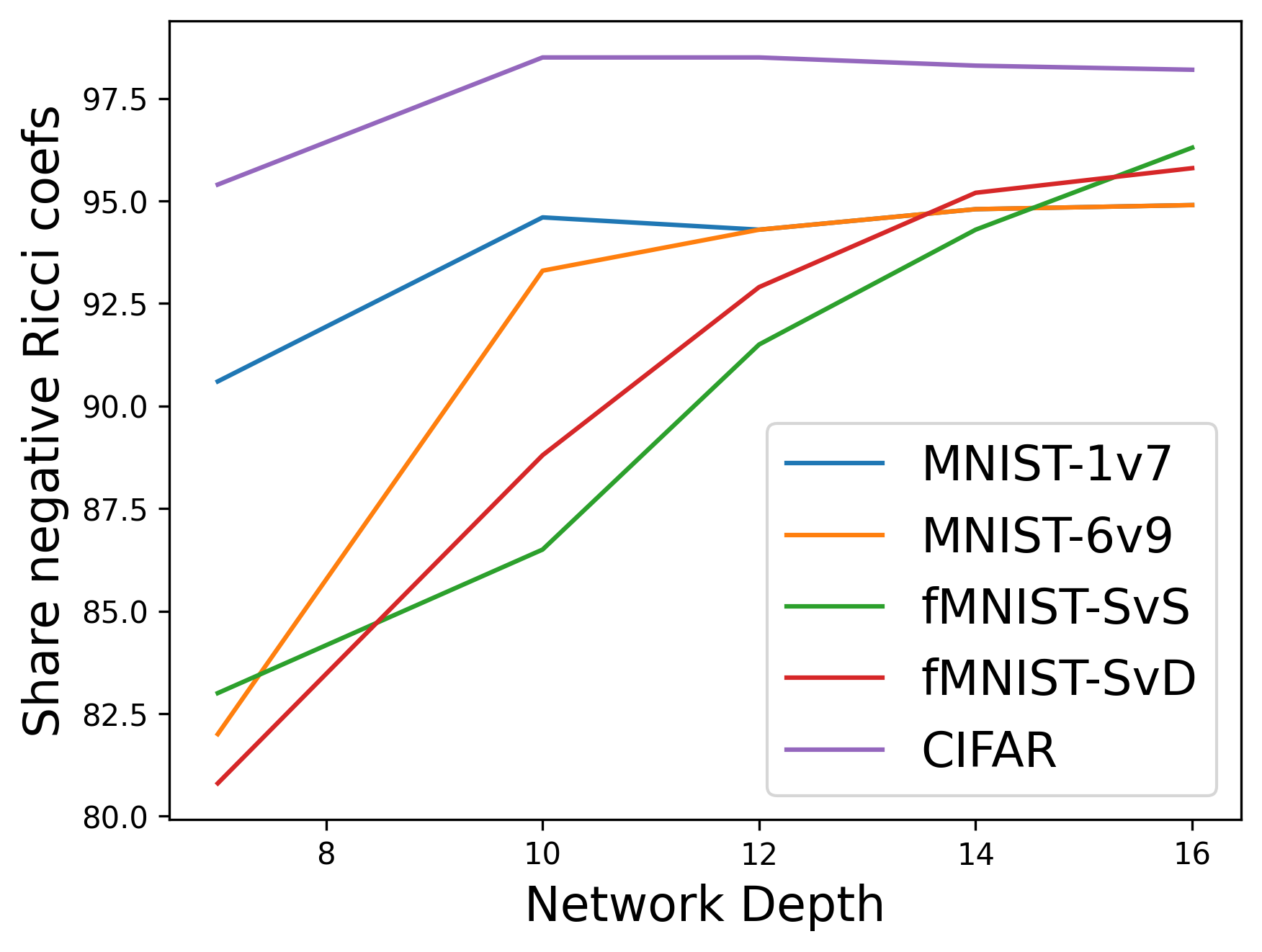}
  \caption{Proportion of vertices with negative local Ricci evolution coefficient for networks of varying depth.}
  \label{fig:double_descent}
\end{wrapfigure}

Several explanations have been proposed for this behavior. One line of reasoning suggests that enlarging the function class increases the number of interpolating solutions, thereby making it more likely to find functions that not only fit the data but also exhibit higher smoothness and regularity. Such simpler solutions are favored by an implicit form of Occam’s razor, indicating that overparameterization can promote generalization by biasing learning toward these low-complexity explanations \citep{belkin2019reconciling}. 

A promising direction for future work is to investigate the double descent phenomenon through the lens of local Ricci evolution coefficients. In the overparameterized regime, double descent suggests that further enlarging the network should lead to improved generalization. Our experiments show that increasing network size—either by adding depth at fixed width or by expanding width at fixed depth—systematically increases the proportion of vertices with negative Ricci coefficients.  Figure~\ref{fig:double_descent} illustrates these findings on real-world datasets using neural networks with a fixed width of 50 neurons per layer while varying the depth. This observation indicates that larger models exhibit curvature-driven dynamics on a more global scale, potentially enabling them to capture the underlying geometry of the problem more effectively. Since models that better align with data geometry are expected to generalize better, this perspective highlights a potential interplay between capacity growth and geometric representation, offering a novel geometric perspective on the double descent phenomenon.

\subsection{Details on experimental setup}
\label{appendix:experimental_setup}

All experiments were implemented in Python. Neural networks were built using PyTorch (v2.7.1). Default initialization schemes were used for the initial network weights. Networks were trained with binary cross-entropy loss and optimized using the standard Adam optimizer \citep{kinga2015method} with a learning rate of $0.001$. To solve the optimal transport problems required for computing Ollivier–Ricci curvature, we relied on the POT Python Optimal Transport library (v0.9.5). For constructing $k$-nearest neighbor graphs we used scikit-learn (v1.7.1), and for computing classical community strength measures such as modularity we employed NetworkX (v3.5). All figures in the main text were generated using Matplotlib (v3.10.5).

Our experiments were conducted on a local server with the specifications presented in the following table.
\begin{table}[ht]
\caption{Hardware specifications.}
\label{hardware-table}
\begin{center}
\begin{tabular}{ll}
\multicolumn{1}{c}{\bf Components}  &\multicolumn{1}{c}{\bf Specifications}
\\ \hline \\
ARCHITECTURE&X86\_64\\
OS&Rocky Linux 8.10 (Green Obsidian)\\
CPU&Intel Xeon Platinum 8480CL 56-Core (×2)\\
GPU&NVIDIA H200 Tensor Core\\
RAM&40GB\\
\end{tabular}
\end{center}
\end{table}

We evaluate our approach on both synthetic and real-world datasets.
The synthetic datasets, presented in Figure~\ref{fig:synthetic_datasets}, are designed to exhibit different degrees of geometric and topological complexity, providing controlled settings to study curvature dynamics. For real-world data, we consider three benchmarks. MNIST~\citep{lecun1998mnist} consists of $28 \times 28$ grayscale images of handwritten digits (0–9). We focus on visually similar digit pairs, i.e.,  1 vs. 7 (MNIST-1v7) and 6 vs. 9 (MNIST-6v9), to test the sensitivity of our approach to subtle shape differences. On Fashion-MNIST~\citep{xiao2017fashion}, which contains grayscale images of clothing items, we consider sneakers vs. sandals (FMNIST-SvS) and shirts vs. dresses (FMNIST-SvD) as representative examples of fine-grained visual distinctions. Finally, on CIFAR-10~\citep{krizhevsky2009learning}, a dataset of color natural images across ten object categories, we study cars vs. planes (CIFAR) as an example of two closely related classes. Figure~\ref{fig:real_world_datasets} illustrates representative samples from the real-world datasets.

Across all our experiments, we train the networks to achieve training accuracy above 99\%, ensuring that our experiments evaluate meaningful learned feature representations.

\begin{figure}[ht]
\begin{center}
    \includegraphics[scale=0.3]{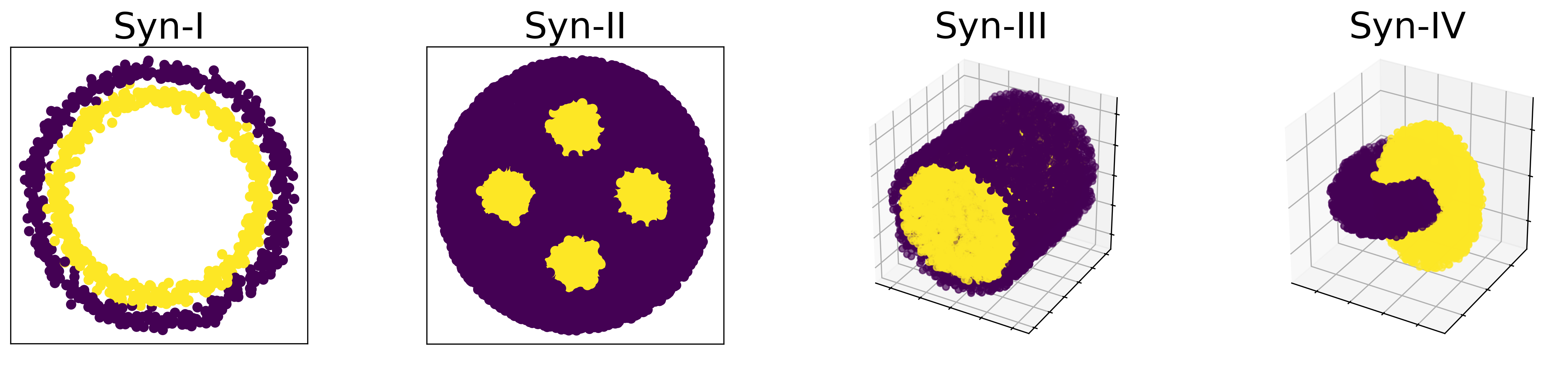}
\end{center}
\caption{The synthetic datasets.}
\label{fig:synthetic_datasets}
\end{figure}

\begin{figure}[ht]
\begin{center}
    \includegraphics[scale=0.4]{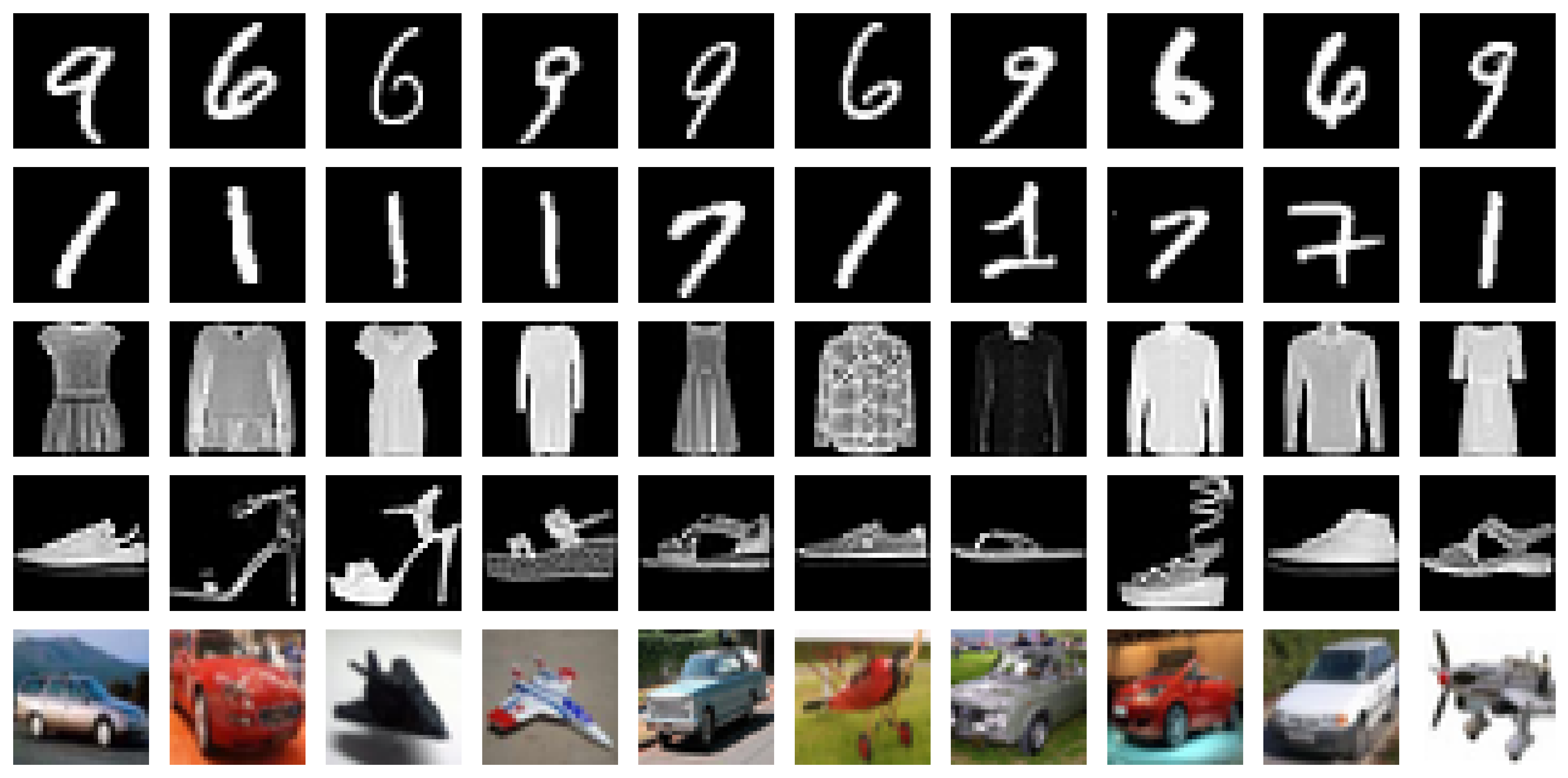}
\end{center}
\caption{The real-world datasets.}
\label{fig:real_world_datasets}
\end{figure}

\subsubsection{Hyperparameters}\label{appendix:Neighborhood_Scale}

The computation of local Ricci evolution coefficients requires constructing $k$-nearest neighbor graphs to approximate the geometry of the underlying manifold. The parameter $k$, which determines the number of neighbors each point connects to and thus controls the local scale of connectivity, plays a central role. Small values of $k$ capture fine-grained geometric structure but increase sensitivity to noise and may disconnect the graph. Larger values emphasize more global structure, at the cost of oversmoothing important local variations and raising the computational cost of Ollivier–Ricci curvature, which scales cubically with the vertex degree. It is therefore not a priori clear how to choose $k$, as it mediates a fundamental trade-off between locality, robustness, and efficiency.

To investigate this trade-off, we conduct experiments across a range of neighborhood sizes. Specifically, we vary $k$ from 1\% to 15\% of the total size of the point cloud $X$, and present the results in Figure~\ref{fig:role_of_k}. We find that for small neighborhood sizes ($k$ between 1\% and 5\%), the local Ricci evolution coefficients remain relatively stable or even decrease. As $k$ increases further, the coefficients tend to rise, reflecting a weaker correlation between local Ricci curvature and the expansion or contraction of this region.

\begin{wrapfigure}{r}{0.3\textwidth}
\centering
\includegraphics[scale=0.25]{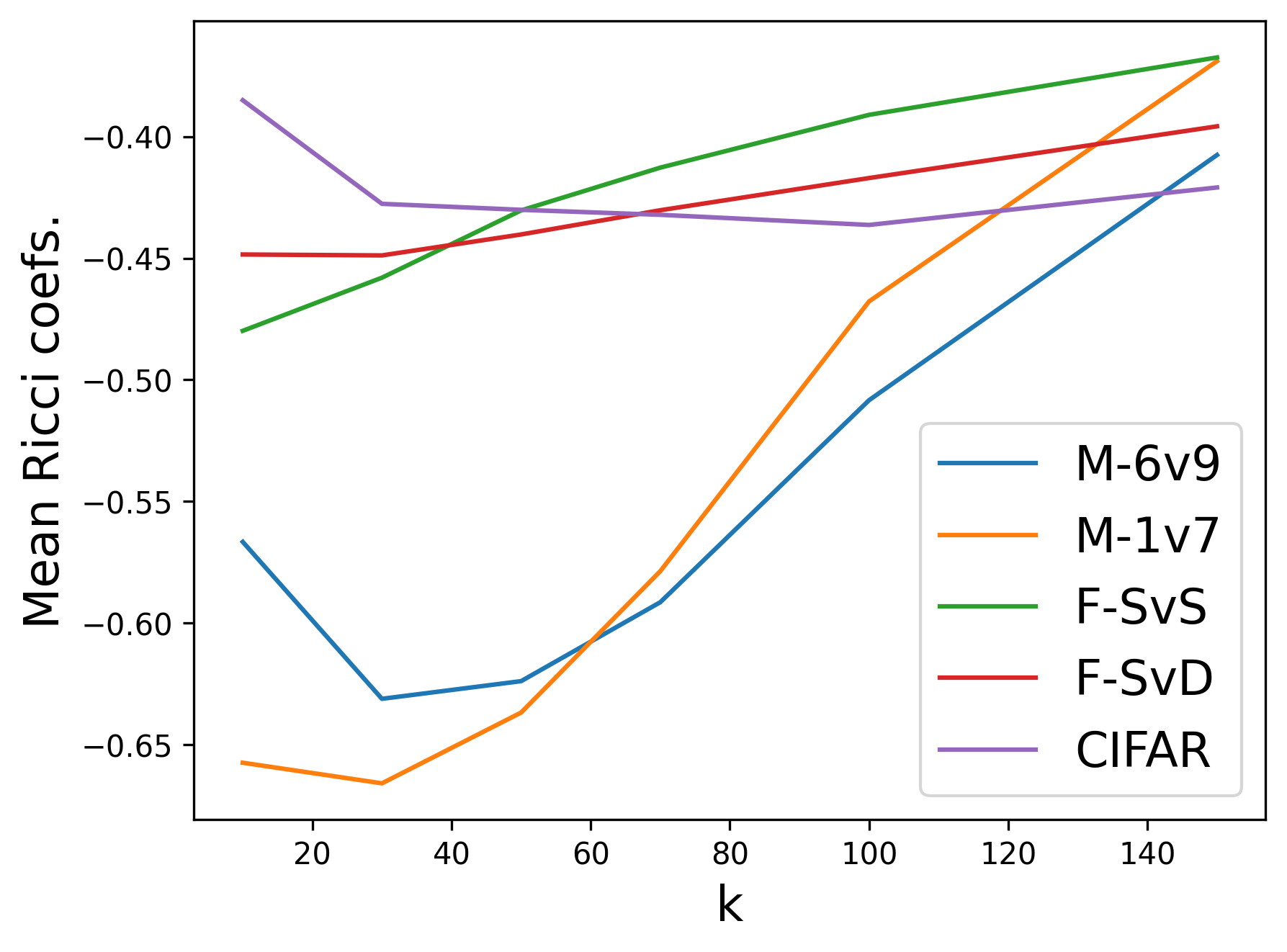}
\caption{Mean local Ricci evolution coefficients for different neighborhood sizes $k$ on real datasets. Reported values are averaged over 50 independently trained networks.}
\label{fig:role_of_k}
\end{wrapfigure}

This behavior is expected, since we are approximating local geometric properties of the manifold using $k$-nearest neighbor graphs. When the neighborhood scale becomes too large, the one-hop neighborhoods of these graphs no longer correspond to genuinely local regions of the manifold. Consequently, we expect a weaker correlation between the two quantities, as they cease to reflect the local nature of the Ricci flow. 

To balance these effects, we fix $k=5\%$ of the total size of the point cloud in the experiments reported in the main text. We additionally repeated the same experiments with $k=3\%$ and $k=7\%$, and observed quantitatively similar outcomes, showing that our findings are robust with respect to the precise choice of neighborhood size.

\subsubsection{Licenses}

We summarize the licenses of all code and datasets used in our experiments in Table~\ref{license-table}.

\begin{table}[ht]
\caption{Licenses of code and datasets.}
\label{license-table}
\begin{center}
\begin{tabular}{ll}
\multicolumn{1}{c}{\bf Model/Dataset}  &\multicolumn{1}{c}{\bf License}
\\ \hline \\
MNIST~\citep{lecun1998mnist}&CC BY-SA 3.0\\
Fashion-MNIST~\citep{xiao2017fashion}&MIT\\
CIFAR-10~\citep{krizhevsky2009learning}&MIT\\
PyTorch~\citep{paszke2019pytorch}&3-clause BSD\\
Scikit-learn~\citep{scikit-learn}&3-clause BSD\\
POT (Python Optimal Transport)~\citep{flamary2021pot}&3-clause BSD\\
NetworkX~\citep{hagberg2008exploring}&3-clause BSD\\
SciPy~\citep{2020SciPy-NMeth}&3-clause BSD\\
\end{tabular}
\end{center}
\end{table}

\subsection{LLM usage disclosure}

We used an LLM during paper writing to improve grammar and wording. 

\end{document}